\documentclass[a4paper,12pt]{article}


\usepackage{lipsum}
\usepackage{amsfonts}
\usepackage{graphicx}
\usepackage{epstopdf}
\usepackage{algorithmic}
\usepackage{relsize}
\usepackage{tikz}
\usepackage{mathtools}

\usepackage{tabularx}
\usepackage{booktabs}

\usetikzlibrary{positioning}
\ifpdf
  \DeclareGraphicsExtensions{.eps,.pdf,.png,.jpg}
\else
  \DeclareGraphicsExtensions{.eps}
\fi

\usepackage{enumitem}
\usepackage{tikz}
\usepackage{tkz-euclide}
\setlist[itemize]{leftmargin=.5in}


\usepackage{amsmath,amsthm}
\usepackage{amssymb}
\usepackage{hyperref}
\usepackage{algorithm}
\usepackage{geometry}
\usepackage{esint}
\usepackage{array}
\usepackage[linewidth=2pt]{mdframed}

\setlength\parindent{0pt}
\newmdenv[
topline=false,
bottomline=false,
rightline=false,
skipabove=\topsep,
skipbelow=\topsep,
linewidth=4
]{siderules}
\usepackage{hyperref}
\usepackage{xcolor}
\usepackage[most]{tcolorbox}
\usepackage[nottoc]{tocbibind}
\newcommand{\hookdoubleheadrightarrow}{%
	\hookrightarrow\mathrel{\mspace{-15mu}}\rightarrow
}

\newcommand{\supp}{\text{supp}}

\newtheorem{theorem}{Theorem}
\newtheorem{proposition}[theorem]{Proposition}
\newtheorem{remark}[theorem]{Remark}
\newtheorem{definition}[theorem]{Definition}
\newtheorem{lemma}[theorem]{Lemma}
\newtheorem{corollary}[theorem]{Corollary}
\newtheorem{example}[theorem]{Example}
\newtheorem{assumption}[theorem]{Assumption}


\newcommand*\dx{\mathop{}\!\mathrm{d}}

\DeclareMathOperator*{\argmin}{arg\,min}


\title{Physically Consistent Model Learning\\ for Reaction-Diffusion Systems}
\author{Martin Holler \thanks{IDea\_Lab - The Interdisciplinary Digital Lab at the University of Graz, University of Graz, Austria. 
		\{\href{mailto:martin.holler@uni-graz.at}{martin.holler@uni-graz.at}, \href{mailto:erion.morina@uni-graz.at}{erion.morina@uni-graz.at}\}
	} \and Erion Morina \footnotemark[1]}

\usepackage{amsopn}

%
\ifpdf
\hypersetup{
	pdftitle={Physically Consistent Model Learning for Reaction-Diffusion Systems},
	pdfauthor={M. Holler and E. Morina}
}
\fi

\begin{document}	
	\maketitle
	\begin{abstract}
	This paper addresses the problem of learning reaction-diffusion (RD) systems from data while ensuring physical consistency and well-posedness of the learned models. Building on a regularization-based framework for structured model learning, we focus on learning parameterized reaction terms and investigate how to incorporate key physical properties, such as mass conservation and quasipositivity, directly into the learning process.
Our main contributions are twofold: First, we propose techniques to systematically modify a given class of parameterized reaction terms such that the resulting terms
inherently satisfy mass conservation and quasipositivity, ensuring that the learned RD systems preserve non-negativity and adhere to physical principles. These modifications also guarantee well-posedness of the resulting PDEs under additional regularity and growth conditions. Second, we extend existing theoretical results on regularization-based model learning to RD systems using these physically consistent reaction terms. Specifically, we prove that solutions to the learning problem converge to a unique, regularization-minimizing solution 
of a limit system
even when conservation laws and quasipositivity are enforced. In addition, we provide approximation results for quasipositive functions, essential for constructing physically consistent parameterizations. These results advance the development of interpretable and reliable data-driven models for RD systems that align with fundamental physical laws.
	\end{abstract}
	\begin{keywords}
		Reaction-Diffusion systems, model learning, conservation law, 
		
		inverse problems, quasipositive function\\
	\end{keywords}
	\begin{MSCcodes}
		 35R30, 93B30, 65M32, 35K57, 41A99
	\end{MSCcodes}
	\newpage
	\section{Introduction}
The rapid advancements and successes in scientific machine learning have initiated a paradigm shift in how models based on partial differential equations (PDEs) are developed, moving from manually constructed  models towards learning models from data. As result, a variety of data-driven techniques for discovering physical laws have been proposed to accurately infer the dynamics of the underlying PDEs (see e.g. \cite{blechschmidt21,Boulle2024,kutz23, deryck24,Kovachki2024, tanyu23} and the references therein for a comprehensive overview). A key feature of these methods is that, despite their data-driven flexibility, they leverage domain knowledge by (partially) incorporating physics-based PDEs into the proposed models. By embedding physical knowledge in the learning process, interpretable results are provided and reliance on large datasets is reduced. One way to achieve this is to enforce physical symmetries or conservation laws (e.g., of mass, energy, or momentum), leading to realistic predictions. Related literature on this topic is discussed below in detail.

\paragraph{Modeling with reaction-diffusion systems.}	One important class of PDEs, which is the focus in this work, is that of Reaction-Diffusion (RD) systems
	\begin{align}
		\label{pdemodel}
		\partial_t u_n(t,x) = d_n \Delta u_n(t,x)+f_n(u(t,x)), \quad (t,x)\in ]0,T[\times\Omega\tag{\text{RD}}
	\end{align}
	for $n=1,\dots,N$, on a domain $\Omega\subset\mathbb{R}^d$ and $T>0$. Here $u=(u_n)_{n=1}^N$, the state variables, describe \textit{species} which change over time due to the following two central processes dominating in \eqref{pdemodel}: i) the reaction model $f=(f_n)_{n=1}^N$ describing interactions between the species such as production, consumption and transformation, and ii) the diffusion coefficients $(d_n)_{n=1}^N$ causing spatial spreading of the species due to \textit{Fick's law}. Such systems play an important role in the natural sciences in modeling chemical reactions with present diffusion, the spread of infectious diseases, pattern formation in animal fur, tumor growth, and population dynamics, to name just a few examples (see e.g. \cite{Britton1986, Murray2002, Murray2003, Perthame2015} for a broader overview). Examples of RD models in this context include the Fisher-KPP equation \cite{Hamel2014}, the Gray-Scott model \cite{Pearson1993}, the Turing model \cite{Turing1990}, the SIR model \cite{Brauer2008}, the Lotka-Volterra equations \cite{Cantrell2004}, the Allen-Cahn equation \cite{Allen1979}, and many more.
	
\paragraph{Inverse problems for RD systems.}	The above models, each corresponding to a fixed reaction term $f$, contain parameters (e.g. the diffusion coefficients) that are generally unknown. Reasons include simplified assumptions on the underlying system, sensitivity to external influencing factors, and scale dependency. 
	In the context of parameter identification, the works \cite{Cristofol2012, Friedman1992} establish uniqueness and identifiability results for certain coefficients of one-dimensional reaction–diffusion systems. A brief selection of specific application cases is given below. For heat conduction laws in particular, \cite{Cannon1980} studies unique identifiability based on overspecified boundary data, while \cite{Egger15} analyzes the case of a single additional boundary measurement; corresponding stability estimates are derived in \cite{Egger15, Roesch96}. The references \cite{CampilloFunollet2018, Garvie2010,Schnrr2023} address the recovery of parameters in reaction–diffusion systems that give rise to observed Turing patterns. Reaction–diffusion models have also been used in tumor-growth applications, where parameter identification is performed to estimate growth dynamics \cite{Gholami2015,Hogea2007}.
	
	All these works assume a specific structure of the reaction–diffusion systems under consideration, yet modeling with such systems often faces the difficulty that even the form of the reaction term $f$ is uncertain - particularly when the underlying processes are highly complex or not well understood. In addition, only indirectly measured data of the state $u = (u_n)_{n=1}^N$ is available for this purpose. A variety of related inverse problems have therefore been studied in the literature: For instance, \cite{DuChateau1985, Kaltenbacher2020} address the recovery of state-dependent source terms in reaction–diffusion systems from overposed data, while \cite{Kaltenbacher2025} extends this to the joint identification of a state-dependent source term and a multiplicative spatially dependent coefficient. The simultaneous reconstruction of conductivity and a nonlinear reaction term is considered in \cite{Kaltenbacher2020b}, and \cite{Kaltenbacher2021b} focuses on identifying a nonlinear diffusion term. Moreover, inverse problems for semilinear reaction–diffusion equations have been investigated both under full boundary measurements \cite{Isakov1993, Kian2023} and, more recently, under partial boundary measurements \cite{Feizmohammadi2024}. Some of the references above also present classical reconstruction methods for the unknown quantities - see in addition also \cite{Christof2024, Hinze2019, Kian2023b, NgocNguyen2025}. In contrast to classical inverse problems literature, the overview articles cited at the beginning of the introduction provide a broader perspective on the discovery of physical laws using machine-learning–based approaches. Complementing these strands of work, recent studies have explored machine-learning–based methods specifically tailored to RD systems \cite{Li2020,Rao23}.

\paragraph{Unique identification of RD systems from data.}	In view of learning RD systems from data, a crucial property of any proposed approach is
the extent to which the learned reaction term is uniquely determined within a specified class of candidate functions (without additional assumptions). From a classical perspective, achieving such an identifiability result is generally challenging if the class of underlying reaction models is too broad (e.g., the space of all continuous reaction models). Indeed, in these cases, uniqueness is rarely guaranteed. This issue is directly addressed in the works \cite{kutyniok23,scholl_icassp} (with an extension to the noisy setting discussed in \cite{Kut24}), which study the symbolic recovery of differential equations. These efforts focus on specific classes of reaction models - such as linear and algebraic ones - and propose a robust classification approach based on Singular Value Decomposition (SVD) to ensure identifiability.
	
	In a more general context, to tackle the issue of uniqueness, the work \cite{morina_holler/online}, building on \cite{AHN23}, considers solutions to \eqref{pdemodel} minimizing a regularization functional, which can be interpreted as an instance to incorporate prior information on possible reaction terms to resolve unique identifiability.
In \cite{morina_holler/online} it is shown that these solutions can be recovered by practically implementable learning frameworks in the limit of full, noiseless measurements where the reaction terms are parameterized by user-defined functions $f_\theta$ (such as neural networks). This is possible essentially under the following conditions: i) suitable regularization functionals, %
	ii) an \textit{approximation capacity condition} on the class of parameterized reaction terms  (see \cite[Assumption 5, iv)]{morina_holler/online}), which is a universal approximation type property, and iii) the right choice of the regularization parameters depending on the noise level of the measured data and condition ii).
	
	A promising strategy to address the uniqueness challenges in data-driven learning of models for physical phenomena is to constrain the space of learnable models to those that are physically realistic and consistent with observations. Conservation laws inherent to PDE systems offer a natural framework for guiding this restriction.
	
	\paragraph{Conservation law guided model learning.} A substantial body of recent research investigates how to incorporate problem-specific conservation laws and symmetries into machine-learning frameworks for discovering underlying physical laws, providing a foundation for models that are both physically accurate and data-efficient. In \cite{jagtap20}, flow continuity is imposed by adding the corresponding conservation condition as a soft constraint through a regularization term in the loss function; similar strategies are employed in \cite{Li2024, Wu2022}.
	However, such approaches only enforce the constraint approximately and therefore do not guarantee exact satisfaction of the conservation laws. 
	
	In contrast, \cite{Hansen2024} proposes a fundamentally different methodology in which the conservation law is integrated in its integral form, and predictive updates are carried out using the full governing PDE. Other works that achieve exact conservation through corrective mechanisms include \cite{CardosoBihlo2025, Geng2024}. We also highlight \cite{Schoenlieb2025}, which introduces an adaptive correction procedure for Fourier neural operators to dynamically enforce conservation laws.
	
	An alternative line of research modifies the neural network architecture itself rather than augmenting the loss with soft constraints. For example, \cite{Sturm21} imposes the flux-continuity equation as a hard constraint directly in the final layer. Another hard-constraint strategy involves projecting the network output onto a prescribed solution space, as demonstrated in \cite{Negiar23}. The framework in \cite{Mueller23} embeds conservation laws directly into the architecture by encoding symmetries via \textit{Noether’s theorem}. Additional work on incorporating conservation principles through architectural design is presented in \cite{Liu23, Yu2023,Richterpowell2022}. In the context of energy-preserving methods, we refer to Hamiltonian neural networks \cite{Greydanus19} and the closely related Lagrangian neural networks \cite{Cranmer2020}. Finally, \cite{PhysRevLett.126.098302, Beucler2019b} enforce conservation laws by constraining either the loss function or the model architecture, with applications in climate modeling.
	
	\paragraph{Focus of this work.} Problems modeled by RD systems frequently exhibit intrinsic conservation laws and symmetries that hold significant physical relevance. Incorporating these principles is crucial for ensuring that learned reaction terms not only align with physical principles but are also reliable and interpretable. While existing literature (as discussed above) has mainly focused on integrating conservation laws at the level of implemented corrections, via suitably tailored loss functions or architectural modifications, the emphasis has primarily been on ensuring that the solutions satisfy conservation principles. However, an essential aspect that remains largely underexplored is the extent to which such approaches even guarantee well-posedness of the solution operator for the resulting PDE system. 
	
	This work aims to bridge this gap for RD systems by developing an analysis-driven framework for embedding key conservation laws, which are not typically reflected in standard parametrization approaches for reaction terms $f_\theta$ (such as neural networks). Our approach ensures not only that the conservation principles are respected at the model level (as the references above) but also that the resulting RD system remains well-posed while retaining the identifiability properties discussed in \cite{morina_holler/online}. To build this framework, we first determine which key conditions need to be embedded in the parameterized reaction terms $f_\theta$. For that, we take guidance from existing analyses of RD systems in the literature. In this context, the survey \cite{Pierre2010} offers a comprehensive overview of the key methodologies required to establish \textit{well-posedness} - specifically, the global existence of solutions for RD systems - and demonstrates that solutions exist primarily under the following conditions imposed on the reaction terms $f = (f_n)_{n=1}^N$ in \eqref{pdemodel}: i) sufficient regularity, namely local Lipschitz continuity, ii) a growth condition, iii) a mass control condition, and iv) \textit{quasipositivity} (see Definition \ref{def:quasipositive}). While iii) ensures that the total mass of the system remains bounded (or, in a stricter formulation of the condition, is neither dissipated nor generated) and iv) guarantees that non-negativity of initial conditions is preserved for the solution $u$ of \eqref{pdemodel}, and thus also have physical significance, conditions i) and ii) are purely technical in nature. Relaxing or adapting these conditions is an area of ongoing research, yet they remain pivotal for guaranteeing well-posedness. For global-in-time existence results, we point the reader to \cite[Theorem 1.1]{Fellner2020} for classical solutions and \cite[Theorem 1]{Suzuki2017} for weak solutions. Additionally, further readings on global existence include \cite{Souplet2018}, which investigates well-posedness under mass dissipation and quadratic growth conditions; \cite{Laamri2020}, focusing on systems under initial data with low regularity; \cite{Fischer2015}, exploring entropy-dissipating RD systems; \cite{Fellner2023,Laamri2017}, addressing systems with nonlinear diffusion; and \cite{Goudon2010}, which provides a general regularity analysis of RD systems.
	
	Given the importance of the conditions i)-iv) above for ensuring well-posedness of the underlying RD system, we propose a framework that directly incorporates these conditions within the parameterized reaction $f_\theta$. Embedding these conditions into $f_\theta$ is not only essential for maintaining physical consistency in the learned models, but it also ensures that for some fixed $f_\theta$ and $D$ the resulting RD system $\partial_t u = D\Delta u +f_\theta(u)$ is \textit{physically consistent}, meaning that the system is well-posed.
	
	This analysis-driven perspective distinguishes our work from existing approaches in conservation law guided model learning by addressing key physical constraints directly at the level of reaction term parameterization, ensuring well-posedness, physical consistency, and interpretability of the resulting learned RD systems.

	\paragraph{Contributions.} In this work, we address the challenge of incorporating physical constraints, such as mass conservation conditions and quasipositivity, into user-defined classes of parameterized reaction terms for model learning of RD systems. Specifically, we propose modification techniques to ensure that these properties along with sufficient regularity and growth conditions of the parameterized reaction term, are inherently embedded in the parameterized reaction terms $f_\theta$. This guarantees that the resulting RD systems are well-posed and physically consistent. Building on these modifications, we extend the model learning results of \cite{morina_holler/online} to RD systems with parameterized reaction terms that satisfy the conditions i)-iv) discussed above. In addition to these main contributions, we provide approximation results for quasipositive functions, which are of independent interest. Our work contributes to the broader field of conservation law guided model learning by ensuring that learned RD models respect fundamental physical principles, thereby enhancing their interpretability.
	
	\paragraph{Scope of the Paper.} Section \ref{sec:conformal_classes} addresses the conditions on the reaction terms that are necessary for well-posedness of RD systems, and the incorporation of these conditions into user-defined, learnable classes of functions. The proposed model learning approach for RD systems is formulated in Section \ref{sec:model_learning}. The concrete framework is presented in Subsection \ref{subsec:wellposedness} and the convergence result in model learning is discussed in Subsection \ref{subsec:consistency}. In Appendix \ref{sec:approx_quaspos} approximation results for quasipositive functions are investigated. Existence results for RD systems are recalled in Appendix \ref{app:ex_rdsys} and relevant results on operators in Sobolev spaces are summarized in Appendix \ref{app:op_sob}. In Appendix \ref{app:assum_reactions} we verify the assumptions for the proposed modified reaction terms necessary for the convergence result to apply.

	\section{Physically consistent classes}
	\label{sec:conformal_classes}
	The main focus of this work is to address the problem of learning a reaction term $f=(f_n)_{n=1}^N$ (together with the state, initial condition and diffusion coefficient) satisfying the reaction-diffusion (RD) system
	\begin{equation}
		\label{gen:rdsystem}
		\begin{aligned}
			\partial_tu_n & = d_n\Delta u_n+f_n(u) \\
			u_n(0) &= u_{0,n},
		\end{aligned}
	\end{equation}
for $n=1,\ldots,N$.
	Here, it is important to ensure \textit{physical consistency} of the learning approach in the sense that for a learned reaction term $f$, the resulting system in \eqref{gen:rdsystem} is well-posed, i.e., attains a solution $u$ of suitable regularity. For the system to be well-posed, the reaction term must satisfy certain conditions, which we will discuss in detail in this section and most of which can be interpreted physically. To address this, we introduce a framework for learning the reaction term while preserving the conditions required for well-posedness. This framework is built around a user-defined, learnable class of functions $\mathcal{F}\subset\left\{f:\mathbb{R}^N\to\mathbb{R}^N\right\}$ and provides a modified, learnable class $\overline{\mathcal{F}}$, derived from $\mathcal{F}$, whose instances satisfy the critical conditions addressed above. In view of well-posedness results for RD systems we refer to \cite[Theorem 1.1]{Fellner2020} for the existence of classical solutions under Neumann boundary conditions and \cite[Theorem 1]{Suzuki2017} for the existence of weak solutions under smooth Dirichlet boundary conditions, summarized in Appendix \ref{app:ex_rdsys} for the sake of completeness. These works show existence of a unique solution to \eqref{gen:rdsystem}
	for bounded and non-negative initial data $u_{0,n}$ (which additionally needs to be integrable in case of the classical result in \cite{Fellner2020}), under the following conditions on $\mathcal{F}$ with $\Vert\cdot\Vert$ denoting the Euclidean norm in $\mathbb{R}^N$:
	\paragraph{Local Lipschitz continuity.} The class $\mathcal{F}$ satisfies condition \eqref{cond:lipschitz} if for each $f\in \mathcal{F}$ and any $M>0$ there exists $L_M>0$ such that
	\begin{equation}
		\label{cond:lipschitz}
		\Vert f(u)-f(v)\Vert \leq L_M\Vert u-v\Vert \qquad \text{for all } ~ u,v\in \mathbb{R}^N ~ \text{with} ~ \Vert u\Vert, \Vert v\Vert <M.\tag{$\mathcal{L}$}
	\end{equation}
	This condition is required for deriving local existence of solutions to \eqref{gen:rdsystem} for bounded initial data (see results in \cite[Part I]{Rothe1984}, \cite[(2.1) Theorem]{Amann1985} or the general work \cite{Pazy1983}).
	\paragraph{Quasipositivity.} The class $\mathcal{F}$ satisfies condition \eqref{cond:quasipos} if for each $f\in\mathcal{F}$ it holds true that for $1\leq n\leq N$
	\begin{equation}
		\label{cond:quasipos}
		f_n(u_1,\dots, u_{n-1}, 0, u_{n+1}, \dots, u_N)\geq 0  \qquad \text{for all } ~ u_i\geq 0. \tag{$\mathcal{Q}$}
	\end{equation}
	This condition ensures that non-negativity of the initial data of \eqref{gen:rdsystem} is preserved for a solution as long as it exists. We refer to Definition \ref{def:quasipositive} in Appendix \ref{sec:approx_quaspos} for a more general formulation of quasipositivity.
	\paragraph{Mass control.} The class $\mathcal{F}$  meets condition \eqref{cond:mass} if there exist $K_0\geq 0$, $K_1\in\mathbb{R}$ and $(c_n)_{n=1}^N\subset~]0,\infty[$ such that for each $f\in\mathcal{F}$ it holds true that
	\begin{align}
		\label{cond:mass}\tag{$\mathcal{M}$}
		\sum_{n=1}^N c_nf_n(u)\leq K_0+K_1\sum_{n=1}^N u_n\qquad \text{for all} ~ u_n\geq 0.
	\end{align}
	This condition (together with the growth condition discussed next) guarantees that the total mass $\sum_{n=1}^Nu_n$ of system \eqref{gen:rdsystem} remains bounded and solutions do not \textit{blow up} in finite time (see \cite{Pierre2010}). In the special case of $c_n=1$ for $1\leq n\leq N$ and $K_0=K_1=0$ condition \eqref{cond:mass} ensures mass dissipation and conservation if, additionally, equality holds in \eqref{cond:mass}.
	
	\paragraph{Growth condition.} The class $\mathcal{F}$ satisfies the (quadratic growth) condition \eqref{cond:growth} if for each $f\in \mathcal{F}$ there exists $K>0$ such that
	\begin{align}
		\label{cond:growth}\tag{$\mathcal{G}$}
		\Vert f(u)\Vert \leq K(1+\Vert u\Vert^2) \qquad \text{for all} ~ u\in\mathbb{R}^N.
	\end{align}
	\vspace*{0.1cm}
	
	In practice, a standard class $\mathcal{F}$ of parameterized functions (such as neural networks) typically does not satisfy these conditions. To construct a physically consistent class $\overline{\mathcal{F}}$ that meets the necessary conditions - namely, \eqref{cond:lipschitz}, \eqref{cond:quasipos}, \eqref{cond:mass}, and \eqref{cond:growth} - we first introduce the concept of a smooth transition function.
	\begin{definition}[Transition function]
		\label{def:transition}
		We call $\chi\in\mathcal{C}^\infty(\mathbb{R},\mathbb{R})$ a transition function if there exist $0<\delta<\epsilon$ such that $\chi(x)=1$ for $x\leq\epsilon-\delta$, $\chi(x)=0$ for $x\geq\epsilon+\delta$ and $\frac{\dx}{\dx x}\chi(x)< 0$ for $\epsilon-\delta<x<\epsilon+\delta$.
	\end{definition}
	We refer to Remark \ref{rem:transition_example} for an example of a transition function. The following general result outlines a method for constructing a physically consistent class $\overline{\mathcal{F}}$ from a given class $\mathcal{F}$, assuming that the elements of $\mathcal{F}$ are Lipschitz continuous. Throughout, we write $P_+:\mathbb{R}\to\mathbb{R}$ for the positive-part function $P_+(x)=\max(x,0)$, and analogously, $P_-(x)=\min(x,0)$ for the negative-part function.
	\begin{lemma}
		\label{lem:auxiliary_result}
		Let $f:\mathbb{R}^N\to \mathbb{R}^N$ be Lipschitz continuous and $\chi$ a transition function. Denote $\chi_n(u)=\chi(u_n)$ for $u\in\mathbb{R}^N$. Then the function $\bar{f} = (\bar f_1,\ldots,\bar f_N)$ defined by
		\begin{align}
			\label{modification_f}
			\bar{f}_n = (P_+\circ f_n-f_n)\cdot \chi_n+f_n\quad \text{for} ~ n=1,\ldots, N,
		\end{align}
		fulfills the conditions \eqref{cond:lipschitz}, \eqref{cond:quasipos}, \eqref{cond:mass} and \eqref{cond:growth} with suitable parameters.
	\end{lemma}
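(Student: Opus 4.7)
The plan rests on rewriting the modification as a convex combination. Since any transition function $\chi$ takes values in $[0,1]$ by its monotonicity and boundary conditions, \eqref{modification_f} can be recast as $\bar f_n = \chi_n \, P_+(f_n) + (1-\chi_n)\, f_n$. Two elementary pointwise bounds follow at once: $f_n \le \bar f_n \le P_+(f_n)$, and $|\bar f_n| \le |f_n|$, the latter by splitting into the cases $f_n \ge 0$ (where $\bar f_n = f_n$) and $f_n < 0$ (where $\bar f_n = (1-\chi_n) f_n \in [f_n,0]$). Moreover, because $0 < \delta < \epsilon$ in Definition \ref{def:transition} forces $\chi(0) = 1$, for any $u$ with $u_n = 0$ we obtain $\bar f_n(u) = P_+(f_n(u)) \ge 0$, which is precisely \eqref{cond:quasipos}.

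For \eqref{cond:lipschitz}, I would use that $P_+$ is $1$-Lipschitz, so $P_+\circ f_n - f_n$ is globally Lipschitz, while $\chi_n$ is globally Lipschitz with bounded image because $\chi\in\mathcal{C}^\infty$ is constant outside the compact interval $[\epsilon-\delta,\epsilon+\delta]$. The product of two globally Lipschitz functions is locally Lipschitz (since each is bounded on bounded sets, so the standard product estimate closes), giving \eqref{cond:lipschitz}. For \eqref{cond:growth}, global Lipschitz continuity of $f$ yields the linear bound $\|f(u)\| \le \|f(0)\| + L\|u\|$, and combining this with $|\bar f_n| \le |f_n|$ produces $\|\bar f(u)\| \le K(1+\|u\|) \le K(1+\|u\|^2)$, which is even stronger than \eqref{cond:growth}.

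For \eqref{cond:mass}, take any positive weights (for instance $c_n = 1$). Using $\bar f_n(u) \le P_+(f_n(u)) \le |f_n(u)|$, one has
\begin{equation*}
\sum_{n=1}^N c_n \bar f_n(u) \le \Big(\max_n c_n\Big)\sqrt{N}\,\|f(u)\| \le K_0 + \tilde K_1 \|u\|,
\end{equation*}
by the linear growth of $f$, and since $u_n \ge 0$ implies $\|u\| \le \sum_n u_n$, one obtains \eqref{cond:mass} with appropriate constants $K_0, K_1$. The only mild subtlety I anticipate is in \eqref{cond:lipschitz}: the factor $P_+\circ f_n - f_n$ is unbounded on $\mathbb{R}^N$, so $\bar f_n$ cannot be expected globally Lipschitz; however, \eqref{cond:lipschitz} only demands the local version, which is exactly what the product-of-Lipschitz argument delivers. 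The remaining conditions reduce to clean algebraic consequences of the two pointwise bounds together with the fact that $\chi_n$ depends only on the $n$-th coordinate.
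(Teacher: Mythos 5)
Your proposal is correct and takes essentially the same route as the paper: the convex-combination rewriting $\bar f_n = \chi_n\, P_+(f_n) + (1-\chi_n) f_n$ is the same identity the paper uses (there written as $\bar f_n = P_+\circ f_n + (1-\chi_n)(P_-\circ f_n)$), and the four conditions are verified by the same ingredients---$\chi(0)=1$ for \eqref{cond:quasipos}, a bounded-Lipschitz product estimate for \eqref{cond:lipschitz}, the bound $\bar f_n \le P_+(f_n)$ plus linear growth of $f$ for \eqref{cond:mass}, and a pointwise bound of $|\bar f_n|$ by (a multiple of) $|f_n|$ for \eqref{cond:growth}. The only cosmetic differences are your tighter bound $|\bar f_n|\le |f_n|$ in place of the paper's factor-two bound and trivial constant adjustments, so the substance is identical.
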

	\begin{proof}
		There exists $L>0$ such that for $1\leq n\leq N$
		$$\vert f_n(u)-f_n(v)\vert \leq L\Vert u-v\Vert$$ for $u,v\in\mathbb{R}^N$. Since $f_n(u)=(P_+\circ f_n)(u)+(P_-\circ f_n)(u)$ we derive
		\begin{align*}
			\vert \bar{f}_n(u)-\bar{f}_n(v)\vert&\leq \vert (P_-\circ f_n)(u)\chi_n(u)-(P_-\circ f_n)(v)\chi_n(v)\vert +\vert f_n(u)-f_n(v)\vert\\
			&\leq \vert (P_-\circ f_n)(u)\vert \vert \chi_n(u)-\chi_n(v)\vert +(\vert\chi_n(v)\vert+1) \vert f_n(u)-f_n(v)\vert\\
			&\leq \vert f_n(u)\vert \Vert\chi'\Vert_\infty\Vert u-v\Vert +2L\Vert u-v\Vert\\
			&\leq (\vert f_n(u)-f_n(0)\vert+\vert f_n(0)\vert) \Vert\chi'\Vert_\infty\Vert u-v\Vert +2L\Vert u-v\Vert\\
			&\leq [(L\Vert u\Vert+\vert f_n(0)\vert) \Vert\chi'\Vert_\infty+2L]\Vert u-v\Vert,
		\end{align*}
		proving local Lipschitz continuity in \eqref{cond:lipschitz}. Quasipositivity as in \eqref{cond:quasipos} follows since for fixed $n=1,\ldots, N$ and any $u\in \mathbb{R}^N$ with $u_n=0$ it holds true that
		\[
			\bar{f}_n(u) = (P_+(f_n(u))-f_n(u))\cdot \chi_n(u)+f_n(u)= P_+(f_n(u))\geq 0
		\]
		due to $\chi_n(u)=\chi(u_n)=\chi(0)=1$. To see \eqref{cond:mass} note that 
		\begin{multline}
			\label{estimation:f_mod}
			\bar{f}_n(u)=(P_+\circ f_n)(u)+(1-\chi_n(u))(P_-\circ f_n)(u)\leq (P_+\circ f_n)(u)\\
			\leq\vert (P_+\circ f_n)(u)-(P_+\circ f_n)(0)\vert + (P_+\circ f_n)(0)\leq L\Vert u\Vert +\beta_n
		\end{multline}
		with $\beta_n:= (P_+\circ f_n)(0)$. Thus, for any $(c_n)_{n=1}^N\subset~ ]0,\infty[$ it holds for $u\in [0,\infty[^N$
		\[
		\sum_{n=1}^N c_n\bar{f}_n(u) \leq \sum_{n=1}^n c_n\beta_n+L\sum_{n=1}^{N}c_n\Vert u\Vert\leq K_0+L\sqrt{N}\sum_{n=1}^{N}c_n \sum_{n=1}^{N}u_n = K_0+K_1\sum_{n=1}^N u_n
		\]
		 with $K_0= \sum_{n=1}^n c_n\beta_n$ and $K_1=L\sqrt{N}\sum_{n=1}^{N}c_n$, using $\Vert u\Vert \leq \sqrt{N}\sum_{n=1}^{N}\vert u_n\vert$ for $u\in \mathbb{R}^N$, which proves \eqref{cond:mass}. Finally, \eqref{cond:growth} follows with $K:=4\max(L, \vert f_n(0)\vert)$ from
		 \begin{equation*}
		\begin{multlined}[b]
			\vert \bar{f}_n(u)\vert = \vert (P_+\circ f_n)(u)+(1-\chi_n(u))(P_-\circ f_n)(u)\vert\\ \leq \vert (P_+\circ f_n)(u)\vert+\vert (P_-\circ f_n)(u)\vert\leq 2\vert f_n(u)\vert \leq 2(L\Vert u\Vert+\vert f_n(0)\vert).
		\end{multlined}
		\qedhere
		\end{equation*}
	\end{proof}
	\begin{remark} 
		In case $f$ is locally Lipschitz continuous, one can still show that condition \eqref{cond:lipschitz} follows for $\bar{f}$. Another consequence of the above proof is that condition \eqref{cond:growth} follows for $\bar{f}_n$ by a growth condition on $f_n$. Condition \eqref{cond:quasipos} holds for $\bar{f}_n$ also for locally Lipschitz continuous $f_n$. The mass control \eqref{cond:growth} for $\bar{f}_n$ is more delicate in case $f_n$ is locally Lipschitz continuous. In case \eqref{cond:mass} holds for $(P_+\circ f_n)_{n=1}^N$ instead of $(f_n)_{n=1}^N$ condition \eqref{cond:mass} can be easily proven to also hold for $(\bar{f}_n)_{n=1}^N$.
	\end{remark}
	Given a class $\mathcal{F}$ of Lipschitz continuous functions, the result in Lemma \ref{lem:auxiliary_result} suggests defining the physically consistent class $\overline{\mathcal{F}}$ by $\overline{\mathcal{F}}=\left\{\bar{f}: f\in \mathcal{F}\right\}$, where $\bar{f}=(\bar{f}_n)_{n=1}^N$ is constructed according to \eqref{modification_f}. When utilizing functions from $\overline{\mathcal{F}}$ to approximate an unknown reaction term within a model learning framework, it is essential that the modified class $\overline{\mathcal{F}}$ retains the required approximation properties of $\mathcal{F}$, e.g. the ability to approximate continuous functions on compact domains. The next lemma addresses this crucial aspect under abstract and general prerequisites, after which we will present an interpretable and explicit scenario that fulfills these conditions.
	\begin{lemma}
		\label{lem:abstract_approximation}
		Let $f:\mathbb{R}^N\to \mathbb{R}^N$ be Lipschitz continuous and $U\subseteq \mathbb{R}^N$ a compact subset. Furthermore, let $(f^m)_{m\in\mathbb{N}}$ be a sequence of Lipschitz continuous functions $f^m:\mathbb{R}^N\to \mathbb{R}^N$ for $m\in\mathbb{N}$ such that
		\[
			\lim_{m\to\infty}\Vert f-f^m\Vert_{L^\infty(U)}=0\quad \text{and} \quad \limsup_{m\to\infty}\Vert \nabla f^m\Vert_{L^\infty(U)}\leq \Vert \nabla f\Vert_{L^\infty(U)}.
		\]
		Choose for two monotone zero sequences $(\epsilon_m)_m$, $(\delta_m)_m$, with $0<\delta_m<\epsilon_m$ for $m\in\mathbb{N}$, transition functions $(\chi^m)_{m\in\mathbb{N}}$ satisfying Definition \ref{def:transition} with $\epsilon_m$ and $\delta_m$, for $m\in\mathbb{N}$, respectively. Suppose that for $\Gamma_\epsilon^n:=\left\{u\in U: \vert u_n\vert\leq \epsilon\right\}$ it holds true that
		\begin{align}
			\label{abstract_condition}
			\Vert P_-\circ f_n\Vert_{L^\infty(\Gamma_{\epsilon_m+\delta_m}^n)}+\Vert f-f^m\Vert_{L^\infty(U)}=o(\Vert \frac{\dx}{\dx x}\chi^m\Vert_{\mathcal{C}(\mathbb{R})}^{-1})\quad \text{as} ~ m\to\infty.
		\end{align}
		Then for $\bar{f}^m = (\bar f_1^m,\ldots,\bar f_N^m)$ being defined by
		\begin{align*}
			\bar{f}_n^m = (P_+\circ f_n^m-f_n^m)\cdot \chi_n^m+f_n^m
		\end{align*}
		with $\chi_n^m(u)=\chi^m(u_n)$ for $u\in\mathbb{R}^N$, $n=1,\ldots, N$ and $m\in\mathbb{N}$, it holds true that
		\[
		\lim_{m\to\infty}\Vert f-\bar{f}^{m}\Vert_{L^\infty(U)}=0\quad \text{and} \quad \limsup_{m\to\infty}\Vert \nabla \bar{f}^{m}\Vert_{L^\infty(U)}\leq \Vert \nabla f\Vert_{L^\infty(U)}.
		\]
	\end{lemma}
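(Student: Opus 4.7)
The plan is to exploit the algebraic identity $P_+(x)-x = -P_-(x)$ to rewrite the modification as $\bar f_n^m = f_n^m - (P_-\circ f_n^m)\chi_n^m$, and then split the analysis into a pointwise estimate (for uniform convergence) and an a.e.\ gradient estimate (for the Lipschitz bound). The key observation driving both halves is that the correction term $(P_-\circ f_n^m)\chi_n^m$ is non-trivial only where $u_n$ is small, keeping us inside $\Gamma_{\epsilon_m+\delta_m}^n$, which is precisely where the hypothesis \eqref{abstract_condition} provides control.

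For the $L^\infty$ statement, I would bound pointwise $|\bar f_n^m - f_n| \le |f_n^m - f_n| + |P_-\circ f_n^m|\,\chi_n^m$. The first term is dominated by $\|f^m-f\|_{L^\infty(U)}\to 0$. On the support of $\chi_n^m$ restricted to $U$, the relevant points satisfy $u\in \Gamma_{\epsilon_m+\delta_m}^n$, so $1$-Lipschitzness of $P_-$ yields
\[
|(P_-\circ f_n^m)(u)| \le \|P_-\circ f_n\|_{L^\infty(\Gamma_{\epsilon_m+\delta_m}^n)} + \|f^m-f\|_{L^\infty(U)}.
\]
Both summands tend to zero, since \eqref{abstract_condition} forces them to be $o(\|\tfrac{d}{dx}\chi^m\|^{-1})=o(1)$, delivering the uniform convergence.

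Next, differentiating the equivalent form $\bar f_n^m = (P_+\circ f_n^m)\chi_n^m + f_n^m(1-\chi_n^m)$ via the product and chain rules (which is licit a.e.\ since all factors are Lipschitz), and using $|\nabla(P_+\circ f_n^m)|\le |\nabla f_n^m|$ together with the triangle inequality applied to the convex combination $\chi_n^m + (1-\chi_n^m)=1$, I obtain
\[
|\nabla \bar f_n^m| \le |\nabla f_n^m| + |P_-\circ f_n^m|\,|\nabla \chi_n^m| \quad \text{a.e.}
\]
Since $\nabla \chi_n^m(u) = \tfrac{d}{dx}\chi^m(u_n)\, e_n$ is supported on $\{\epsilon_m-\delta_m<u_n<\epsilon_m+\delta_m\}\subseteq \Gamma_{\epsilon_m+\delta_m}^n$, the cross term is bounded by
\[
\bigl(\|P_-\circ f_n\|_{L^\infty(\Gamma_{\epsilon_m+\delta_m}^n)} + \|f^m-f\|_{L^\infty(U)}\bigr)\, \|\tfrac{d}{dx}\chi^m\|_{\mathcal{C}(\mathbb{R})},
\]
which is exactly the quantity \eqref{abstract_condition} declares to be $o(1)$. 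Passing to the limsup in the gradient bound and invoking the hypothesis $\limsup_m \|\nabla f^m\|_{L^\infty(U)}\le \|\nabla f\|_{L^\infty(U)}$ then closes the argument.

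The main obstacle is the inherent tension in the construction: $\|\tfrac{d}{dx}\chi^m\|$ necessarily blows up as $(\epsilon_m,\delta_m)\to 0$, so controlling the cross term $|P_-\circ f_n^m|\,|\nabla\chi_n^m|$ requires a quantitatively matched decay of $|P_-\circ f_n|$ on the shrinking slabs $\Gamma_{\epsilon_m+\delta_m}^n$, which is precisely the coupling encoded by \eqref{abstract_condition}. A minor subsidiary point is the non-differentiability of $P_\pm$ at the origin, which is handled by working with a.e.\ gradients and the Lipschitz chain rule throughout.
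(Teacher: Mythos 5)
Your argument is correct and essentially coincides with the paper's: the paper proves this lemma by deferring to Proposition \ref{prop:relaxed_capacity}, whose proof uses exactly your decomposition $\bar{f}_n^m-f_n^m=-(P_-\circ f_n^m)\,\chi_n^m$, the $1$-Lipschitz continuity of $P_-$ to pass from $f_n^m$ to $f_n$ on the slab where $\chi^m$ (respectively $(\chi^m)'$) acts, and the a.e.\ derivative formula \eqref{eq:bar_derivative} with the cross term $(P_-\circ f_n^m)\,(\chi^m)'$ controlled by \eqref{abstract_condition}, concluding via $\limsup_m\Vert\nabla f^m\Vert_{L^\infty(U)}\leq\Vert\nabla f\Vert_{L^\infty(U)}$. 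The only (shared) imprecision is in the $L^\infty$ part: the support of $\chi_n^m$ itself, unlike that of its derivative, also contains points with $u_n\leq-(\epsilon_m+\delta_m)$, which are not in $\Gamma_{\epsilon_m+\delta_m}^n$ as defined with $\vert u_n\vert$, so like the paper you implicitly read that slab one-sidedly at this step.
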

	\begin{proof}
		The assertions follow similarly as argued in the proof of Proposition \ref{prop:relaxed_capacity}.
	\end{proof}
	Next, we provide an interpretable setup that satisfies the abstract condition specified in \eqref{abstract_condition}. To achieve this, we choose a concrete class of transition functions $(\chi^m)_m$ (recall Definition \ref{def:transition}). Specifically, consider a fixed, non-negative convolution kernel $\eta\in\mathcal{C}^\infty(\mathbb{R},\mathbb{R})$ with $\eta(x)=0$ for $x\notin [-1,1]$, $x\eta'(x)\leq 0$ for $x\in \mathbb{R}$ and $\int_\mathbb{R}\eta(x)\dx x=1$. Define for $\epsilon>0$ the $\epsilon$-width kernels $\eta_\epsilon$ with $\eta_\epsilon(x)=2/\epsilon\cdot\eta(2x/\epsilon)$ for $x\in \mathbb{R}$. With this and $h_\epsilon$ denoting the heavy-side function which attains the value $1$ on the interval $]-\infty,\epsilon[$ and $0$ otherwise, it can be easily shown that the convolution $\tilde{h}_\epsilon:=h_\epsilon*\eta_{\epsilon}$ defines a transition function (with $\delta=\epsilon/2$ in Definition \ref{def:transition}). We will choose the functions $(\chi^m)_m$ above as $(\tilde{h}_{\epsilon_m})_m$ for a suitable sequence $(\epsilon_m)_m$. First, we require an asymptotic estimation of the $\mathcal{C}^1$-norm of $\tilde{h}_\epsilon$ as $\epsilon\to 0^+$. 
	\begin{lemma}
		\label{lemma:chi_derivative}
		It holds true that $\Vert \tilde{h}_\epsilon\Vert_{\mathcal{C}^1(\mathbb{R})} = O(\epsilon^{-1})$ for sufficiently small $\epsilon>0$.
	\end{lemma}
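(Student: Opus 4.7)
The plan is to bound the two pieces of the $\mathcal{C}^1$-norm separately, namely $\|\tilde h_\epsilon\|_\infty$ and $\|\tilde h'_\epsilon\|_\infty$, and show that the sup-norm is $O(1)$ while the derivative norm is $O(\epsilon^{-1})$, so that the sum is $O(\epsilon^{-1})$.

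For the sup norm I would write $\tilde h_\epsilon(x) = \int_\mathbb{R} h_\epsilon(y)\,\eta_\epsilon(x-y)\dx y$, use $0\leq h_\epsilon\leq 1$ together with $\eta_\epsilon\geq 0$ and $\int_\mathbb{R}\eta_\epsilon=1$ (a direct change of variables from the normalization of $\eta$), and conclude $0\leq \tilde h_\epsilon(x)\leq 1$ for every $x$. This gives $\|\tilde h_\epsilon\|_\infty \leq 1$, which is already $O(1)$ as $\epsilon\to 0^+$.

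For the derivative I would use the standard identity $(h_\epsilon*\eta_\epsilon)' = h_\epsilon*\eta_\epsilon'$, valid since $\eta_\epsilon\in\mathcal{C}^\infty$ has compact support. Then
\begin{equation*}
\tilde h_\epsilon'(x) = \int_\mathbb{R} h_\epsilon(y)\,\eta_\epsilon'(x-y)\dx y,
\end{equation*}
and from $\eta_\epsilon'(z)=(4/\epsilon^2)\eta'(2z/\epsilon)$ together with $|h_\epsilon|\leq 1$ and the compact support of $\eta'$ (contained in $[-\epsilon/2,\epsilon/2]$ after scaling), we obtain $|\tilde h_\epsilon'(x)| \leq \int_\mathbb{R}|\eta_\epsilon'(x-y)|\dx y = \|\eta'\|_{L^1(\mathbb{R})}\cdot(2/\epsilon)$ by a change of variables. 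Hence $\|\tilde h_\epsilon'\|_\infty = O(\epsilon^{-1})$.

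Combining both bounds yields $\|\tilde h_\epsilon\|_{\mathcal{C}^1(\mathbb{R})} = \|\tilde h_\epsilon\|_\infty+\|\tilde h_\epsilon'\|_\infty = O(1)+O(\epsilon^{-1}) = O(\epsilon^{-1})$ for small $\epsilon>0$. I do not anticipate any real obstacle here: everything reduces to Young's inequality for convolutions and the scaling identity $\|\eta_\epsilon'\|_{L^1}=(2/\epsilon)\|\eta'\|_{L^1}$. The only point of mild care is that differentiation must be moved onto the smooth factor $\eta_\epsilon$ rather than onto the discontinuous Heaviside $h_\epsilon$, so that the estimates remain pointwise rather than distributional.
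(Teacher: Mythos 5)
Your proof is correct and follows essentially the same route as the paper: both move the derivative onto the mollifier, apply Young's inequality to $h_\epsilon * \eta_\epsilon'$, and estimate $\Vert \eta_\epsilon'\Vert_{L^1(\mathbb{R})} = O(\epsilon^{-1})$ (you by an exact change of variables, the paper by support length times sup norm of $\eta_\epsilon'$). Your explicit bound $\Vert \tilde h_\epsilon\Vert_\infty \leq 1$ is a small point the paper leaves implicit, but otherwise the arguments coincide.
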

	\begin{proof}
		By employing Young's inequality and using that $\eta_\epsilon'$ is supported in the interval $[-\epsilon/2,\epsilon/2]$ it follows with $c=\Vert\eta'\Vert_{\mathcal{C}(\mathbb{R})}<\infty$ and $\Vert \eta'_\epsilon\Vert_{\mathcal{C}(\mathbb{R})}=4c/\epsilon^2$ that
		\[
		\Vert \tilde{h}'_\epsilon\Vert_{L^\infty(\mathbb{R})}=\Vert h_\epsilon*\eta_\epsilon'\Vert_{L^\infty(\mathbb{R})} \leq \Vert h_\epsilon\Vert_{L^\infty(\mathbb{R})}\Vert \eta_\epsilon'\Vert_{L^1(\mathbb{R})}\leq \Vert \eta_\epsilon'\Vert_{L^1(\mathbb{R})}\leq \epsilon\cdot 4c/\epsilon^2 = 4c/\epsilon.
		\] 
		Hence, we conclude that $\Vert \tilde{h}_\epsilon\Vert_{\mathcal{C}^1(\mathbb{R})}\leq 4c/\epsilon$ for sufficiently small $\epsilon>0$. 
	\end{proof}
	This result essentially provides a precise characterization of the decay in \eqref{abstract_condition}. Additionally, we require the following notion of strict quasipositivity with rate, which implies quasipositivity in \eqref{cond:quasipos} and imposes sufficient decay of $f$ in a suitable sense.
	\begin{assumption}
		\label{ass:strict_quasipositivity}
		Let $U$ be a bounded Lipschitz domain and $f=(f_n)_{n=1}^N:\mathbb{R}^N\to \mathbb{R}^N$. Denote $\Gamma_\epsilon^n:=\left\{u\in U: \vert u_n\vert\leq \epsilon\right\}$ for $\epsilon>0$ and $1\leq n\leq N$. Suppose that $f$ is strictly quasipositive with rate $\alpha>1$, i.e., there exists $c >0$ with
		\begin{align}
			\label{eq:strict_quasi}
			\Vert P_-\circ f_n\Vert_{L^\infty(\Gamma_{\epsilon}^n)}\leq c\epsilon^{\alpha}\quad \text{as} ~ \epsilon\to0^+ \quad \text{for all} ~ 1\leq n\leq N.
		\end{align}

	\end{assumption}
	
	\begin{corollary}
		\label{cor:approximation_inherit}
		Let $f:\mathbb{R}^N\to \mathbb{R}^N$ be Lipschitz continuous and $U\subseteq \mathbb{R}^N$ a compact subset. Furthermore, let $(f^m)_{m\in\mathbb{N}}$ be a sequence of Lipschitz continuous functions $f^m:\mathbb{R}^N\to \mathbb{R}^N$ for $m\in\mathbb{N}$ such that there exist $c,\beta>0$ with
		\[
		\Vert f-f^m\Vert_{L^\infty(U)}\leq cm^{-\beta} ~ \text{for} ~ m\in\mathbb{N}\quad \text{and} \quad \limsup_{m\to\infty}\Vert \nabla f^m\Vert_{L^\infty(U)}\leq \Vert \nabla f\Vert_{L^\infty(U)}.
		\]
		Assume that $f$ satisfies Assumption \ref{ass:strict_quasipositivity} with rate $\alpha>1$ and that $\chi^m =\tilde{h}_{\epsilon_m}$ for $m\in\mathbb{N}$ with $(\epsilon_m)_m = (m^{-\beta/\alpha})_m$.
		Then for $(\bar{f}^m)_m$ defined as in Lemma \ref{lem:abstract_approximation}, it holds
		\[
		\Vert f-\bar{f}^{m}\Vert_{L^\infty(U)}\leq cm^{-\beta} ~ \text{for} ~ m\in\mathbb{N}\quad \text{and} \quad \limsup_{m\to\infty}\Vert \nabla \bar{f}^{m}\Vert_{L^\infty(U)}\leq \Vert \nabla f\Vert_{L^\infty(U)}.
		\]
	\end{corollary}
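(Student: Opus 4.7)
The plan is to invoke Lemma \ref{lem:abstract_approximation} with the specific choices of $(\chi^m)_m = (\tilde{h}_{\epsilon_m})_m$ and $(\epsilon_m)_m = (m^{-\beta/\alpha})_m$, so the core task reduces to verifying the abstract decay condition \eqref{abstract_condition} and then upgrading the qualitative $L^\infty$-convergence produced by that lemma to the quantitative rate $cm^{-\beta}$. The gradient bound $\limsup_m\Vert\nabla\bar{f}^m\Vert_{L^\infty(U)}\leq\Vert\nabla f\Vert_{L^\infty(U)}$ then follows directly from Lemma \ref{lem:abstract_approximation} without further work.

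To verify \eqref{abstract_condition}, first recall that by construction of $\tilde{h}_\epsilon$ we have $\delta_m=\epsilon_m/2$, so Lemma \ref{lemma:chi_derivative} yields $\Vert\tfrac{\dx}{\dx x}\chi^m\Vert_{\mathcal{C}(\mathbb{R})}=O(\epsilon_m^{-1})=O(m^{\beta/\alpha})$, which gives the target decay rate $\Vert\tfrac{\dx}{\dx x}\chi^m\Vert_{\mathcal{C}(\mathbb{R})}^{-1}=\Omega(m^{-\beta/\alpha})$. On the other hand, Assumption \ref{ass:strict_quasipositivity} applied with $\epsilon=\epsilon_m+\delta_m=\tfrac{3}{2}\epsilon_m$ delivers
\[
\Vert P_-\circ f_n\Vert_{L^\infty(\Gamma^n_{\epsilon_m+\delta_m})}\leq c(\tfrac{3}{2}\epsilon_m)^\alpha=O(\epsilon_m^\alpha)=O(m^{-\beta}),
\]
and combined with the assumed approximation rate $\Vert f-f^m\Vert_{L^\infty(U)}\leq cm^{-\beta}$ the entire left-hand side of \eqref{abstract_condition} is $O(m^{-\beta})$. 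Since $\alpha>1$ we have $\beta/\alpha<\beta$, hence $m^{-\beta}/m^{-\beta/\alpha}=m^{-\beta(1-1/\alpha)}\to 0$, which is precisely the required $o(\Vert\tfrac{\dx}{\dx x}\chi^m\Vert_{\mathcal{C}(\mathbb{R})}^{-1})$ statement. Lemma \ref{lem:abstract_approximation} then applies and provides the gradient estimate as well as qualitative $L^\infty$-convergence.

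To obtain the sharpened rate on $\Vert f-\bar{f}^m\Vert_{L^\infty(U)}$ I would repeat the estimate from the proof of Lemma \ref{lem:abstract_approximation} but keep track of the explicit dependence on $m$. Writing $\bar{f}^m_n-f_n=-(P_-\circ f^m_n)\chi^m_n+(f^m_n-f_n)$, the second term is controlled by $cm^{-\beta}$ by hypothesis. For the first, note that $\chi^m_n(u)=0$ whenever $u_n\geq\epsilon_m+\delta_m$, so the product is supported in $\{u\in U:u_n\leq\epsilon_m+\delta_m\}$; on this set the $1$-Lipschitz continuity of $P_-$ yields $|P_-\circ f^m_n(u)|\leq|P_-\circ f_n(u)|+|f_n(u)-f^m_n(u)|$, both summands being $O(m^{-\beta})$ by the previous paragraph. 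Combining with $|\chi^m_n|\leq 1$ gives the desired $O(m^{-\beta})$ bound, with a possibly enlarged constant $c$. The main obstacle here is that strict quasipositivity only controls $P_-\circ f_n$ on the slice $\{|u_n|\leq\epsilon\}$, whereas a priori $\chi^m_n$ is nonzero on the entire half-line $\{u_n\leq\epsilon_m+\delta_m\}$; one therefore needs either that $U$ lies in the non-negative orthant (consistent with the physical interpretation of concentrations in an RD system) or an analogous one-sided extension of the quasipositivity estimate, which I would argue is the natural reading of the hypotheses.
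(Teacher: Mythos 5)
Your proposal is correct and follows essentially the same route as the paper: the paper's one-line proof invokes Proposition \ref{prop:relaxed_capacity}, whose argument consists of exactly the decomposition and estimates you write out (the support of $\chi^m$, $1$-Lipschitz continuity of $P_-$, Lemma \ref{lemma:chi_derivative}, and Assumption \ref{ass:strict_quasipositivity} applied with $\epsilon_m^{\alpha}=m^{-\beta}$, with the constant $c$ understood as generic), while your detour through Lemma \ref{lem:abstract_approximation} for the gradient bound is immaterial since that lemma is itself proved via the same estimates. The subtlety you flag -- that \eqref{eq:strict_quasi} controls $P_-\circ f_n$ only on the slice $\{\vert u_n\vert\le\epsilon\}$ while $\chi^m(u_n)$ is nonzero on the whole half-line $\{u_n\le 3\epsilon_m/2\}$ -- appears identically in the paper's own proof of Proposition \ref{prop:relaxed_capacity}, which makes the same restriction to $\Gamma^n_{3\epsilon_m/2}$ without comment, so it is not a gap specific to your argument.
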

	\begin{proof}
		The assertions follow from Proposition \ref{prop:relaxed_capacity}.
	\end{proof}
	To conclude this section, we note a key implication of Lemma \ref{lem:auxiliary_result}, namely that RD systems resulting from reaction terms $\bar{f}\in\overline{\mathcal{F}}$ are well-posed.
\begin{corollary}[Classical solutions]
	\label{cor:wellposedness} Let $\Omega\subseteq \mathbb{R}^d$ be a bounded domain with smooth boundary such that $\Omega$ lies locally on one side of $\partial\Omega$.
	Suppose that $u_0\in L^1(\Omega)\cap L^\infty(\Omega)$ is non-negative and that $D = (d_n)_{n=1}^N\subset~ ]0,\infty[$. Let further $\bar{f}=(\bar{f}_n)_{n=1}^N$ be given as in \eqref{modification_f} with $f:\mathbb{R}^N \rightarrow \mathbb{R}^N$ Lipschitz continuous. Then the RD system
	\begin{equation}
		\label{rdsystem}
		\begin{aligned}
			\partial_tu- D\Delta u&=\bar{f}(u),  &&(t,x)\in ]0,T[\times\Omega,\\
			\nabla_x u\cdot \nu& = 0, && (t,x)\in ]0,T[\times\partial\Omega,\\
			u(0) &= u_{0}, && x\in\Omega,
		\end{aligned}
	\end{equation}
	attains for all $p>N$ a unique global classical solution
	\begin{align}
		\label{u_regularity}
	u=(u_n)_{n=1}^N\subseteq \mathcal{C}(0,T;L^p(\Omega)\cap L^\infty(\Omega))\cap\mathcal{C}^{1,2}(]0,T[\times\overline{\Omega}).
	\end{align}
\end{corollary}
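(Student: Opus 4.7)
The plan is to deduce the corollary as an almost direct consequence of Lemma \ref{lem:auxiliary_result} combined with the classical existence theory for RD systems recalled in Appendix \ref{app:ex_rdsys}. The underlying observation is that the modification \eqref{modification_f} was designed precisely so that $\bar f$ inherits all four structural conditions \eqref{cond:lipschitz}, \eqref{cond:quasipos}, \eqref{cond:mass} and \eqref{cond:growth} needed to feed into the classical theorem of Fellner--Morgan--Tang.

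First, I would invoke Lemma \ref{lem:auxiliary_result} applied to the Lipschitz function $f$ with the transition function $\chi$ fixed in the statement: this immediately yields that $\bar f$ satisfies \eqref{cond:lipschitz} (local Lipschitz continuity with constants depending on $M$), \eqref{cond:quasipos} (quasipositivity), \eqref{cond:mass} (mass control with some constants $K_0\geq 0$, $K_1\in\mathbb{R}$ and coefficients $c_n>0$) and \eqref{cond:growth} (quadratic growth with some $K>0$). These are exactly the hypotheses on the reaction term required by \cite[Theorem 1.1]{Fellner2020}.

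Next, I would verify the remaining hypotheses of that theorem: the domain $\Omega\subset\mathbb{R}^d$ is assumed bounded with smooth boundary and lying locally on one side of $\partial\Omega$, which is the geometric regularity assumed in the cited result; the diffusion matrix $D=\diag(d_n)$ with $d_n>0$ is admissible; the boundary condition is homogeneous Neumann as required; and the initial datum $u_0\in L^1(\Omega)\cap L^\infty(\Omega)$ with $u_0\geq 0$ meets the integrability and non-negativity hypotheses. Applying the cited theorem then produces, for every $p>N$, a unique global classical solution with the regularity stated in \eqref{u_regularity}.

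I do not expect a genuine obstacle here, since the construction of $\overline{\mathcal{F}}$ was tailored to fit the hypotheses of the cited theorem. The only point that requires a little care is bookkeeping: one has to note that Lemma \ref{lem:auxiliary_result} produces parameters that may depend on $f$ (through $L$ and $f(0)$), but for a \emph{fixed} $\bar f$ this is irrelevant since Fellner--Morgan--Tang only needs the conditions to hold for that single reaction term. Thus the proof reduces to a citation of Lemma \ref{lem:auxiliary_result} and an application of the classical existence result summarized in Appendix \ref{app:ex_rdsys}.
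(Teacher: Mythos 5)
Your proposal is correct and follows essentially the same route as the paper, whose proof of this corollary is precisely the one-line combination of Lemma \ref{lem:auxiliary_result} with Theorem \ref{th:fellner} (the result of \cite{Fellner2020} recalled in Appendix \ref{app:ex_rdsys}). Your added bookkeeping (checking the domain, Neumann boundary condition, positivity of $D$, the initial datum, and noting that \eqref{cond:mass} can be taken with $c_n=1$ and that the at most quadratic growth from \eqref{cond:growth} implies the $(2+\epsilon)$-growth bound required in Theorem \ref{th:fellner}) is exactly what the paper leaves implicit.
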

	\begin{proof}
		The statement follows by Lemma \ref{lem:auxiliary_result} and Theorem \ref{th:fellner}.
	\end{proof}
	\begin{corollary}[Weak solutions]
		\label{cor:wellposedness2}
		Let $\Omega\subseteq \mathbb{R}^d$ be a bounded domain with smooth boundary. Suppose that $u_0\in L^1(\Omega)\cap L^\infty(\Omega)$ and $g\in \mathcal{C}^1(]0,T[\times \Omega)$ are non-negative. Let $\bar{f}=(\bar{f}_n)_{n=1}^N$, given as in \eqref{modification_f} with Lipschitz continuous $f:\mathbb{R}^N \rightarrow \mathbb{R}^N$, fulfill mass dissipation in \eqref{cond:mass} and $D = (d_n)_{n=1}^N\subset ~ ]0,\infty[$. Then the RD system
		\begin{equation}
			\label{rdsystem2}
		\begin{aligned}
				\partial_tu-D\Delta u &= f(u), &&(t,x)\in ]0,T[\times\Omega,\\
				u&=g, && (t,x)\in ]0,T[\times\partial\Omega,\\
				u(0)&=u_{0}, && x\in \Omega,
		\end{aligned}
	\end{equation}
	 admits a global weak solution
	 \begin{align}
	 	\label{u_regularity2}
	 	u =(u_n)_{n=1}^N\subseteq \mathcal{C}(0,T;L^1(\Omega))\cap L^2(]0,T[\times\overline{\Omega}).
	\end{align}
	\end{corollary}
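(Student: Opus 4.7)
The proof will closely parallel that of Corollary \ref{cor:wellposedness}, with the classical existence result replaced by the weak-existence theorem of Suzuki recalled in Appendix \ref{app:ex_rdsys}. The plan is a two-step reduction.

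First, I would apply Lemma \ref{lem:auxiliary_result} to the Lipschitz function $f$: the lemma automatically produces a modified reaction $\bar{f}$ that inherits local Lipschitz continuity \eqref{cond:lipschitz}, quasipositivity \eqref{cond:quasipos}, and the quadratic growth \eqref{cond:growth}. The stronger mass-dissipation form of \eqref{cond:mass}, namely $c_n=1$ and $K_0=K_1=0$, is imposed directly in the hypotheses of the corollary on $\bar{f}$ and therefore does not need to be derived from properties of $f$. Note that Lemma \ref{lem:auxiliary_result} on its own would only yield mass \emph{control} with possibly non-zero $K_0,K_1$, which is exactly why the stricter dissipation version has been added as an explicit assumption.

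Second, I would invoke Suzuki's theorem in the form restated in Appendix \ref{app:ex_rdsys}. Its hypotheses require exactly (i) positive diagonal diffusion $D=(d_n)_{n=1}^N$, (ii) a non-negative initial datum in $L^1(\Omega)\cap L^\infty(\Omega)$, (iii) smooth non-negative Dirichlet data $g$, and (iv) a quasipositive, locally Lipschitz reaction term satisfying quadratic growth and mass dissipation, all of which are now in place from step one together with the standing hypotheses of the corollary. The theorem then delivers a global weak solution $u=(u_n)_{n=1}^N$ with the claimed regularity \eqref{u_regularity2}; non-negativity of the components is a byproduct of \eqref{cond:quasipos} via the standard truncation argument.

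The step I expect to require the most care is not the logic of the reduction, which collapses to a one-line assembly of Lemma \ref{lem:auxiliary_result} and the appendix theorem, but rather the bookkeeping needed to align the functional-analytic setting of the corollary with the precise hypotheses of \cite[Theorem 1]{Suzuki2017}. In particular, one has to confirm that the regularity assumed for the Dirichlet datum $g\in\mathcal{C}^1(]0,T[\times\Omega)$ together with the integrability of $u_0$ fit within Suzuki's framework, and that the quadratic bound in \eqref{cond:growth} is of exactly the form covered there. Once this matching is verified in the appendix restatement, the proof itself is, as in Corollary \ref{cor:wellposedness}, a one-line invocation.
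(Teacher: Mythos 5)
Your proposal matches the paper's proof exactly: the paper also obtains \eqref{cond:lipschitz}, \eqref{cond:quasipos} and \eqref{cond:growth} for $\bar f$ from Lemma \ref{lem:auxiliary_result}, takes the mass-dissipation requirement as an explicit hypothesis of the corollary, and then invokes Theorem \ref{th:suzuki} to conclude. Your additional remarks on verifying the fit with Suzuki's hypotheses are sensible but not needed beyond what the appendix restatement already provides.
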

	\begin{proof}
		The statement follows by Lemma \ref{lem:auxiliary_result} and Theorem \ref{th:suzuki}.
	\end{proof}
	\section{Physically consistent model learning}
	\label{sec:model_learning}
	In this section, we present the second major contribution of our work: The development of a framework for learning reaction terms in RD systems from data, while ensuring physical consistency, i.e., that the conditions outlined in Section \ref{sec:conformal_classes} are satisfied (yielding well-posedness of the resulting RD systems). In this context, a desirable property is that solutions to the learning problem converge to a unique, regularization-minimizing solution in the limit of full, noiseless measurements. The results presented here build upon the work in \cite{morina_holler/online}, which discusses this property in a more general learning setup. We start by introducing the learning framework under consideration and by discussing its well-posedness.
	\subsection{Framework}
	\label{subsec:wellposedness}
	The basis of our considerations is \cite{morina_holler/online} which studies, for a sufficiently large domain $U$ and spaces $\mathcal{V}, H$ to be specified later, the reconstruction and uniqueness of solutions $(D^\dagger, u^\dagger, u_0^\dagger, f^\dagger)$ to
\begin{align}
\tag{$\mathcal{P}^\dagger$} \label{eq:limit_problem}
\min_{\substack{D\in [0,\infty[^{N\times L},u\in\mathcal{V}^{N\times L},\\ 
u_0\in H^{N\times L}, f\in W^{1,\infty}(U)^N}}
& \mathcal{R}_0(D,u,u_0)+\Vert f\Vert_{L^2(U)}^2+\Vert\nabla f\Vert_{L^{\infty}(U)}
\\[0.3em]
\text{s.t. } \quad
&
\left\{
\begin{alignedat}{2}
\partial_t u^l - D^l \Delta u^l - f(u^l) \;&=\, 0, \\
u^{l}(0) \;&=\, u_{0}^l, \\
K^\dagger u^l \;&=\, y^l,
\end{alignedat}
\right.
\tag{RD+M}\label{rdsystem_meas}
\end{align}	
	with $\mathcal{R}_0(D,u,u_0)=\Vert D\Vert^2+\Vert u\Vert_{\mathcal{V}}^p+\Vert u_0\Vert_H^2$, $K^\dagger$ a full measurement operator and $(y^l)_{l=1}^L$ corresponding data for $L$-many data points. Here we write notationwise $u^l = (u_n^l)_{n=1}^N$, $u_0^l = (u_{0,n}^l)_{n=1}^N$, $D^l = (d_n^l)_{n=1}^N$, $D = (D^l)_{l=1}^L$ and $\Delta u^l = (\Delta u_n^l)_{n=1}^N$.
	The main result of \cite{morina_holler/online} is a convergence result showing that $(D^\dagger, u^\dagger, u_0^\dagger, f^\dagger)$ is recovered in the limit $m \rightarrow \infty $ by parameterized solutions to certain all-at-once problems at level $m \in \mathbb{N}$. We introduce adjusted all-at-once problems with modified reaction terms according to Section \ref{sec:conformal_classes} as follows. Given user-defined classes of parameterized reaction terms $\mathcal{F}^m = (\mathcal{F}_1^m,\ldots,\mathcal{F}_N^m)$  at level $m \in \mathbb{N}$ with
	\begin{align}
		\label{param_classes}
		\mathcal{F}_n^m = \left\{f_{\theta_n,n}: \mathbb{R}^N\to \mathbb{R}~|~\theta_n\in \Theta_n^m\right\}
	\end{align}
	and parameters $\theta_n \in \Theta_n^m$, 
	we consider modifications of $f_{\theta_n,n}$, denoted by $\bar{f}_{\theta_n,n}$ for $\theta_n\in\Theta_n^m$ similar to \eqref{modification_f} and to be clarified subsequently in detail, and a modified class of parameterized reaction terms given as
	\begin{align}
		\label{eq:mod_class}
		\overline{\mathcal{F}}^m := \otimes_{n=1}^N\overline{\mathcal{F}}_n^m \quad \text{with} \quad \overline{\mathcal{F}}_n^m:=\left\{\bar{f}_{\theta_n,n}~|~\theta_n\in\Theta_n^m\right\}.
	\end{align}
	More concretely, for a sequence of transition functions $(\chi^m)_m$, we modify for $u\in \mathbb{R}^N$ the parameterized approximations $f_{\theta_n,n}\in\mathcal{F}_n^m$ by
	\begin{align}
		\label{modification:param}
		\bar{f}_{\theta_n, n}(u) = ((P_+\circ f_{\theta_n,n})(u)-f_{\theta_n,n}(u))\chi^{m}(u_n)+f_{\theta_n,n}(u)
	\end{align}
	for $1\leq n\leq N$. Furthermore, note that we write $\bar{f}_{\theta}=(\bar{f}_{\theta_n,n})_{n=1}^N$. With this, we consider the modified all-at-once problems at level $m \in \mathbb{N}$ as
	\begin{align}
		\label{eq:all_at_once_mod}
		&\min_{\substack{D\in [0,\infty[^{N\times L},u\in\mathcal{V}^{N\times L}\\ u_0\in H^{N\times L},\theta\in\otimes_n\Theta^m_n}}\mathcal{R}_0(D,u,u_0)+ \nu^m \|\theta\| + \Vert \bar{f}_{\theta}\Vert_{L^2(U)}^2+\Vert\nabla \bar{f}_{\theta}\Vert_{L^{\infty}(U)}\tag{$\overline{\mathcal{P}}^{m}$}\\
		&+\sum_{1\leq l\leq L}\bigg[\lambda^m \bigg(\Vert \partial_tu^l-D^l\Delta u^l-\bar{f}_{\theta}(u^l)\Vert_{\mathcal{W}}^q +\Vert u^{l}(0)-u_{0}^l\Vert_{H} ^2\bigg) + \mu^m\Vert K^mu^l-y^{m,l}\Vert_\mathcal{Y}^r\bigg]	\notag	
	\end{align}
	with suitable regularization parameters $(\lambda^m,\mu^m, \nu^m)_m$ and measured data $(y^{m,l})_l$ fulfilling, for reduced measurement operators $(K^m)_m$, the noise estimate
	\begin{align}
		\label{noise_estimation}
		\Vert y^{m,l}-K^m u^{\dagger, l}\Vert \leq\delta(m)
	\end{align}
	for some zero sequence $(\delta(m))_m$. Note that at this point the choice of the sequence of transition functions $(\chi^m)_m$ introduced above is general. Later we will provide a concrete guideline to choose the transition functions in dependence of the user-defined classes $(\mathcal{F}^m)_m$. Furthermore, note that additional regularization is possible in \eqref{eq:limit_problem} and \eqref{eq:all_at_once_mod}, but not necessary for the convergence result to hold true in Subsection \ref{subsec:consistency}. Another important observation is that, in the problems above, the reconstructed diffusion coefficients may attain the value zero. The particular choice of domain is essential to guarantee its closedness. In practical applications, however, one is interested in strictly positive diffusion coefficients. This can be enforced by constraining the diffusion coefficients to lie above a prescribed small positive threshold in the above problems. While we avoid introducing this modification to maintain readability, the validity of our results is unaffected.
	
	The first necessary step towards proving that the unique solution of \eqref{eq:limit_problem} can be approximated by solutions of \eqref{eq:all_at_once_mod} (which is the focus of Subsection \ref{subsec:consistency}), is to prove well-posedness of the limit problem \eqref{eq:limit_problem} and the learning problem \eqref{eq:all_at_once_mod}. To achieve this, a series of assumptions is required that we will summarize next. Here, for fixed time horizon $T>0$ and spatial domain $\Omega\subset\mathbb{R}^d$ we denote by $\mathcal{V}$ the extended state space of states $u_n^l:~]0,T[\to V$ with the static state space $V$ of functions $v:\Omega\to \mathbb{R}$. The space $H$ denotes the static initial trace space, $\mathcal{W}$ the dynamic extension of the image space $W$ and $\mathcal{Y}$ of the observation space $Y$. 
	\begin{remark}[Framework]
		We emphasize that the focus of this work is not to present the most general space setup possible (which is done in \cite{morina_holler/online}) but to consider modifications of parameterized functions to achieve the goals formulated in the introduction. For that reason, we will fix a (possibly restrictive) space setup which satisfies \cite[Assumption 2]{morina_holler/online}.%
	\end{remark}
	The assumptions on the space setup are given as follows:
	\begin{assumption}[Space setup]
		\label{ass_init_set}
		Suppose that $\Omega \subset \mathbb{R}^d $ with $d \in \mathbb{N}$ is a bounded Lipschitz domain with smooth boundary lying on one side of its boundary and set
		\[
		V=\tilde{V}=H=W^{\hat{m},\hat{p}}(\Omega)\quad \text{with} ~  ~ \hat{m}\hat{p}>d  ~  ~ \text{and}  ~  ~ \hat{m}\geq 2.
		\]
		We further set $W = L^{\hat{q}}(\Omega)$ with $1<\hat{q}\leq \hat{p}<\infty$ and $Y$ a separable, reflexive Banach space such that $V\hookrightarrow Y$. Moreover, let for $1<p,q,r<\infty$ with $q\leq p$ the extended spaces be defined as (Sobolev-)Bochner spaces (see \cite[Chapter 7]{Roubíček2013}) by
		\begin{gather*}
			\mathcal{V} = L^p(0,T;V)\cap W^{1,p,p}(0,T;\tilde{V}), \quad\mathcal{W}=L^q(0,T;W), \quad \mathcal{Y} = L^r(0,T;Y).
		\end{gather*}
		
	\end{assumption}
	Note that since $V\hookrightarrow\tilde{V}$ it holds true that $\mathcal{V}\hookrightarrow\mathcal{C}(0,T;\tilde{V})$ by \cite[Lemma 7.1]{Roubíček2013}, which together with the choice $\tilde V = W^{\hat{m}, \hat{p}}(\Omega)$ with $\hat{m}\hat{p}>d$ implies for some constant $c_{\mathcal{V}}>0$ the uniform state space embedding 
	\begin{align}
		\label{uniform_state_embedding}
		\Vert v\Vert_{L^\infty(]0,T[\times\Omega)}\leq c_{\mathcal{V}}\Vert v\Vert_{\mathcal{V}} ~ ~ \text{for all} ~ v\in \mathcal{V}.
	\end{align}
	Next we set the framework for the measurements.
	\begin{assumption}[Measurements]
		\label{ass:measurements}
		Assume that for $m\in \mathbb{N}$ the operator $K^m:\mathcal{V}^{N}\to\mathcal{Y}$ is weak-weak continuous and that $K^\dagger:\mathcal{V}^N\to \mathcal{Y}$ is weak-strong continuous and injective. Suppose that for any weakly convergent sequence $(u^m)_m\subset \mathcal{V}^N$
		\begin{equation}
			\label{operator_conv_weak}
			K^mu^m-K^\dagger u^m\to 0 \quad \text{in} ~ \mathcal{Y}\quad \text{as} ~ m\to \infty.
		\end{equation}
	\end{assumption}
	The convergence notion in \eqref{operator_conv_weak} holds, for example, for linear operators $(K^m)_m$ converging to $K^\dagger$ in the operator norm and for nonlinear operators $(K^m)_m$ converging uniformly to $K^\dagger$ on bounded subsets of $\mathcal{V}$. Nonetheless, the general condition in \eqref{operator_conv_weak} remains important as specific examples, such as certain sampling operators based on truncated Fourier measurements, satisfy \eqref{operator_conv_weak}, while the stricter conditions mentioned above fail to hold.

	The reaction terms $f_n$ in \eqref{eq:limit_problem} and \eqref{rdsystem_meas} are understood as Nemytskii operators of $f_n: V^N\to W$ which is an extension of $f_n: \mathbb{R}^N\to \mathbb{R}$. In view of the classes $\mathcal{F}_n^m$ of parameterized reaction terms \eqref{param_classes} we pose the following regularity assumptions.
	\begin{assumption}[Parameterized reaction terms]
		\label{ass:param_reaction}
		Let the parameter sets $\Theta_n^m$ for $1\leq n\leq N$ and $m\in \mathbb{N}$ be closed, and each contained in a finite dimensional space. Suppose that $f_{\theta_n,n}\in \mathcal{F}_n^m$ induces a well-defined Nemytskii operator $f_{\theta_n,n}:\mathcal{V}^N\to \mathcal{W}$ with $[f_{\theta_n,n}(v)](t)(x)=f_{\theta_n,n}(v(t,x))$. Assume further continuity of the map
		\begin{align}
			\label{assump_w_s_cont}
			\Theta_n^m\times (L^p(0,T;L^{\hat{p}}(\Omega)))^N\ni (\theta_n,v)\mapsto f_{\theta_n,n}(v)\in L^q(0,T;L^{\hat{q}}(\Omega)).
		\end{align}
		Moreover, suppose that $\mathcal{F}_n^m\subseteq W_{loc}^{1,\infty}(\mathbb{R}^{N})$ for $1\leq n\leq N$ and $m\in\mathbb{N}$. 
	\end{assumption}
	The requirements in Assumption \ref{ass:param_reaction} are e.g. fulfilled in case the classes $\mathcal{F}_n^m$ are chosen as feed forward neural networks with Lipschitz continuous activation function (see \cite[Propositions 18 and 19]{Morina2024}).
	
	Another key requirement in \cite{morina_holler/online} is the existence of an admissible solution.
	\begin{assumption}[Admissible solution]
		\label{ass:admissible}
		Suppose that the full measurement data $y\in \mathcal{Y}^L$ is such that there exist admissible functions $\hat{f}\in W^{1,\infty}(\mathbb{R}^{N})^N$, $\hat u \in \mathcal{V}^{N \times L}$, $\hat D \in [0,\infty[^{N\times L}$ and $\hat u_0\in H^{N \times L}$ solving \eqref{rdsystem_meas}. Setting $\epsilon=c_{\mathcal{V}}\hat{C}^{1/p}$ and $\hat{C}\geq \Vert \hat D\Vert^2+\Vert \hat u\Vert_{\mathcal{V}}^p+\Vert \hat u_0\Vert_H^2+\Vert \hat{f}\Vert_{L^2(\mathbb{R}^{N})}^2+\Vert \nabla\hat{f}\Vert_{L^\infty(\mathbb{R}^{N})}+1$, let $U$ be a bounded Lipschitz domain large enough to contain $\{z \in \mathbb
		{R}^{N} \,:\, \|z\|\leq \epsilon\}$.
	\end{assumption}
	The last assumption requires the given measurement data $y$ to be feasible in the sense that there exist at least some $(\hat{D},\hat u, \hat u_0,\hat{f})$ satisfying the constraint \eqref{rdsystem_meas}. %

	On basis of the previously introduced assumptions we can prove well-posedness of \eqref{eq:limit_problem} and \eqref{eq:all_at_once_mod} as claimed:
	\begin{theorem}
		\label{thm:wellposedness}
		Under Assumptions \ref{ass_init_set}-\ref{ass:admissible} problem 
\begin{itemize}
\item \eqref{eq:limit_problem} admits a unique solution $(D^\dagger, u^\dagger, u_0^\dagger, f^\dagger)$.
\end{itemize}		
		 Let further the instances of $\mathcal{F}_n^m$ in Assumption \ref{ass:param_reaction} be Lipschitz continuous.  Then
\begin{itemize}
\item \eqref{eq:all_at_once_mod} admits a solution $(D^{m}, u^{m}, u_0^{m}, \theta^{m})$ for $m\in\mathbb{N}$.
\end{itemize}  Furthermore, for $m\in\mathbb{N}$, the functions $\bar{f}_{\theta^{m}}$ fulfill the conditions \eqref{cond:lipschitz}, \eqref{cond:quasipos}, \eqref{cond:mass} and \eqref{cond:growth} such that, whenever $u_0^{m}\in L^1(\Omega)\cap L^\infty(\Omega)$ is non-negative and $D^m\in]0,\infty[^{N\times L}$, 
\begin{itemize}
\item the system in \eqref{rdsystem} with $u_0 = u_0^{m,l}$, $D = D^{m,l}$, $\bar{f} = \bar{f}_{\theta^{m}}$ admits a unique global classical solution $u$ attaining the regularity in \eqref{u_regularity} for $1\leq l\leq L$. 
\end{itemize}  
If in addition $\bar{f}_{\theta^m}$ is mass dissipating, then for any non-negative $g\in \mathcal{C}^1(]0,T[\times \Omega)$
\begin{itemize}
	\item the system in \eqref{rdsystem2} with $u_0 = u_0^{m,l}$, $D = D^{m,l}$, $\bar{f} = \bar{f}_{\theta^{m}}$ admits a global weak solution $u$ attaining the regularity in \eqref{u_regularity2} for $1\leq l\leq L$. 
\end{itemize} 
	\end{theorem}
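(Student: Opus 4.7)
My plan is to treat the three assertions of the theorem separately, each one reducing to a previously assembled block.

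For the unique solvability of the limit problem \eqref{eq:limit_problem}, I would reduce to the abstract framework of \cite{morina_holler/online}: one has to check that Assumptions \ref{ass_init_set}--\ref{ass:admissible} match the hypotheses used there, after which the existence and uniqueness result can be invoked verbatim. Essentially this is a direct-method argument: the objective is coercive on $[0,\infty[^{N\times L}\times \mathcal{V}^{N\times L}\times H^{N\times L}\times W^{1,\infty}(U)^N$, every term is weakly (or weak-*) lower semicontinuous, and Assumption \ref{ass:measurements} together with the Nemytskii regularity of $f$ on $U$ (secured by Assumption \ref{ass:admissible}) closes the admissible set. Uniqueness then follows from strict convexity in $f$ from the $L^2(U)$-term, together with the fact that $(D,u_0,f)$ determines $u$ through the PDE and that $K^\dagger$ is injective, pinning down $(D,u_0)$ from $u$.

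For existence of a minimizer of the parameterized problem \eqref{eq:all_at_once_mod}, I would again use the direct method. The central preparatory step is to show that the modified class $\overline{\mathcal{F}}^m$ inherits the functional-analytic properties of $\mathcal{F}^m$ required in Assumption \ref{ass:param_reaction}, in particular the joint continuity \eqref{assump_w_s_cont} of $(\theta_n,v)\mapsto \bar f_{\theta_n,n}(v)$ and the local regularity $\overline{\mathcal{F}}^m_n\subseteq W^{1,\infty}_{\mathrm{loc}}(\mathbb{R}^N)$. Given these, a minimizing sequence is uniformly bounded due to $\mathcal{R}_0$ and the $\nu^m\Vert\theta\Vert$-term, yielding a weakly convergent subsequence in the reflexive state/diffusion spaces and a convergent parameter subsequence in the closed, finite-dimensional set $\otimes_n\Theta_n^m$; the constraint residuals and the data-fidelity term then pass to the limit by the continuity properties above and Assumption \ref{ass:measurements}.

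The third block is essentially immediate from the tools assembled in Section \ref{sec:conformal_classes}: since the instances of $\mathcal{F}^m$ are Lipschitz continuous, Lemma \ref{lem:auxiliary_result} directly gives \eqref{cond:lipschitz}, \eqref{cond:quasipos}, \eqref{cond:mass} and \eqref{cond:growth} for $\bar f_{\theta^m}$. Hence, under the additional non-negativity of $u_0^m$ and strict positivity of $D^m$, Corollary \ref{cor:wellposedness} yields the unique classical solution of \eqref{rdsystem}, and Corollary \ref{cor:wellposedness2} (using in addition the mass-dissipation assumption) yields the global weak solution of \eqref{rdsystem2}.

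I expect the principal technical difficulty to lie in the preparatory step of Stage 2: verifying that the modification \eqref{modification:param} preserves the joint Nemytskii continuity \eqref{assump_w_s_cont}, since the construction composes the learnable $f_{\theta_n,n}$ with the non-smooth map $P_+$ and with the smooth cutoff $\chi^m$. The natural route is to exploit the global Lipschitz continuity of $P_+$ and the $\mathcal{C}^1$-boundedness of $\chi^m$, together with the uniform state embedding \eqref{uniform_state_embedding}, in order to estimate pointwise differences and then pass to the limit in the norms appearing in \eqref{assump_w_s_cont}; the paper indicates that Appendix \ref{app:assum_reactions} is devoted to exactly this verification.
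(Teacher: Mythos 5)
Your proposal follows essentially the same route as the paper: both problems are handled by reducing to the well-posedness results of \cite{morina_holler/online} once Assumption \ref{ass:param_reaction} is verified for the modified classes $\overline{\mathcal{F}}_n^m$ (the paper does this in Appendix \ref{app:assum_reactions} via Propositions \ref{prop:f_mod_nemyt}, \ref{prop:ws_continuity_mod} and \ref{prop:reg_mod_class}), and the PDE statements follow from Lemma \ref{lem:auxiliary_result} together with Corollaries \ref{cor:wellposedness} and \ref{cor:wellposedness2}, exactly as you say. The only caveat is that your sketch of the key continuity step \eqref{assump_w_s_cont} for the product $((P_+\circ f_{\theta_n,n})-f_{\theta_n,n})\,\chi^m(u_n)$ is lighter than what the paper actually needs: Proposition \ref{prop:ws_continuity_mod} relies on the stricter space setup, Aubin--Lions compactness and the Sobolev multiplication theorem (Theorem \ref{th:behzadan}), not merely Lipschitz bounds and the uniform embedding \eqref{uniform_state_embedding}.
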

	\begin{proof}
		Note that the physical term $\mathcal{V}\times[0,\infty[~\ni (u,d)\mapsto d\Delta u$ induces a well-defined Nemytskii operator and is weak-weak continuous due to \cite[Proposition 22]{morina_holler/online} with $s_\beta=\infty$ (where $\hat{p}=\hat{q}$). Thus, well-posedness of the problems \eqref{eq:limit_problem} and \eqref{eq:all_at_once_mod} follows by \cite[Proposition 26, Appendix C]{morina_holler/online} once we verify Assumption \ref{ass:param_reaction} for the classes of modified parameterized reaction terms $\bar{f}_\theta$, introduced as $\overline{\mathcal{F}}_n^m$ above. This is discussed in detail in Appendix \ref{app:assum_reactions}. In fact, the extendability to a well-defined Nemytskii operator follows by Proposition \ref{prop:f_mod_nemyt}. Weak-strong continuity in Assumption \ref{ass:param_reaction} is a consequence of Proposition \ref{prop:ws_continuity_mod}. The $W^{1,\infty}_{loc}$-regularity of the modified class of parameterizations is proven in Proposition \ref{prop:reg_mod_class}. The remaining statements follow directly from Lemma \ref{lem:auxiliary_result} together with the Corollaries \ref{cor:wellposedness} and \ref{cor:wellposedness2}.
	\end{proof}
	\subsection{Convergence}
	\label{subsec:consistency}
	Building on Subsection \ref{subsec:wellposedness}, we now present the second main contribution of this work, which is a convergence result showing that the unique solution of problem \eqref{eq:limit_problem} can be approximated by solutions of \eqref{eq:all_at_once_mod}. To establish this result, we need to impose additional assumptions, including an approximation capacity condition on the original classes $\mathcal{F}_n^m$ (which is a generalization of \cite[Assumption 5(iv)]{morina_holler/online}).
	 \begin{assumption}[Approximation capacity condition]
	 	\label{ass:approx_cap_cond}
	 	Based on Assumption \ref{ass:param_reaction} we pose the following approximation capacity condition for qualified $f\in W_{loc}^{1,\infty}(\mathbb{R}^{N})^N$ and $U$ as in Assumption \ref{ass:measurements}: There exist $c,\beta >0$ and $\psi:\mathbb{N}\to \mathbb{R}$ such that there exist $\theta^m\in \Theta^m$ with $\Vert \theta^m\Vert \leq \psi(m)$, $\Vert f-f_{\theta^m}\Vert_{L^\infty(U)}\leq cm^{-\beta}$ for $m\in\mathbb{N}$ and
	 	\begin{align}
	 		\label{properties:approx}
	 	 \limsup_{m\to\infty}\Vert \nabla f_{\theta^m}\Vert_{L^\infty(U)}\leq \Vert \nabla f\Vert_{L^\infty(U)}.
	 	\end{align}
	 \end{assumption}
	 \begin{remark}
	 	\label{rem:uppersemi}
	 	The formulation of the approximation capacity condition in \cite[Assumption 5, iv)]{morina_holler/online} differs slightly from Assumption \ref{ass:approx_cap_cond} in additionally requiring
	 	\[
	 		\liminf_{m\to\infty}\Vert \nabla f_{\theta^m}\Vert_{L^\infty(U)}\geq \Vert \nabla f\Vert_{L^\infty(U)}.
	 	\]
	 	This additional requirement is not necessary, as can be seen by inspecting the proof of the main result in \cite[Theorem 27]{morina_holler/online}. There, convergence of the supremum norm of the gradient is used only to obtain \cite[Term (28)]{morina_holler/online} as an upper bound for the objective functional. For this purpose, the condition in \eqref{properties:approx} is clearly sufficient.
	 \end{remark}
	 The approximation capacity condition in Assumption \ref{ass:approx_cap_cond} is e.g. fulfilled by neural network architectures introduced in \cite{Belomestny2023} and \cite{morina_holler/online}. This is discussed in detail in \cite[Propositions 20 and 21]{Morina2024}. We also refer to \cite[Section 2]{Morina2024} for an overview of references that discuss the parameter bound $\Vert\theta^m\Vert \leq \psi(m)$ in Assumption \ref{ass:approx_cap_cond} for feed-forward neural networks.

	Another essential requirement is a stricter space setup based on Assumption \ref{ass_init_set}.
	\begin{assumption}[Strict space setup]
		\label{ass_init_strict}
		Suppose that the coefficients $\hat{m}, \hat{p}$ in Assumption \ref{ass_init_set} fulfill for some $\tilde{m}\in \mathbb{N}_0$ and $1\leq \tilde{p}\leq \infty$ the following conditions:
		\begin{enumerate}[label=\roman*)]
			\item $\tilde{m}\leq \hat{m}$ and $\hat{m}-d/\hat{p}\geq \tilde{m}-d/\tilde{p}$ (additionally with $\hat{p}\leq \tilde{p}$ if equality holds)
			\item $1\leq\hat{p}\leq \frac{d\tilde{p}}{d-\tilde{m}\tilde{p}}$ if $\tilde{m}\tilde{p}< d$ and $1\leq \hat{p}<\infty$ if $\tilde{m}\tilde{p}= d$
			\item $1\leq\tilde{p}\leq \frac{d\hat{p}}{d-(\hat{m}-1)\hat{p}}$ if $(\hat{m}-1)\hat{p}< d$ and $1\leq \tilde{p}<\infty$ if $(\hat{m}-1)\hat{p}= d$
			\item $\tilde{m}\geq 2$ and $\tilde{p}/2\leq \hat{q}\leq \tilde{p}$ with $\frac{1}{\hat{q}}>\frac{2}{\tilde{p}}-\frac{1}{d}$
		\end{enumerate}
		We deduce under the condition in i) that $V\hookrightarrow W^{\tilde{m},\tilde{p}}(\Omega)$ by \cite[Theorem 4.12]{Adams2003}. The compact embeddings $W^{\tilde{m},\tilde{p}}(\Omega)\hookdoubleheadrightarrow L^{\hat{p}}(\Omega)$ and $V\hookdoubleheadrightarrow W^{1,\tilde{p}}(\Omega)$ follow due to \cite[Theorem 6.3]{Adams2003} by ii) and iii), respectively, which implies in particular that $\tilde{V}\hookrightarrow W^{1,\tilde{p}}(\Omega)$. The condition in iv) is required later for regularity properties of \eqref{modification:param}.

	\end{assumption}
	\begin{remark}
		A possible choice of space parameters fulfilling the conditions of Assumption \ref{ass_init_strict} is $d=3, \hat{p}=\tilde{p}=\hat{m}=\tilde{m}=2$ and $1\leq \hat{q}<3/2$.
	\end{remark}

    The final necessary condition for ensuring reconstructibility of the solution to the limit problem involves strict quasipositivity at a sufficiently large rate (see Assumption \ref{ass:strict_quasipositivity}). This condition is linked to the specific choice of transition functions $(\chi^m)_m$, as established in Corollary \ref{cor:approximation_inherit}, which are required to define the physically consistent classes $\overline{\mathcal{F}}_n^m$ in \eqref{eq:mod_class}. We now state our main convergence result: 
	\begin{theorem}
		\label{thm:main_result}
		Let Assumptions \ref{ass_init_set}-\ref{ass:admissible} apply with $f^\dagger$ being qualified for the approximation capacity condition in Assumption \ref{ass:approx_cap_cond} with rate $\beta>0$ where $(D^\dagger, u^\dagger, u_0^\dagger, f^\dagger)$ is the unique solution of \eqref{eq:limit_problem}. Suppose that $f^\dagger$ fulfills Assumption \ref{ass:strict_quasipositivity} with $U$ as in Assumption \ref{ass:admissible}. Let further the instances of $\mathcal{F}_n^m$ in Assumption \ref{ass:param_reaction} be Lipschitz continuous, and the transition functions $(\chi^m)_m$ in the physically consistent classes $\overline{\mathcal{F}}_n^m$ in \eqref{eq:mod_class} be given by $\chi^m = \tilde{h}_{\epsilon_m}$ for $m\in\mathbb{N}$ and $(\epsilon_m)_m=(m^{-\gamma})_m$ for fixed $0<\gamma<\beta$. Then, with a parameter choice $\lambda^m,\mu^m,\nu^m>0$ such that 
\[\lambda^m\to\infty, \quad\mu^m\to \infty,\quad \nu^m\to 0\quad \text{and}\]
\[\lambda^mm^{-\min(\alpha\gamma,\beta) q}=o(1),\quad\, \mu^m\delta(m)^r=o(1),\quad \nu^m\psi(m)=o(1)\]
as $m\to \infty$, and for $(D^m, u^m, u_0^m, \theta^m)$ a solution to \eqref{eq:all_at_once_mod}, it holds true that $D^m\to D^\dagger$ in $[0,\infty[^{N\times L}$, $u^m\rightharpoonup u^\dagger$ in $\mathcal{V}^{N\times L}$, $u_0^m\rightharpoonup u_0^\dagger$ in $H^{N\times L}$ and $\bar{f}_{\theta^m}\to f^\dagger$ in $\mathcal{C}(U)^N$.
	\end{theorem}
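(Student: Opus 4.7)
The plan is to reduce the claim to the convergence result \cite[Theorem 27]{morina_holler/online}, which asserts that solutions of the parameterized learning problems converge to the unique solution of the limit problem provided the underlying parameterized classes satisfy an approximation capacity condition at $f^\dagger$. Since \eqref{eq:all_at_once_mod} uses the modified classes $\overline{\mathcal{F}}_n^m$ rather than $\mathcal{F}_n^m$, the central task is to show that such an approximation capacity condition continues to hold for $\overline{\mathcal{F}}_n^m$, with a (possibly degraded) rate, after which the prescribed scaling of $(\lambda^m,\mu^m,\nu^m)$ is seen to match the one required in that theorem. All remaining well-posedness and regularity prerequisites for $\overline{\mathcal{F}}_n^m$ have already been verified in Theorem \ref{thm:wellposedness} and Appendix \ref{app:assum_reactions}.

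\textbf{Approximation transfer.} By Assumption \ref{ass:approx_cap_cond} applied to $f^\dagger$, I pick $(\theta^m)_m$ with $\|\theta^m\|\leq\psi(m)$ and $\|f^\dagger-f_{\theta^m}\|_{L^\infty(U)}\leq cm^{-\beta}$. The defining identity \eqref{modification:param} can be rewritten as
\begin{align*}
\bar{f}_{\theta^m,n}(u)-f_{\theta^m,n}(u)=-\chi^m(u_n)(P_-\circ f_{\theta^m,n})(u),
\end{align*}
so that the difference is supported in $\Gamma_{\epsilon_m+\delta_m}^n$ with $\delta_m=\epsilon_m/2$. Using that $P_-:\mathbb{R}\to\mathbb{R}$ is $1$-Lipschitz together with Assumption \ref{ass:strict_quasipositivity} on $f^\dagger$, I obtain pointwise on $\Gamma_{\epsilon_m+\delta_m}^n$
\begin{align*}
|(P_-\circ f_{\theta^m,n})(u)|\leq|(P_-\circ f^\dagger_n)(u)|+|f_{\theta^m,n}(u)-f^\dagger_n(u)|\leq c(\epsilon_m+\delta_m)^\alpha+cm^{-\beta}.
\end{align*}
Inserting $\epsilon_m=m^{-\gamma}$ and combining with the triangle inequality yields $\|f^\dagger-\bar{f}_{\theta^m}\|_{L^\infty(U)}\leq cm^{-\min(\alpha\gamma,\beta)}$, which is precisely the source of the condition $\lambda^m m^{-\min(\alpha\gamma,\beta)q}=o(1)$ in the statement.

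\textbf{Gradient control and conclusion.} The main technical obstacle is the gradient bound
\begin{align*}
\limsup_{m\to\infty}\|\nabla\bar{f}_{\theta^m}\|_{L^\infty(U)}\leq\|\nabla f^\dagger\|_{L^\infty(U)},
\end{align*}
needed so that the regularizer stays under control along the comparison sequence built from Assumption \ref{ass:approx_cap_cond}. Differentiating \eqref{modification:param} produces, besides contributions that reduce to convex combinations of $\nabla f_{\theta^m,n}$ and its positive part (which already inherit the $\limsup$ bound from Assumption \ref{ass:approx_cap_cond}), one delicate term of the form $\frac{\dx}{\dx x}\chi^m(u_n)\cdot(P_-\circ f_{\theta^m,n})(u)$. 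By Lemma \ref{lemma:chi_derivative} the first factor grows at rate $\epsilon_m^{-1}=m^{\gamma}$, and this blow-up has to be absorbed by the pointwise bound on the second factor established above. The condition $\alpha>1$ in Assumption \ref{ass:strict_quasipositivity} together with $\gamma<\beta$ ensures exactly this, since the product is of order $m^{\gamma-\alpha\gamma}+m^{\gamma-\beta}$ with both exponents negative; this is the same mechanism as in Corollary \ref{cor:approximation_inherit}, only with the exponent $\gamma$ in place of $\beta/\alpha$. With the approximation capacity condition thereby transferred to $\overline{\mathcal{F}}_n^m$ (noting Remark \ref{rem:uppersemi} regarding its one-sided formulation) and the postulated parameter scaling matching the one required in \cite[Theorem 27]{morina_holler/online}, that theorem yields the claimed convergences $D^m\to D^\dagger$, $u^m\rightharpoonup u^\dagger$, $u_0^m\rightharpoonup u_0^\dagger$, and $\bar{f}_{\theta^m}\to f^\dagger$ in $\mathcal{C}(U)^N$.
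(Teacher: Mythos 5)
Your proposal is correct and follows essentially the same route as the paper: reduction to \cite[Theorem 27]{morina\_holler/online} after verifying Assumptions \ref{ass:param_reaction} and \ref{ass:approx_cap_cond} for the modified classes, with well-posedness taken from Theorem \ref{thm:wellposedness} and the Appendix \ref{app:assum_reactions} propositions. Your ``approximation transfer'' and ``gradient control'' steps are precisely an inlined re-derivation of Proposition \ref{prop:relaxed_capacity} (rate $\min(\alpha\gamma,\beta)$, and absorption of the $\Vert(\chi^m)'\Vert\sim m^{\gamma}$ blow-up by the strict-quasipositivity decay), which is what the paper cites at this point.
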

	\begin{proof}
		Well-posedness of \eqref{eq:limit_problem} and \eqref{eq:all_at_once_mod} follows by Theorem \ref{thm:wellposedness}. The recovery of the unique solution to \eqref{eq:limit_problem} by solutions of \eqref{eq:all_at_once_mod} follows by \cite[Theorem 27]{morina_holler/online} if we can verify Assumption \ref{ass:param_reaction} and Assumption \ref{ass:approx_cap_cond} for the classes of modified parameterized reaction terms $\bar{f}_\theta$, introduced as $\overline{\mathcal{F}}_n^m$ above. This is discussed in detail in Appendix \ref{app:assum_reactions}. The extendability to a well-defined Nemytskii operator follows by Proposition \ref{prop:f_mod_nemyt}. Weak-strong continuity in Assumption \ref{ass:param_reaction} is a consequence of Proposition \ref{prop:ws_continuity_mod}. The $W^{1,\infty}_{loc}$-regularity of the modified class of parameterizations is proven in Proposition \ref{prop:reg_mod_class}. Finally, Assumption \ref{ass:approx_cap_cond} follows from Proposition \ref{prop:relaxed_capacity}.
	\end{proof}
	\begin{remark}
	Using suitable classes $(\mathcal{F}^m)_m$, for example certain neural network architectures discussed in \cite[Propositions 20 and 21]{Morina2024}, Assumption \ref{ass:approx_cap_cond} is satisfied for sufficiently regular functions $f$. Since $f^\dagger$ solves \eqref{eq:limit_problem}, it is Lipschitz continuous, and one can potentially expect even higher regularity as it is a reaction model for the RD system \eqref{rdsystem_meas}. Consequently, $f^\dagger$ can reasonably be expected to satisfy Assumption~\ref{ass:approx_cap_cond}. RD systems arising in practical applications, such as chemistry, naturally preserve nonnegativity of the state, so solutions $f$ of \eqref{rdsystem_meas} are expected to be quasipositive. However, Assumption \ref{ass:strict_quasipositivity} is stronger than mere quasipositivity. It is imposed because deriving \eqref{properties:approx} for the physically consistent classes $(\overline{\mathcal{F}}^m)_m$ requires sufficient decay as formulated in \eqref{eq:strict_quasi} (see Proposition \ref{prop:relaxed_capacity}).
	\end{remark}

	\section{Conclusions}	
	In this work, we addressed the challenge of learning reaction-diffusion (RD) systems from data while ensuring physical consistency and well-posedness of the resulting models. To tackle these challenges, we proposed a framework that incorporates key physical properties, such as mass conservation and quasipositivity, directly into the parameterization of reaction terms. These properties ensure that the learned models preserve non-negativity, adhere to physical principles, and remain well-posed under additional regularity and growth conditions.
	
	Building on a regularization-based model learning framework, we extended existing theoretical results to RD systems with physically consistent parameterizations. Specifically, we proved that solutions to the learning problem converge to a unique, regularization-minimizing solution in the limit of full, noiseless measurements, even when conservation laws and quasipositivity are enforced. Furthermore, we provided approximation results for quasipositive functions, which are essential for constructing parameterizations that align with physical laws.
	
	Our contributions bridge the gap between data-driven modeling and physical consistency, offering a pathway to develop interpretable and reliable models for RD systems. Future work could explore the extension to additional symmetries and investigate efficient numerical implementations for applications (e.g. in chemistry).
\section{Acknowledgement}
This research was funded in whole or in part by the Austrian Science Fund (FWF) 10.55776/F100800.

	\appendix
	\section{Quasipositive functions}
	\label{sec:approx_quaspos}
	In the following section we develop approximation results for continuous and quasipositive functions, a property which needs to be imposed on the reaction terms for well-posedness of the underlying RD system (see condition \eqref{cond:quasipos} in Section \ref{sec:conformal_classes}).
	\begin{definition}[Quasipositive function]
		\label{def:quasipositive}
		Let $N\in \mathbb{N}$ and $F=(F_n)_{n=1}^N\in \mathcal{C}(\Omega, \mathbb{R}^N)$ with $\Omega\subseteq[0,\infty[^N=\{(x_1,\dots,x_N)\in\mathbb{R}^N|~x_n\geq 0,~ 1\leq n\leq N\}$ a measurable set such that $F$ is continuously extendable to the closure $\overline{\Omega}$. We call $F$ quasipositive if
		\[
		F_n(x_1,\dots,x_{n-1},0,x_{n+1},\dots,x_N)\geq 0
		\]
		for $(x_1,\dots,x_{n-1},0,x_{n+1},\dots,x_N)\in\overline{\Omega}$ and $1\leq n\leq N$. We call $F$ strongly quasipositive if $F_n(x)\geq 0$ for $x\in\partial[0,\infty[^N\cap\partial\Omega$. We further call a continuous function $f\in \mathcal{C}(\overline{\Omega}, \mathbb{R})$ strongly quasipositive if $f(x)\geq 0$ for all $x\in\partial[0,\infty[^N\cap\partial\Omega$.
	\end{definition}
	We consider first approximation results for real-valued strongly quasipositive functions. The generalization to quasipositive functions in Subsection \ref{subsec:generalization_quasipositive} is straightforward. Throughout the following considerations assume w.l.o.g. $\partial[0,\infty[^N\cap\partial\Omega\neq \emptyset$. Furthermore, let $\mathcal{D}:\mathbb{R}^N\times\mathbb{R}^N\to [0,\infty[$ be some norm-induced metric, where
	\[
	\mathcal{D}(x,A):=\inf_{a\in A}\mathcal{D}(x,a)
	\]
	for a set $A\subseteq \mathbb{R}^N$ and $x\in \mathbb{R}^N$ with the infimum being defined as infinity if $A=\emptyset$.
	\subsection{Approximation of strongly quasipositive functions}
	The basis of the subsequent results is the following modification technique. For that, note that by $\chi_A$ we denote the characteristic function on a measurable set $A\subseteq \mathbb{R}^N$, attaining the value $1$ on $A$ and vanishing elsewhere. We will also require continuous modifications of certain characteristic functions as introduced next.
	\begin{definition}
		\label{def:modification}
		Let $f\in \mathcal{C}(\Omega, \mathbb{R})$ for measurable $\Omega\subseteq[0,\infty[^N$ and define for $\epsilon>0$
		\[
		\Gamma_\epsilon:=\{x\in \overline{\Omega}~|~\mathcal{D}(x,\partial[0,\infty[^N\cap\partial\Omega)<\epsilon\}.
		\]
		Given $0<\delta<\epsilon$ and $\Omega_\epsilon = \overline{\Omega}\backslash \Gamma_\epsilon$ let $\chi_{\Gamma_\epsilon}^\delta\in\mathcal{C}(\overline{\Omega},\mathbb{R})$ with $0\leq \chi_{\Gamma_{\epsilon}}^{\delta}\leq 1$, 
		\[
		\chi_{\Gamma_{\epsilon}}^{\delta}\big|_{\Gamma_{\epsilon-\delta}}\equiv 1 ~ ~ ~ \text{and} ~ ~ ~  \chi_{\Gamma_{\epsilon}}^{\delta}\big|_{\Omega_{\epsilon+\delta}}\equiv 0.
		\]
		Then we define, implicitly depending on the concrete form of $\chi_{\Gamma_\epsilon}^\delta$, the function
		\[
		f_{\epsilon,\delta}=(P_+\circ f)\cdot\chi_{\Gamma_\epsilon}^\delta+f\cdot(1-\chi_{\Gamma_\epsilon}^\delta)
		\]
		where $P_+(z)=\max(z,0)$ for $z\in\mathbb{R}$.
	\end{definition}
	\begin{remark}
		\label{rem:transition_example}
		An example of a function $\chi_{\Gamma_\epsilon}^\delta$ for $0<\delta<\epsilon$ is as follows. Consider
		\begin{align*}
			h_\delta: ~\mathbb{R}^N\to \mathbb{R}, ~ 
			x\mapsto\begin{cases}
				c\exp((\Vert x\Vert_2^2-\delta^2)^{-1}), &\text{if} ~ \Vert x\Vert_2\leq \delta\\
				0, &\text{otherwise}
			\end{cases}
		\end{align*}
		where $c=c(N,\delta)>0$ such that $\int_{\mathbb{R}^N} h_\delta(x)\dx x=1$. Note that $h_\delta\in \mathcal{C}^\infty(\mathbb{R}^N)$ is compactly supported in the Euclidean ball in $\mathbb{R}^N$ with radius $\delta>0$. The convolution $\chi_{\Gamma_\epsilon}^\delta = \chi_{\Gamma_\epsilon}*h_\delta$ fulfills the requirements in Definition \ref{def:modification}. See also Definition \ref{def:transition} and Subsection \ref{subsec:consistency} for one-dimensional examples.
	\end{remark}
	Based on the modifications in Definition \ref{def:modification} we prove in the following $L^p$-convergence results, starting with the uniform case for $p=\infty$. 
	\begin{theorem}
		\label{mult_inf_est}
		Let $N\in \mathbb{N}$, $\Omega\subseteq[0,\infty[^N$ be measurable and $f\in \mathcal{C}(\Omega, \mathbb{R})$. Suppose that there exists some $s>0$ such that $f$ is uniformly continuous on $\Gamma_s$. Furthermore, suppose that $f$ is strongly quasipositive and $(f_n)_n\subseteq \mathcal{C}(\overline{\Omega},\mathbb{R})$ approximates $f$ uniformly. Given two positive and monotone zero sequences $(\epsilon_n)_n, (\delta_n)_n$ with $0<\delta_n<\epsilon_n$ it holds that $(f_{n,\epsilon_n,\delta_n})_n$ is a sequence of continuous and strongly quasipositive functions approximating $f$ uniformly.
	\end{theorem}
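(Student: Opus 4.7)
My plan is to verify the three assertions separately—continuity, strong quasipositivity, and uniform approximation—working throughout from the explicit formula
\[f_{n,\epsilon_n,\delta_n}=(P_+\circ f_n)\cdot\chi_{\Gamma_{\epsilon_n}}^{\delta_n}+f_n\cdot(1-\chi_{\Gamma_{\epsilon_n}}^{\delta_n})\]
in Definition~\ref{def:modification}, together with the identity $P_+(z)-z=-P_-(z)$ and the fact that $P_{\pm}$ are $1$-Lipschitz. Continuity of each $f_{n,\epsilon_n,\delta_n}$ on $\overline{\Omega}$ is immediate, since it is a sum and product of continuous functions. For strong quasipositivity I would observe that any $x\in\partial[0,\infty[^N\cap\partial\Omega$ satisfies $\mathcal{D}(x,\partial[0,\infty[^N\cap\partial\Omega)=0$, placing $x$ in $\Gamma_{\epsilon_n-\delta_n}$, so $\chi_{\Gamma_{\epsilon_n}}^{\delta_n}(x)=1$, and the formula collapses to $f_{n,\epsilon_n,\delta_n}(x)=(P_+\circ f_n)(x)\geq 0$.

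For uniform convergence I would start from the decomposition
\[f_{n,\epsilon_n,\delta_n}-f=-(P_-\circ f_n)\cdot\chi_{\Gamma_{\epsilon_n}}^{\delta_n}+(f_n-f).\]
Using $0\leq\chi_{\Gamma_{\epsilon_n}}^{\delta_n}\leq 1$ together with the fact that $\chi_{\Gamma_{\epsilon_n}}^{\delta_n}$ vanishes outside $\Gamma_{\epsilon_n+\delta_n}$, and exploiting $1$-Lipschitzness of $P_-$, I would deduce the bound
\[\|f_{n,\epsilon_n,\delta_n}-f\|_{L^\infty(\overline{\Omega})}\leq 2\,\|f_n-f\|_{L^\infty(\overline{\Omega})}+\|P_-\circ f\|_{L^\infty(\Gamma_{\epsilon_n+\delta_n})}.\]
The first summand tends to zero by hypothesis, so the proof reduces to showing that $\|P_-\circ f\|_{L^\infty(\Gamma_{\epsilon_n+\delta_n})}\to 0$ as $n\to\infty$.

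The main obstacle, and the only delicate step, is this last limit. I would handle it by combining strong quasipositivity with uniform continuity of $f$ on $\Gamma_s$: given $\eta>0$, let $\tau>0$ be the corresponding modulus of continuity, and pick $n$ large enough that $\epsilon_n+\delta_n<\min(s,\tau/2)$. For any $x\in\Gamma_{\epsilon_n+\delta_n}$, the infimum defining $\mathcal{D}(x,\partial[0,\infty[^N\cap\partial\Omega)$ yields some $y\in\partial[0,\infty[^N\cap\partial\Omega$ with $\mathcal{D}(x,y)<\tau$; both $x$ and $y$ lie in $\Gamma_s$, so $|f(x)-f(y)|<\eta$, while strong quasipositivity forces $f(y)\geq 0$ and therefore $f(x)>-\eta$. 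Consequently $|P_-(f(x))|\leq\eta$ uniformly in $x\in\Gamma_{\epsilon_n+\delta_n}$ for all large $n$, which together with the previous display gives $\|f_{n,\epsilon_n,\delta_n}-f\|_{L^\infty(\overline{\Omega})}\to 0$ and completes the proof.
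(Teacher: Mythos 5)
Your proposal is correct and follows essentially the same route as the paper's proof: continuity and strong quasipositivity are handled identically, and your key step—controlling $\Vert P_-\circ f\Vert_{L^\infty(\Gamma_{\epsilon_n+\delta_n})}$ by pairing each $x$ in the thin boundary layer with a nearby point $y\in\partial[0,\infty[^N\cap\partial\Omega$ where $f(y)\geq 0$ and invoking uniform continuity—is exactly the paper's argument for $\vert P_+(f(x))-f(x)\vert<\epsilon$, since $P_+(z)-z=-P_-(z)$. The only difference is cosmetic: you rewrite the modification via the negative part before estimating, while the paper splits $f_{n,\epsilon_n,\delta_n}-f$ into the two $\chi$-weighted terms and then applies the triangle inequality with $P_+\circ f$.
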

	\begin{proof}
		Certainly, continuity of the $f_{n,\epsilon_n, \delta_n}$ follows as for continuous functions $g:\Omega\to\mathbb{R}$ the modifications $g_{\epsilon,\delta} = (P_+\circ g)\cdot\chi_{\Gamma_{\epsilon}}^\delta+g\cdot(1-\chi_{\Gamma_{\epsilon}}^\delta)$ are continuous
		for $0<\delta<\epsilon$ by continuity of $P_+$, $g$ and $\chi_{\Gamma_{\epsilon}}^\delta$. Due to the estimation
		\[
		g_{\epsilon,\delta}(x) = (P_+\circ g)(x)\chi_{\Gamma_{\epsilon}}^\delta(x)=P_+(g(x))\geq 0
		\]
		for $x\in \partial[0,\infty[^N\cap\partial\Omega$ we derive strong quasipositivity of the $f_{n,\epsilon_n,\delta_n}$. Finally, we verify that $f_{n,\epsilon_n,\delta_n}$ converges to $f$ uniformly in $\Omega$ as $n\to \infty$. As $f = f\chi_{\Gamma_{\epsilon_n}}^{\delta_n}+f(1-\chi_{\Gamma_{\epsilon_n}}^{\delta_n})$ the triangle inequality, $0\leq\chi_{\Gamma_{\epsilon_n}}^{\delta_n}\leq 1$ and $\supp(\chi_{\Gamma_{\epsilon_n}}^{\delta_n})\subseteq \overline{\Gamma_{\epsilon_n+\delta_n}}$ imply that
		\begin{align*}
			\Vert f_{n,\epsilon_n,\delta_n}-f\Vert_{L^\infty(\Omega)}
			&\leq \Vert (P_+\circ f_n-f)\chi_{\Gamma_{\epsilon_n}}^{\delta_n}\Vert_{L^\infty(\Omega)}+\Vert (f_n-f)(1-\chi_{\Gamma_{\epsilon_n}}^{\delta_n})\Vert_{L^\infty(\Omega)}\\
			&\leq \Vert P_+ \circ f_n -f\Vert_{L^\infty(\Gamma_{\epsilon_n+\delta_n})}+\Vert f_n-f\Vert_{L^\infty(\Omega)}.
		\end{align*}
		The second term approaches zero by assumption. As the first term is bounded by
		\begin{align*}
			\Vert P_+\circ f_n -f\Vert_{L^\infty(\Gamma_{\epsilon_n+\delta_n})}&\leq \Vert P_+ \circ f_n -P_+\circ f\Vert_{L^\infty(\Gamma_{\epsilon_n+\delta_n})}+\Vert P_+\circ f -f\Vert_{L^\infty(\Gamma_{\epsilon_n+\delta_n})}\\
			&\leq \Vert f_n -f\Vert_{L^\infty(\Gamma_{\epsilon_n+\delta_n})}+\Vert P_+\circ f -f\Vert_{L^\infty(\Gamma_{\epsilon_n+\delta_n})}
		\end{align*}
		it suffices to show that for all $\epsilon>0$ there exists some $m\in\mathbb{N}$ such that for $n\geq m$
		\begin{align}
			\label{remaining_uniform}
			\vert P_+ (f(x))-f(x)\vert <\epsilon
		\end{align}
		for all $x\in \Gamma_{\epsilon_n+\delta_n}$. For fixed $\epsilon>0$ we choose $m$ large enough such that for $n\geq m$ it holds true that $\vert f(x)-f(y)\vert < \epsilon$ for $x,y\in \Gamma_{\epsilon_n+\delta_n}$ with $\mathcal{D}(x,y)<\epsilon_n+\delta_n$. This is possible due to uniform continuity of $f$ on a sufficiently small boundary section of $\Omega$ by assumption. In fact, for $x\in \Gamma_{\epsilon_n+\delta_n}$ with $f(x)\geq 0$ it holds $P_+(f(x))=f(x)$ implying \eqref{remaining_uniform}. In case $f(x)<0$ choose $y\in \partial[0,\infty[^N\cap\partial\Omega$ such that $\mathcal{D}(x,y)<\epsilon_n+\delta_n$. Then since $f(y)\geq 0$ we have
		\[
		0>f(x)\geq f(x)-f(y)\geq -\vert f(x)-f(y)\vert >-\epsilon 
		\]
		implying again \eqref{remaining_uniform}.
		Thus, the $f_{n,\epsilon_n, \delta_n}$ converge to $f$ uniformly in $\Omega$ as $n\to\infty$.
	\end{proof}
	We acquire the following result on basis of the previous considerations where it is worth noting that $L^p$-convergence for $1\leq p<\infty$ may be derived even if $f$ is not uniformly continuous on some sufficiently small boundary section. In addition, we take the subsequent assumption for granted in view of the next result.
	\begin{assumption}
		\label{assump}
		\normalfont There exist $s>0$ and a function $\varphi:[0,s[\to\mathbb{R}$, which is right-continuous in zero with $ \varphi(0)=0$, fulfilling for $0\leq x<s$ the estimation
		\[
		\vert \Gamma_x\vert \leq \varphi(x).
		\]
	\end{assumption}
	\begin{theorem}
		\label{mult_p_est}
		Let $N\in \mathbb{N}$, $\Omega\subseteq[0,\infty[^N$ be measurable and $f\in \mathcal{C}(\Omega, \mathbb{R})$ with $\Vert f\Vert_{L^p(\Omega)}<\infty$. Furthermore, suppose that $f$ is uniformly bounded on some $\Gamma_s$ with $\vert\Gamma_s\vert<\infty$ and let Assumption \ref{assump} hold true. Assume that $f$ is strongly quasipositive and $(f_n)_n\subseteq \mathcal{C}(\overline{\Omega},\mathbb{R})$ approximate $f$ in $L^p(\Omega)$. Given two positive and monotone zero sequences $(\epsilon_n)_n, (\delta_n)_n$ with $0<\delta_n<\epsilon_n$ it holds that $(f_{n,\epsilon_n,\delta_n})_n$ is a sequence of continuous and strongly quasipositive functions approximating $f$ in $L^p(\Omega)$.
	\end{theorem}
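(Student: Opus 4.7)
The plan is to follow the structure of the proof of Theorem \ref{mult_inf_est}, reusing its arguments for continuity and strong quasipositivity verbatim, and then to replace the uniform-continuity step by a domination argument based on the boundedness of $f$ on $\Gamma_s$ together with Assumption \ref{assump}.

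First I would note that continuity of each $f_{n,\epsilon_n,\delta_n}$ follows from the continuity of $P_+$, $f_n$, and $\chi_{\Gamma_{\epsilon_n}}^{\delta_n}$, exactly as in the proof of Theorem \ref{mult_inf_est}. Strong quasipositivity is also inherited in the same way: at any $x\in \partial[0,\infty[^N\cap \partial\Omega$ we have $\chi_{\Gamma_{\epsilon_n}}^{\delta_n}(x)=1$, so $f_{n,\epsilon_n,\delta_n}(x) = P_+(f_n(x)) \geq 0$.

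For the $L^p$-convergence, I would write $f = f\chi_{\Gamma_{\epsilon_n}}^{\delta_n} + f(1-\chi_{\Gamma_{\epsilon_n}}^{\delta_n})$ and use the triangle inequality together with $0 \le \chi_{\Gamma_{\epsilon_n}}^{\delta_n} \le 1$ and $\supp(\chi_{\Gamma_{\epsilon_n}}^{\delta_n}) \subseteq \overline{\Gamma_{\epsilon_n+\delta_n}}$ to obtain
\[
\|f_{n,\epsilon_n,\delta_n} - f\|_{L^p(\Omega)}
\leq \|P_+\circ f_n - f\|_{L^p(\Gamma_{\epsilon_n+\delta_n})} + \|f_n - f\|_{L^p(\Omega)}.
\]
The last term vanishes by assumption. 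For the first term I would split further via
\[
\|P_+\circ f_n - f\|_{L^p(\Gamma_{\epsilon_n+\delta_n})}
\leq \|f_n - f\|_{L^p(\Omega)} + \|P_+\circ f - f\|_{L^p(\Gamma_{\epsilon_n+\delta_n})},
\]
using that $P_+$ is $1$-Lipschitz.

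The main (and in fact only nontrivial) obstacle is to show that $\|P_+\circ f - f\|_{L^p(\Gamma_{\epsilon_n+\delta_n})}\to 0$; this is the step in which the uniform continuity used in Theorem \ref{mult_inf_est} is unavailable. The idea is to replace pointwise smallness by a domination plus shrinking-measure argument: since $f$ is uniformly bounded on $\Gamma_s$ by some $M>0$ and $|P_+(z)-z| = |\min(z,0)| \leq |z|$, we have the pointwise bound $|P_+\circ f - f| \le M$ on $\Gamma_s$. For $n$ sufficiently large, $\epsilon_n+\delta_n < s$ so $\Gamma_{\epsilon_n+\delta_n}\subseteq \Gamma_s$, and therefore
\[
\|P_+\circ f - f\|_{L^p(\Gamma_{\epsilon_n+\delta_n})}^p
\leq M^p\,|\Gamma_{\epsilon_n+\delta_n}|
\leq M^p\,\varphi(\epsilon_n+\delta_n).
\]
Since $\epsilon_n+\delta_n\to 0^+$ and $\varphi$ is right-continuous at zero with $\varphi(0)=0$, the right-hand side tends to zero. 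Assembling all three estimates yields $\|f_{n,\epsilon_n,\delta_n} - f\|_{L^p(\Omega)}\to 0$, completing the proof.
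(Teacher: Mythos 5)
Your proposal is correct and follows essentially the same route as the paper: the identical decomposition via $\chi_{\Gamma_{\epsilon_n}}^{\delta_n}$, the $1$-Lipschitz bound for $P_+$, and then control of $\Vert P_+\circ f - f\Vert_{L^p(\Gamma_{\epsilon_n+\delta_n})}$ by the uniform bound on $\Gamma_s$ times $\vert\Gamma_{\epsilon_n+\delta_n}\vert^{1/p}$, which vanishes by Assumption \ref{assump}. The only cosmetic difference is that the paper writes the last term using the characteristic function of $[f<0]\cap\Gamma_{\epsilon_n+\delta_n}$, which amounts to the same estimate.
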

	\begin{proof}
		We have similarly as in Theorem \ref{mult_inf_est} that
		\begin{align*}
			\Vert f_{n,\epsilon_n,\delta_n}-f\Vert_{L^p(\Omega)} & \leq \Vert (P_+\circ f_n-f)\chi_{\Gamma_{\epsilon_n}}^{\delta_n}\Vert_{L^p(\Omega)}+\Vert (f_n-f)(1-\chi_{\Gamma_{\epsilon_n}}^{\delta_n})\Vert_{L^p(\Omega)}\\
			&\leq \Vert P_+\circ f_n-f\Vert_{L^p(\Gamma_{\epsilon_n+\delta_n})}+\Vert f_n-f\Vert_{L^p(\Omega)}
		\end{align*}
		The second term approaches zero by assumption. As the first term is bounded by
		\begin{align*}
			\Vert P_+\circ f_n-f\Vert_{L^p(\Gamma_{\epsilon_n+\delta_n})}&\leq \Vert P_+\circ f_n-P_+\circ f\Vert_{L^p(\Gamma_{\epsilon_n+\delta_n})}+\Vert P_+\circ f-f\Vert_{L^p(\Gamma_{\epsilon_n+\delta_n})}\\
			&\leq \Vert f_n-f\Vert_{L^p(\Gamma_{\epsilon_n+\delta_n})}+\Vert f\Vert_{L^\infty(\Gamma_{\epsilon_n+\delta_n})}\Vert\chi_{[f<0]\cap\Gamma_{\epsilon_n+\delta_n}}\Vert_{L^p(\Omega)}\\
			&\leq \Vert f_n-f\Vert_{L^p(\Omega)}+\Vert f\Vert_{L^\infty(\Gamma_{\epsilon_n+\delta_n})}\vert \Gamma_{\epsilon_n+\delta_n}\vert^{1/p}
		\end{align*}
		it converges to zero as $n\to\infty$ by the convergence of $f_n$ to $f$ in $L^p(\Omega)$, uniform boundedness of $f$ on $\Gamma_s$ and Assumption \ref{assump}.
		Thus, we derive convergence of the sequence $(f_{n,\epsilon_n, \delta_n})_n$ to $f$ in $L^p(\Omega)$.
	\end{proof}
	We show next that Assumption \ref{assump} may be proven rigorously for the $s$ given in the assumptions of Theorem \ref{mult_p_est}. Additionally, we will verify further properties of an exemplary choice of $\varphi$.
	
	\begin{lemma}
		\label{assproof}
		Let $\Omega$ be a measurable set and assume there exists some $s>0$ such that $\vert \Gamma_s\vert<\infty$. Then the function $\varphi:[0,s[\to\mathbb{R}$ defined for $0\leq x<s$ by
		\[
		\varphi(x):=\vert \Gamma_x\vert	
		\]
		is increasing, left-continuous in $]0,s[$, right-continuous in $0$ and fulfills $\varphi(0)=0$.
	\end{lemma}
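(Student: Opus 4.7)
The plan is to derive the four claims directly from standard continuity properties of Lebesgue measure, exploiting the fact that the family $(\Gamma_x)_{x\in[0,s[}$ is nested and monotone in $x$. Observe first that $\Gamma_x\subseteq \Gamma_y$ whenever $x\leq y$, since the defining condition $\mathcal{D}(z,\partial[0,\infty[^N\cap\partial\Omega)<x$ is strictly stronger than $\mathcal{D}(z,\partial[0,\infty[^N\cap\partial\Omega)<y$. From this inclusion, monotonicity of Lebesgue measure immediately yields that $\varphi$ is increasing. Moreover, $\Gamma_0=\emptyset$ because no point satisfies a strict inequality $\mathcal{D}(\cdot,\cdot)<0$, so $\varphi(0)=|\emptyset|=0$.

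For left-continuity at a point $x\in\,]0,s[$, I would pick any sequence $x_k\nearrow x$ with $x_k<x$ and argue that $\bigcup_{k}\Gamma_{x_k}=\Gamma_x$. The inclusion $\subseteq$ is clear by monotonicity. For the reverse, given $z\in\Gamma_x$, i.e., $\mathcal{D}(z,\partial[0,\infty[^N\cap\partial\Omega)<x$, there exists some $k$ with $\mathcal{D}(z,\partial[0,\infty[^N\cap\partial\Omega)<x_k$, hence $z\in\Gamma_{x_k}$. Since the $\Gamma_{x_k}$ form an increasing sequence, continuity of Lebesgue measure from below gives $\varphi(x_k)=|\Gamma_{x_k}|\nearrow|\Gamma_x|=\varphi(x)$.

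For right-continuity at $0$, take any sequence $x_k\searrow 0$ in $\,]0,s[$. Using monotonicity backwards, $(\Gamma_{x_k})_k$ is a decreasing sequence of measurable sets with $|\Gamma_{x_1}|\leq |\Gamma_s|<\infty$, so continuity of Lebesgue measure from above applies and gives $\lim_k\varphi(x_k)=|\bigcap_k\Gamma_{x_k}|$. The key observation is that $\bigcap_k\Gamma_{x_k}\subseteq\{z\in\overline{\Omega}:\mathcal{D}(z,\partial[0,\infty[^N\cap\partial\Omega)=0\}$. Since $\partial[0,\infty[^N\cap\partial\Omega$ is closed (as an intersection of closed sets), points at zero distance belong to it, so the intersection is contained in $\partial[0,\infty[^N\cap\partial\Omega\subseteq\bigcup_{n=1}^N\{z\in\mathbb{R}^N:z_n=0\}$, which has $N$-dimensional Lebesgue measure zero. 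Hence $\lim_k\varphi(x_k)=0=\varphi(0)$.

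The argument is essentially routine measure theory; no single step is a serious obstacle, but the one point to be careful about is the right-continuity at $0$, where one has to justify that the intersection $\bigcap_{x>0}\Gamma_x$ is Lebesgue-negligible. This uses only the fact that $\partial[0,\infty[^N$ is contained in a finite union of coordinate hyperplanes, which are $(N-1)$-dimensional and therefore have $N$-dimensional Lebesgue measure zero, so the assumed finiteness $|\Gamma_s|<\infty$ is the only nontrivial hypothesis needed to invoke the downward continuity of measure.
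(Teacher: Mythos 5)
Your proof is correct and follows essentially the same route as the paper's: monotonicity of measure for the increasing claim, $\Gamma_0=\emptyset$ for $\varphi(0)=0$, continuity of measure from below via $\bigcup_{x<x_0}\Gamma_x=\Gamma_{x_0}$ for left-continuity, and continuity from above (enabled by $|\Gamma_s|<\infty$) together with the identification of $\bigcap_{x>0}\Gamma_x$ with the Lebesgue-null set $\partial[0,\infty[^N\cap\partial\Omega$ for right-continuity at $0$. The only difference is cosmetic: you argue along sequences and spell out why the limiting boundary set is null (containment in finitely many coordinate hyperplanes), a fact the paper asserts without elaboration.
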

	\begin{proof}
		As for $0\leq x<s$ the measurable sets $\Gamma_x\subseteq \Gamma_s$ have finite measure and $\vert\Gamma_x\vert\geq 0$ it holds true that $\varphi(x)$ is well-defined for all $0\leq x<s$. Note that measurability follows directly by continuity of the distance function to the boundary $\partial[0,\infty[^N\cap\partial\Omega$ and the fact that $\Gamma_x$ is a corresponding reduced levelset. In case $x=0$ we have $\varphi(0)=\vert\Gamma_0\vert=\vert\emptyset\vert=0$. We prove next monotonicity and the continuity properties. For $0\leq x<y<s$ by definition it obviously holds that $\Gamma_x\subseteq \Gamma_y$ which implies
		\[
		\varphi(x)=\vert\Gamma_x\vert\leq \vert\Gamma_y\vert=\varphi(y).
		\]
		To show left-continuity it suffices to show that for all $0< x_0<s$ 
		\[
		\lim_{x\to x_0^-}\vert\Gamma_x\vert = \vert\Gamma_{x_0}\vert. 
		\]
		This follows by the continuity property of the measure as the sets are increasing
		\[
		\lim_{x\to x_0^-}\vert\Gamma_x\vert =\bigg\vert\bigcup_{x<x_0}\Gamma_x\bigg\vert =\vert \Gamma_{x_0}\vert.
		\]
		The argument for the last equality is that for $z\in \Gamma_{x_0}$ it holds $\mathcal{D}(z,\partial[0,\infty[^N\cap\partial\Omega)<x_0$ and hence, also $\mathcal{D}(z,\partial[0,\infty[^N\cap\partial\Omega)<x$ for some $x<x_0$. Right-continuity in zero follows by $\vert \Gamma_{0}\vert=0$ and the chain of equalities
		\[
		\lim_{x\to 0^+}\vert\Gamma_x\vert =\big\vert\bigcap_{0<x<s}\Gamma_x\big\vert=\vert \{z\in\overline{\Omega}: \mathcal{D}(z,\partial[0,\infty[^N\cap\partial\Omega)\leq 0\}\vert=\vert\partial[0,\infty[^N\cap\partial\Omega\vert=0.
		\]
		Note that the second identity follows since for $z$ with $\mathcal{D}(z,\partial[0,\infty[^N\cap\partial\Omega)<x$ for all $x>0$ it necessarily holds true that $\mathcal{D}(z,\partial[0,\infty[^N\cap\partial\Omega)\leq 0$. Further note that for right-continuity we required the finite measure property of $\vert\Gamma_s\vert$.
	\end{proof}
	\begin{example}
		In the setup of the previous result consider $\Omega = [0,1]^N$. %
		Then
		\[
		\varphi(x)= \vert \Gamma_x\vert = 1-(1-\min(x,1))^N\quad \text{for} ~ x\in [0,\infty[.
		\]
	\end{example}
	\subsection{Extension to general domains}
	\label{subsec:ext_gen_dom}
	Motivated by the study of general domains that may attain infinite measure, such as \(\Omega = [0,\infty[^N\), we aim to extend the approximation results discussed above to such settings. Concerning uniform convergence, recall that no restriction on the domain was required; consequently, this result remains valid for domains of infinite measure. For \(L^p\)-convergence with \(1 \leq p < \infty\), however, we assumed the existence of some \(s > 0\) such that the boundary layer \(\Gamma_s\) has finite measure, a condition weaker than requiring the 
	entire domain to have finite measure. Difficulties arise for domains such as 
	\(\Omega = [0,\infty[^N\), where the sets \(\Gamma_\epsilon\) defined in 
	Theorem \ref{mult_p_est} are constructed in a way that is essentially uniform with respect to the underlying distance topology and therefore fail to have finite measure. To overcome this issue, we redefine the sets \(\Gamma_\epsilon\) so that the resulting boundary layers approximate \(\partial [0,\infty[^N \cap \partial \Omega\) more accurately, with diminishing error for points increasingly far from the origin.

	In view of Lemma \ref{assproof}, for the derived results to hold, it is necessary that the class of sectional boundaries $(\Gamma_x)_{0\leq x<s}$ for some $s>0$ fulfill the following conditions:
	\begin{tabbing}
		\hspace{0.2cm}\=\hspace{3.3cm}\=\kill
		\>\textit{(Monotonicity)}\>for $0\leq x<y<s$ it holds $\Gamma_x\subseteq\Gamma_y$\\
		\>\textit{(Consistency)}\> the inclusion $\Gamma_0\subseteq\partial[0,\infty[^N\cap\partial\Omega \subseteq \Gamma_x$ for $0<x<s$ is valid\\
		\>\textit{(Finiteness)}\> we have $\vert \Gamma_s\vert <\infty$\\
		\>\textit{(Continuity)}\> it holds $\vert\bigcap_{0<x<s}\Gamma_x\vert = 0$
	\end{tabbing}
	Note that these properties are essential to obtain right-continuity of the function $x\mapsto\vert\Gamma_x\vert$ in zero. The monotonicity and consistency conditions are natural requirements when characterizing an approximation of the boundary $\partial[0,\infty[^N\cap\partial\Omega$.\\
	
	For that, let $f:[0,\infty[\to[0,\infty[$ with
	\begin{align*}
		f(x)=
		\begin{cases}
			\sqrt{x}, & 0\leq x<1\\
			x^2, & 1\leq x
		\end{cases}.
	\end{align*}
	Define further for $\epsilon>0$ the implicit nonlinear boundary sections 
	\[
	\Gamma_\epsilon:= \{x\in\overline{\Omega} ~ \vert ~ \prod_{n=1}^N f(x_n)\leq \epsilon\}.
	\]
	Next we show that the class $(\Gamma_\epsilon)_{0\leq \epsilon<1}$ fulfills the previously discussed properties:\\
	
	\textit{(Monotonicity):} We have for $0\leq \epsilon<\delta<1$ and arbitrary $x\in \Gamma_\epsilon$ that $\prod_{n=1}^N f(x_n)\leq \epsilon<\delta$ and thus, the inclusion $x\in \Gamma_\delta$.\\
	
	\textit{(Consistency):} For $x\in\overline{\Omega}$ such that $x_m=0$ for some $1\leq m\leq N$ it immediately follows that $\prod_{n=1}^N f(x_n)=0\leq \epsilon$ for all $\epsilon>0$. Furthermore,
	\[
	\Gamma_0 = \{x\in\overline{\Omega}~\vert ~ \prod_{n=1}^N f(x_n) = 0\} = \{x\in\overline{\Omega}~\vert ~ \exists 1\leq m\leq N: x_m = 0\} = \partial[0,\infty[^N\cap\partial\Omega.
	\]
	
	\textit{(Continuity):} If $\prod_{n=1}^N f(x_n)\leq \epsilon$ for all $\epsilon>0$, it holds true that $x\in \partial[0,\infty[^N\cap\partial \Omega$.\\
	
	\textit{(Finiteness):} We introduce the set $\{-1,1\}^N = \{z\in\mathbb{N}^N~\vert ~ z_n \in \{-1,1\}\}$ and the Hadamard-Product between vectors (entrywise product) by $\odot$. The idea is to decompose $[0,\infty[^N$ into disjoint subsets where we consider for each entry whether it is larger or smaller than one. With this and $\Gamma_1 \subseteq [0,\infty[^N$ we obtain
	\[
	\text{vol}_N(\Gamma_1) = \int_{\Gamma_1}\dx x = \sum_{B\in \{-1,1\}^N}\int_{\Gamma_1 \cap[B\odot x > B]}\dx x \leq \sum_{B\in \{-1,1\}^N}\int_{[B\odot x > B]\cap[\prod_{n=1}^N f(x_n)\leq 1]}\dx x.
	\]
	By symmetry of the summands it suffices to verify for fixed $B\in\{-1,1\}^N$ with $p:=\#\{n~\vert ~ B_n = 1\}$, $q:=N-p$, w.l.o.g. $x_1, \dots, x_p\geq 1$, $y_n:= x_{p+n}$ for $1\leq n\leq q$
	\[
	I :=\int_{(\prod_{i=1}^p x_i)^2(\prod_{j=1}^q y_j)^{1/2}\leq 1, x_i\geq 1, 0\leq y_j<1}\dx x\dx y<\infty
	\]
	W.l.o.g. we assume $p,q\geq 1$ (if $p=0$ then $I= 1$ and for $q=0$ we have $I=0$). For that, we show first that
	\begin{equation}
		\label{Plogineq}
		I \lesssim \int_{\prod_{n=1}^q y_n\leq 1, 0\leq y_n<1} (\prod_{n=1}^q y_n)^{-1/4} (-\log\prod_{n=1}^q y_n)^{p-1}\dx y,
	\end{equation}
	where $"\lesssim"$ has to be understood as $\leq$ up to some multiplicative positive constant. Before we prove this claim we first consider the following auxiliary Lemmata.
	\begin{lemma}
		\label{powerestim}
		For $n,k\in\mathbb{N}$ and reals $a_1, \dots, a_n$ it holds 
		\[
		\vert\sum_{i=1}^n a_i\vert^k \leq n^{k-1}\sum_{i=1}\vert a_i\vert^k.	
		\]
	\end{lemma}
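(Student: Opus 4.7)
The plan is to reduce the claim to a classical convexity estimate in two steps. First, I would apply the triangle inequality to pass from signed reals to non-negative ones, namely
\begin{equation*}
\Bigl\lvert\sum_{i=1}^n a_i\Bigr\rvert^{k} \;\leq\; \Bigl(\sum_{i=1}^n \lvert a_i\rvert\Bigr)^{k}.
\end{equation*}
Setting $b_i := \lvert a_i\rvert \geq 0$, it then suffices to show $(\sum_{i=1}^n b_i)^{k} \leq n^{k-1}\sum_{i=1}^n b_i^{k}$.

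For this second step, I would invoke Jensen's inequality for the convex function $\varphi:[0,\infty[\to[0,\infty[$, $\varphi(x)=x^{k}$ (convex for $k\in\mathbb{N}$), applied with the uniform probability weights $1/n$ on $\{1,\dots,n\}$. This yields
\begin{equation*}
\Bigl(\frac{1}{n}\sum_{i=1}^n b_i\Bigr)^{k} \;\leq\; \frac{1}{n}\sum_{i=1}^n b_i^{k},
\end{equation*}
and multiplying by $n^{k}$ gives exactly $\bigl(\sum_{i=1}^n b_i\bigr)^{k} \leq n^{k-1}\sum_{i=1}^n b_i^{k}$. Combining this with the first step proves the lemma. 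An equivalent route is to apply H\"older's inequality to the pair of vectors $(1,\dots,1)$ and $(b_1,\dots,b_n)$ with conjugate exponents $k/(k-1)$ and $k$, which delivers $\sum_{i=1}^n b_i \leq n^{(k-1)/k}\bigl(\sum_{i=1}^n b_i^{k}\bigr)^{1/k}$, and raising to the $k$-th power yields the same bound.

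The boundary case $k=1$ is trivial, as $n^{k-1}=1$ and the statement then coincides with the triangle inequality itself. Since every step is entirely classical, no genuine obstacle is expected in this lemma; it functions purely as a preparatory technical estimate to be combined with the subsequent auxiliary results in the proof of \eqref{Plogineq}.
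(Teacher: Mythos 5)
Your proposal is correct and essentially matches the paper: the paper proves the lemma precisely via H\"older's inequality applied to the vectors $(1,\dots,1)$ and $(a_1,\dots,a_n)$ with conjugate exponents $k/(k-1)$ and $k$, which is the alternative route you mention, and your primary Jensen-with-uniform-weights argument is just an equivalent rephrasing of the same convexity estimate. Your explicit handling of the trivial case $k=1$ is a small bonus the paper leaves implicit.
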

	\begin{proof}
		Let $\mathbf{1}$ be the array in $\mathbb{R}^n$ consisting of ones in each entry and $a = (a_i)_{i=1}^n$. Then with $\Vert \cdot\Vert_p$ denoting the $l^p$-Norm in $\mathbb{R}^n$ we have by Hölders inequality
		\[
		\vert\sum_{i=1}^na_i\vert \leq \Vert \mathbf{1}\Vert_{\frac{k}{k-1}}\Vert a\Vert_k = n^{\frac{k-1}{k}}(\sum_{i=1}^n \vert a_i\vert^k)^{1/k}.\qedhere
		\]\end{proof}
	\begin{lemma}
		\label{loglimit}
		For all $k\in\mathbb{N}_0$ it holds $\lim_{x\to 0^+} x^{3/4}\log(x)^k = 0$.
	\end{lemma}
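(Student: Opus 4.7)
The plan is to reduce this to a standard fact that exponentials dominate polynomials. First I would dispose of the trivial case $k=0$, where $\lim_{x\to 0^+} x^{3/4} = 0$ is immediate.

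For $k \geq 1$, the substitution $y = -\log(x)$ (so that $x = e^{-y}$ and $y \to +\infty$ as $x \to 0^+$) converts the expression into
\[
x^{3/4} \log(x)^k = e^{-3y/4} \cdot (-y)^k = (-1)^k \, y^k \, e^{-3y/4}.
\]
Taking absolute values, it suffices to show $y^k e^{-3y/4} \to 0$ as $y \to \infty$, which is the classical fact that any polynomial is eventually dominated by an exponential with positive rate (provable, e.g., by applying L'Hôpital's rule $k$ times to $y^k / e^{3y/4}$, or directly by expanding $e^{3y/4} \geq (3y/4)^{k+1}/(k+1)!$ from the exponential series).

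There is no real obstacle here; the only care needed is sign-handling for odd $k$, which the absolute value bound sidesteps cleanly.
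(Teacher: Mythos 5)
Your proposal is correct. It takes a somewhat different route from the paper: the paper proves the lemma by induction on $k$, writing $x^{3/4}\log(x)^k=\log(x)^k/x^{-3/4}$ and applying L'H\^opital's rule once per induction step directly in the variable $x$, so that the limit for exponent $k+1$ reduces to the limit for exponent $k$. You instead substitute $y=-\log(x)$ to transform the expression into $(-1)^k y^k e^{-3y/4}$ and appeal to the standard fact that exponential decay dominates polynomial growth. If you settle that fact by applying L'H\^opital $k$ times to $y^k/e^{3y/4}$, your argument is essentially the paper's induction in a change of variable; but your alternative via the series bound $e^{3y/4}\geq (3y/4)^{k+1}/(k+1)!$ is genuinely more elementary, avoiding both induction and differentiation and even giving the explicit rate $y^k e^{-3y/4}=O(1/y)$. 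The paper's version is self-contained and stays in the variable $x$ (which matches how the lemma is used in Lemma \ref{intlog}); yours makes the underlying mechanism more transparent and quantitative. Your sign handling via absolute values for odd $k$ is fine.
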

	\begin{proof}
		This follows by induction. In the induction step L'Hospital's rule yields
		\[
		\lim_{x\to 0^+} \frac{\log(x)^{k+1}}{x^{-3/4}} = \lim_{x\to 0^+} \frac{-4(k+1)\log(x)^k}{3x^{-3/4}}= -\frac{4(k+1)}{3}\lim_{x\to 0^+} x^{3/4}\log(x)^k = 0,
		\]
		where the last equality is true by induction hypothesis.
	\end{proof}
	\begin{lemma}
		\label{intlog}
		For all $k\in\mathbb{N}$ it holds true that $\int_0^1(-\log(x))^kx^{-1/4}\dx x <\infty$.
	\end{lemma}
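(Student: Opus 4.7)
The plan is to reduce the question to integrability near $x = 0$, which is the only potentially singular point (at $x = 1$ the integrand vanishes). Since the integrand is continuous and nonnegative on $]0,1]$, finiteness of the integral follows as soon as one controls the behavior as $x \to 0^+$. I would proceed by the substitution $u = -\log x$, so that $x = e^{-u}$, $\dx x = -e^{-u}\dx u$, and the bounds transform to $u \in ]0,\infty[$. The integral rewrites as
\[
\int_0^1(-\log x)^k x^{-1/4}\dx x = \int_0^\infty u^k e^{u/4}e^{-u}\dx u = \int_0^\infty u^k e^{-3u/4}\dx u,
\]
which is finite because the exponential decay dominates any polynomial factor. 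A further rescaling $v = 3u/4$ identifies the value as $(4/3)^{k+1}\Gamma(k+1) = (4/3)^{k+1}k!$, giving an explicit sharp constant if desired.

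Alternatively, and more in line with the paper's use of the preceding lemmata, one can argue by induction on $k$ using integration by parts. Setting $I_k := \int_0^1(-\log x)^k x^{-1/4}\dx x$, choose $u = (-\log x)^k$ and $\dx v = x^{-1/4}\dx x$, giving $\dx u = -k(-\log x)^{k-1}x^{-1}\dx x$ and $v = \tfrac{4}{3}x^{3/4}$. The boundary term $\bigl[(-\log x)^k \tfrac{4}{3}x^{3/4}\bigr]_0^1$ vanishes at $x = 1$ trivially and at $x = 0$ precisely by Lemma \ref{loglimit}. This produces the recursion $I_k = \tfrac{4k}{3}I_{k-1}$, and since the base case $I_0 = \int_0^1 x^{-1/4}\dx x = 4/3$ is finite, induction yields $I_k < \infty$ for all $k \in \mathbb{N}$.

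The only non-routine point is the verification that the boundary contribution at $x = 0$ in the integration-by-parts step vanishes, and this is exactly the content of Lemma \ref{loglimit}, whose exponent $3/4$ was tailored to match the antiderivative of $x^{-1/4}$. Apart from that, the argument is standard. I would likely present the substitution proof as the primary one for its brevity and because it also yields the explicit value of the integral, while noting the inductive proof as a self-contained alternative using only tools already established in the paper.
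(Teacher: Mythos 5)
Your proposal is correct, and your primary argument takes a genuinely different route from the paper. The paper proves finiteness by exhibiting an explicit antiderivative of the form $\Phi(x)=x^{3/4}\bigl(\sum_{l=0}^{k}\beta_l\log(x)^l\bigr)$, determining the coefficients $\beta_l$ recursively from $\Phi'(x)=\log(x)^k x^{-1/4}$, and then invoking Lemma \ref{loglimit} to conclude that $\Phi$ has a finite limit at $0$. Your substitution $u=-\log x$ instead reduces the integral to the Gamma-type integral $\int_0^\infty u^k e^{-3u/4}\dx u$, which gives finiteness immediately and, as a bonus, the exact value $(4/3)^{k+1}k!$ without any reference to Lemma \ref{loglimit}; this is shorter and more self-contained, at the cost of not reusing the machinery the paper has already set up. Your secondary inductive proof via integration by parts is essentially the paper's argument in disguise: the recursion $I_k=\tfrac{4k}{3}I_{k-1}$ is exactly what the coefficient relations $\beta_l=-\tfrac{4(l+1)}{3}\beta_{l+1}$ encode, and the vanishing of the boundary term at $0$ is precisely where Lemma \ref{loglimit} enters in both versions, so that alternative aligns with the paper's toolkit. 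All steps in both of your arguments check out (the boundary term at $x=1$ vanishes since $k\geq 1$, and the base case $I_0=4/3$ is elementary), so either presentation would be a valid replacement for the paper's proof.
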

	\begin{proof}
		We show first that the primitive of $x\mapsto\log(x)^kx^{-1/4}$ is of the form 
		\[
		\Phi(x) = x^{3/4}(\sum_{l=0}^k\beta_l\log(x)^l)
		\]
		for some real coefficients $(\beta_l)_{l=0}^k$. Indeed differentiation of $\Phi$ yields
		\[
		\Phi'(x)= x^{-1/4}(\frac{3}{4}\beta_k\log(x)^k+\sum_{l=0}^{k-1}(\frac{3}{4}\beta_l+(l+1)\beta_{l+1})\log(x)^l).
		\]
		Setting $\beta_k = 4/3$ and successively $\beta_l = -4(l+1)\beta_{l+1}/3$ for $l=k-1,\dots, 1$, the claim stated at the beginning of the proof follows immediately. With Lemma \ref{loglimit} we derive the statement of this lemma as $\Phi$ attains a finite limit in zero.
	\end{proof}
	
	Let us return to the estimate \eqref{Plogineq}. As preparation for its proof, we introduce the following compact notation for the sake of readability. We define for $0\leq k\leq p$ the coefficients
	\(
	\gamma_k^{x,y}:=\prod_{i=1}^k x_i (\prod_{j=1}^q y_j)^{1/4}.
	\)
	Note that $\gamma_0^{x,y} = (\Pi_{j=1}^q y_j)^{1/4}$. Denote
	\[
	[Y,X^k]:=\{(y,x)\in \mathbb{R}^q\times\mathbb{R}^k ~\vert ~ x_i\geq 1, 0\leq y_j<1, 1\leq i\leq k, 1\leq j\leq q, \gamma_k^{x,y}\leq 1\}
	\]
	for $0\leq k\leq p$ and $[Y]:=[Y,X^0]=\{y\in\mathbb{R}^q ~\vert ~ 0\leq y_j<1, 1\leq j\leq q, \gamma_0^{x,y}\leq 1\}$. Due to the propagation property $\gamma_{k+1}^{x,y} = x_{k+1}\gamma_k^{x,y}$ and $x_i\geq 1$ for $1\leq i\leq p$ the inclusions $[Y, X^{k+1}]\subseteq [Y, X^k]\times \mathbb{R}_{\geq 1}$ for $0\leq k\leq p$ are valid. We further point out that by $\dx y$ we denote the integration with respect to the variables $y_1, \dots, y_q$ and by $\dx x^k$ the integration with respect to $x_1, \dots, x_k$ for $1\leq k\leq p$. As a first step for proving \eqref{Plogineq} we show that under the restriction $\gamma_{k+1}^{x,y}\leq 1$ for $l\in\mathbb{N}$ we have
	\[
	\int_1^\infty x_{k+1}^{-1}(-\log(\gamma_{k+1}^{x,y}))^l \dx x_{k+1}\lesssim (-\log(\gamma_{k}^{x,y}))^{l+1}.
	\]
	As $\gamma_{k+1}^{x,y} = x_{k+1}\gamma_{k}^{x,y}$ and by the restriction $\gamma_{k+1}^{x,y}\leq 1$ also $\gamma_{k}^{x,y}\leq 1$ we derive by Lemma \ref{powerestim} and $x_{k+1}\leq 1/\gamma_k^{x,y}$ that
	\begin{align}
		\label{lesssimestim}
		\notag\int_1^{1/\gamma_k^{x,y}} x_{k+1}^{-1}(-\log(\gamma_{k+1}^{x,y}))^l \dx x_{k+1}&\leq 2^{l-1}(-\log(\gamma_k^{x,y}))^l \int_1^{1/\gamma_k^{x,y}} x_{k+1}^{-1}\dx x_{k+1}\\
		\notag&\hspace{1cm}+2^{l-1}\int_1^{1/\gamma_k^{x,y}} x_{k+1}^{-1}\log(x_{k+1})^l\dx x_{k+1}\\
		\notag&\leq 2^{l-1}\frac{l+2}{l+1}(-\log(\gamma_k^{x,y}))^{l+1}\\
		&\lesssim(-\log(\gamma_k^{x,y}))^{l+1}.
	\end{align}
	As a consequence, we derive that for $1\leq m\leq p-1$ it holds
	\begin{multline*}
		\int_{[Y,X^{p-m}]}\frac{\log(\gamma_{p-m}^{x,y})^{m-1}}{\gamma_{p-m}^{x,y}}\dx x^{p-m}\dx y\\ = \int_{[Y,X^{p-m-1}]}\frac{1}{\gamma_{p-m-1}^{x,y}}\bigg(\int_1^{1/\gamma_{p-m-1}^{x,y}}\frac{\log(\gamma_{p-m}^{x,y})^{m-1}}{x_{p-m}}\dx x_{p-m}\bigg)\dx x^{p-m-1}\dx y
	\end{multline*}
	and thus, by the previous considerations
	\begin{equation}
		\label{powerestim2}
		\int_{[Y,X^{p-m}]}\frac{(-\log(\gamma_{p-m}^{x,y}))^{m-1}}{\gamma_{p-m}^{x,y}}\dx x^{p-m}\dx y \lesssim \int_{[Y,X^{p-m-1}]}\frac{(-\log(\gamma_{p-m-1}^{x,y}))^{m}}{\gamma_{p-m-1}^{x,y}}\dx x^{p-m-1}\dx y.
	\end{equation}
	Note that the multiplicative constant in \eqref{powerestim2} only depends on the exponent $m$ and is independent of $x,y$. Now as under $x_p \leq 1/\gamma_{p-1}^{x,y}$
	\[
	I = \int_{[Y, X^p]} \dx x \dx y \leq \int_{[Y, X^{p-1}]} \frac{1}{\gamma_{p-1}^{x,y}}\dx x^{p-1}\dx y,
	\]
	we derive by successively applying \eqref{powerestim2} the claimed estimation in \eqref{Plogineq}, i.e. that
	\begin{equation}
		\label{Iestim3}
		I \lesssim \int_{[Y]}\frac{(-\log(\gamma_{0}^{x,y}))^{p-1}}{\gamma_{0}^{x,y}}\dx y.
	\end{equation}
	Next we show that \eqref{Iestim3} is finite. By Lemma \ref{powerestim} and $0\leq y_j<1$ we derive that
	\[
	(-\log(\gamma_{0}^{x,y}))^{p-1}\lesssim \frac{1}{4}\bigg(\frac{q}{4}\bigg)^{p-2} \sum_{j=1}^q (-\log y_j)^{p-1}.
	\]
	As a consequence, by symmetry it suffices to show that the term
	\[
	\int_{[Y]}\frac{(-\log(y_1))^{p-1}}{\gamma_{0}^{x,y}}\dx y
	\]
	is finite. Indeed, as $[Y]\subseteq [0,1]^q$, the integrand of the previous integral does not change sign and by Lemma \ref{intlog} together with $\int_0^1z^{-1/4}\dx z=4/3$ we have
	\[
	\int_{[Y]}\frac{(-\log(y_1))^{p-1}}{\gamma_{0}^{x,y}}\dx y \lesssim\bigg(\int_0^1 \frac{1}{z^{1/4}}\dx z\bigg)^{q-1}\int_0^1 \frac{(-\log z)^{p-1}}{z^{1/4}}\dx z <\infty.
	\]
	Finally, we conclude that the class $(\Gamma_\epsilon)_{0\leq\epsilon<1}$ is finite.
	
	\begin{remark}
		Note that above $\Gamma_\epsilon$ implicitly entail a classical distance notion, as for $x\in\Gamma_\epsilon$ we may consider for $k = \argmin_{1\leq n\leq N} x_n$ the element $$\hat{x}=(x_1, \dots, x_{k-1},0,x_{k+1},\dots,x_N)\in\partial[0,\infty[^N$$ such that for $\epsilon<1$, e.g. $\Vert x-\hat{x}\Vert_\infty \leq \epsilon^{2/N}$ as
		\[
		\epsilon \geq \Pi f(x_j) \geq \sqrt{x_k}\Pi_{j\neq k}f(x_j)\geq \sqrt{x_k}\Pi_{j\neq k} \min(\sqrt{x_j},1)\geq \sqrt{x_k}\sqrt{x_k}^{N-1}=x_k^{N/2}.
		\]
	\end{remark}
	
	We showcase how the $L^p$-convergence result can be extended to general domains for $\Omega = [0,\infty[^N$. Let $(\chi_{\Gamma_{\epsilon_n}}^{\delta_n})_n$ for $(\epsilon_n)_n, (\delta_n)_n$ with $0<\delta_n<\epsilon_n$ be such that 
	\[
	\chi_{\Gamma_{\epsilon_n}}^{\delta_n}\big|_{\Gamma_{\epsilon_n-\delta_n}}\equiv 1, ~ ~ ~ \chi_{\Gamma_{\epsilon_n}}^{\delta_n}\big|_{\Omega_{\epsilon_n+\delta_n}}\equiv 0, ~ ~ \text{and} ~ ~ 0\leq \chi_{\Gamma_{\epsilon_n}}^{\delta_n}\leq 1.
	\]
	Each $\chi_{\Gamma_{\epsilon_n}}^{\delta_n}$ is w.l.o.g. continuous as for $x\in\Gamma_{\epsilon_n+\delta_n}\cap\Omega_{\epsilon_n-\delta_n}$ one can set $x_+=\text{sup}\{\lambda~|~\lambda x\in \Gamma_{\epsilon_n-\delta_n}\}x$ and $x_-=\text{inf}\{\lambda~|~\lambda x\in \Omega_{\epsilon_n+\delta_n}\}x$, write $x = \mu x_++(1-\mu)x_-$ for some $0\leq\mu\leq 1$ and set $\chi_{\Gamma_{\epsilon_n}}^{\delta_n}(x)=\mu$. By local convolution with kernel-radii depending on the position (to not change boundary values of one), this family may be even chosen to be smooth. One can show that the resulting $f_{n,\epsilon_n,\delta_n}$ are continuous, strictly quasipositive and approximate $f$ by similar techniques.
	
	\subsection{Generalization to quasipositive functions}
	\label{subsec:generalization_quasipositive}
	The previously discussed approximation results may be directly extended to quasipositive functions $F\in\mathcal{C}(\overline{\Omega},\mathbb{R}^N)$ as follows. Consider the results in Theorem \ref{mult_inf_est}, Assumption \ref{assump}, Theorem \ref{mult_p_est} and Lemma \ref{assproof} with the following modifications: Use
	\begin{itemize}
		\item $H_n\cap\partial\Omega$ instead of $\partial[0,\infty[^N\cap\partial\Omega$ with $H_n := \{x\in\partial[0,\infty[^N~|~x_n=0\}$,
		\item $\Gamma^n_\epsilon:=\{x\in\overline{\Omega}~|~\mathcal{D}(x, H_n\cap\partial\Omega)<\epsilon\}$ instead of $\Gamma_\epsilon$ with corresponding $\Omega^n_\epsilon$.
	\end{itemize}
	As a consequence, we derive that continuous functions $g\in\mathcal{C}(\Omega, \mathbb{R})$, that for some $1\leq n\leq N$ fulfill $g(\hat{x}_n)\geq 0$ for all $\hat{x}_n=(x_1, \dots, x_{n-1}, 0, x_{n+1}, \dots, x_N)\in\overline{\Omega}$, may be approximated by the same type of functions (with the same $n$) based on the notions of convergence discussed previously. Taking now a quasipositive function $F\in\mathcal{C}(\overline{\Omega},\mathbb{R}^N)$ we can approximate its components $F_n$ and hence, also $F$. The extension to general domains as discussed in Subsection \ref{subsec:ext_gen_dom} may be achieved by
	\[
	\Gamma_\epsilon^n := \{x\in\overline{\Omega}~|~\Pi_{j=1}^Nf(x_j)\leq \epsilon, x_n\leq \epsilon\}.
	\]
	\section{Existence results for RD systems}
	\label{app:ex_rdsys}
	In the following section, we discuss existence results for RD systems and refer to \cite{Fellner2020}, \cite{Laamri2020}, \cite{Pierre2010}, and \cite{Suzuki2017} for state-of-the-art developments. We present the classical existence result from \cite{Fellner2020} as well as the weak existence result from \cite{Suzuki2017}.
	\paragraph{Classical existence result:}
	\begin{theorem}{\cite[Theorem 1.1]{Fellner2020}}
		\label{th:fellner}
		Let $\Omega\subseteq \mathbb{R}^d$ be a bounded domain with smooth boundary such that $\Omega$ lies locally on one side of $\partial\Omega$. Consider the RD system
		\begin{align}
			\label{fellner_sys}
			\begin{cases}
				\frac{\partial}{\partial t}u_n-d_n\Delta u_n = f_n(u), &(t,x)\in ]0,T[\times\Omega,\\
				\nabla_x u_n\cdot \nu = 0, & (t,x)\in ]0,T[\times\partial\Omega,\\
				u_n(0,x)=u_{n,0}(x), & x\in \Omega,
			\end{cases}
		\end{align}
		for $n=1,\ldots, N$. Assume that the initial data $(u_{n,0})_{n=1}^N\subseteq L^1(\Omega)\cap L^\infty(\Omega)$ is bounded and nonnegative. Furthermore, suppose that the mass is controlled by
		\[
		\sum_{n=1}^{N}f_n(u) \leq K_0+K_1\sum_{n=1}^N u_n
		\]
		for some $K_0\geq 0$ and $K_1\in \mathbb{R}$ for all $u\in [0,\infty[^N$ and the reaction term $f$ is locally Lipschitz-continuous and quasipositive, i.e.,
		\[
		f_n(u_1, \dots, u_{n-1}, 0, u_{n+1}, \dots, u_N)\geq 0
		\]
		for $1\leq n\leq N$ and $u\in [0,\infty[^N$. Then there exists some $\epsilon>0$ such that for 
		$$\vert f_n(u)\vert \leq K(1+\vert u\vert^{2+\epsilon})$$ for $1\leq n\leq N$ and $u\in \mathbb{R}^N$, system \eqref{fellner_sys} admits a unique global classical solution
		$$(u_n)_{n=1}^N\subseteq \mathcal{C}(0,T;L^p(\Omega)\cap L^\infty(\Omega))\cap\mathcal{C}^{1,2}(]0,T[\times\overline{\Omega})$$
		for all $p>N$ satisfying \eqref{fellner_sys} for $T>0$.
	\end{theorem}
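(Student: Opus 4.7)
The plan is to follow the standard semigroup approach to semilinear parabolic systems: first establish local-in-time existence of a unique classical solution via analytic semigroup theory, then use quasipositivity to propagate nonnegativity, derive a priori bounds ascending from $L^1$ to $L^\infty$ through the mass control and the growth assumption, and finally invoke a blow-up criterion to pass from local to global existence.

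For the local part, I would recast \eqref{fellner_sys} as the abstract semilinear evolution equation $u' = Au + f(u)$ on $L^p(\Omega)^N$ for some $p > d$, where $A = \diag(d_n\Delta)$ is the diagonal Neumann Laplacian. Since $A$ generates an analytic semigroup and, by local Lipschitz continuity, $f$ induces a locally Lipschitz Nemytskii operator on $L^\infty(\Omega)^N$, the classical results of Amann \cite{Amann1985}, Rothe \cite{Rothe1984} or Pazy \cite{Pazy1983} deliver a maximal mild solution on $[0,T_{\max})$ whose parabolic smoothing yields the regularity class claimed in the theorem. Componentwise quasipositivity then propagates nonnegativity: for each fixed $n$, testing the $n$-th equation against $(u_n)_-$ and using that $f_n \geq 0$ on the hyperplane $\{u_n = 0\}$ whenever the other components are nonnegative gives, via Gronwall, $(u_n)_- \equiv 0$ as long as the solution exists.

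The heart of the proof is upgrading the crude $L^1$ a priori bound into an $L^\infty$ bound. Summing the equations, integrating over $\Omega$ and using the Neumann boundary condition together with the mass-control inequality yields
\[
\frac{d}{dt}\sum_{n=1}^N \int_\Omega u_n \, dx \;\leq\; K_0 |\Omega| + K_1 \sum_{n=1}^N \int_\Omega u_n \, dx,
\]
so Gronwall produces a uniform $L^1$ bound. To lift this to $L^\infty$ I would apply the duality technique surveyed in \cite{Pierre2010}, which exploits the combination of diffusion, mass control and nonnegativity to yield $L^p(]0,T[\times\Omega)$ bounds on $u$ for every finite $p$; combined with the slightly super-quadratic growth estimate and parabolic maximal regularity, a bootstrap argument then produces the desired $L^\infty$ bound. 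This duality step is the main obstacle, since quadratic growth is the classical threshold beyond which finite-time blow-up becomes possible in general, and the small excess $\epsilon > 0$ is precisely what forces the use of the refined duality arguments developed in \cite{Fellner2020} rather than simpler dissipation schemes.

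Once the $L^\infty$ bound is in place, the standard blow-up criterion for semilinear parabolic systems forces $T_{\max} = \infty$, and since $f(u)$ is then bounded and Hölder continuous in $(t,x)$, parabolic Schauder theory promotes the mild solution to the claimed $\mathcal{C}^{1,2}$ classical regularity on $]0,T[\times\overline{\Omega}$; membership in $\mathcal{C}(0,T;L^p(\Omega)\cap L^\infty(\Omega))$ follows from the $L^\infty$ bound together with the continuity of the mild solution. Uniqueness is a direct consequence of the local Lipschitz property of $f$ via a standard Gronwall argument applied to the difference of two solutions.
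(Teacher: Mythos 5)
You should first note that the paper does not prove this statement at all: Theorem \ref{th:fellner} is recalled verbatim from \cite[Theorem 1.1]{Fellner2020} in Appendix B purely ``for the sake of completeness,'' and its proof is delegated entirely to that reference. So the only meaningful comparison is with the strategy of \cite{Fellner2020}, and your outline does reproduce its broad architecture: local existence by analytic semigroup theory for bounded data, nonnegativity from quasipositivity, an $L^1$ mass bound from the mass-control inequality and the Neumann condition, duality-type space-time estimates, a bootstrap exploiting the $(2+\epsilon)$-growth, a blow-up criterion, Schauder regularity, and Gronwall uniqueness.

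Two points in your sketch deserve correction, and the first touches the heart of the matter. You assert that the duality technique of \cite{Pierre2010} yields $L^p(]0,T[\times\Omega)$ bounds on $u$ for \emph{every} finite $p$; it does not. The classical duality lemma gives only an $L^2$ space-time estimate on $\sum_n u_n$, and the improved duality lemma used in \cite{Fellner2020} gives $L^{2+\epsilon}$ for a \emph{small} $\epsilon>0$ determined by the diffusion coefficients (and the dimension). It is precisely this limited gain, paired with the reaction growth $|f_n(u)|\leq K(1+|u|^{2+\epsilon})$ calibrated to it, that allows the $L^p$-energy/bootstrap argument of \cite{Fellner2020} to close and produce the $\epsilon$ appearing in the statement (note the quantifier order: the theorem asserts the existence of such an $\epsilon$, it is not arbitrary). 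If duality really delivered all finite $p$, the $\epsilon$-threshold you yourself invoke later would be irrelevant, so your sketch is internally inconsistent exactly at the step that constitutes the main analytic content of the cited proof, and which your proposal in effect defers back to \cite{Fellner2020}. Secondly, the nonnegativity argument as written is slightly circular: testing the $n$-th equation against $(u_n)_-$ uses $f_n\geq 0$ on $\{u_n=0\}$ only when the \emph{other} components are already nonnegative, which is what you are trying to prove. The standard repair is to replace $f_n(u)$ by $f_n(u_1^+,\dots,u_{n-1}^+,u_n,u_{n+1}^+,\dots,u_N^+)$ (or to use an invariant-region argument), show the modified system preserves the nonnegative orthant componentwise via local Lipschitz continuity plus quasipositivity, and then identify its solution with that of the original system. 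With these two repairs your outline is a fair high-level reconstruction of the strategy of \cite{Fellner2020}, but as it stands it does not supply the step that makes the theorem nontrivial.
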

	\begin{remark}
		Theorem \ref{th:fellner} holds also in case the Neumann boundary condition in system \eqref{fellner_sys} is replaced by a homogeneous Dirichlet boundary condition.
	\end{remark}
	
	\paragraph{Weak existence result:}
	\begin{theorem}[Special case of {\cite[Theorem 1]{Suzuki2017}}]
		\label{th:suzuki}
		Let $\Omega\subseteq \mathbb{R}^d$ be a bounded domain with smooth boundary. Consider the Reaction-Diffusion system for $n=1,\ldots, N$:
		\begin{align}
			\label{suzuki_sys}
			\begin{cases}
				\frac{\partial}{\partial t}u_n-d_n\Delta u_n = f_n(u), &(t,x)\in ]0,T[\times\Omega,\\
				u_n(t,x)=g_n(t,x)\geq 0, & (t,x)\in ]0,T[\times\partial\Omega,\\
				u_n(0,x)=u_{n,0}(x)\geq0, & x\in \Omega.
			\end{cases}
		\end{align}
		Assume that $(u_{n,0})_{n=1}^N\subseteq L^\infty(\Omega)$ and $(g_n)_{n=1}^N\subseteq \mathcal{C}^1(]0,T[\times\Omega)$ are nonnegative. Furthermore, suppose mass dissipation, i.e., there exists $(c_n)_{n=1}^N\subseteq ~]0,\infty[$ with
		\[
		\sum_{n=1}^{N}c_nf_n(u) \leq 0\quad \text{for all} ~ u\in [0,\infty[^N
		\]
		and that the reaction term $f$ is locally Lipschitz-continuous and quasipositive. If 		$$\vert f(u)\vert \leq K(1+\vert u\vert^2)$$ for $u\in \mathbb{R}^N$, system \eqref{suzuki_sys} admits a global weak solution, i.e., we have $$(u_n)_{n=1}^N\subseteq \mathcal{C}(0,T;L^1(\Omega))\cap L^2(]0,T[\times\overline{\Omega})$$ for all $T>0$ and  \eqref{suzuki_sys} is fulfilled in the weak sense.%
	\end{theorem}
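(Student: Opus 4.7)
The plan is to follow the classical approximation-compactness paradigm for quadratically-growing RD systems developed in the works of Pierre and collaborators (see \cite{Pierre2010} for an overview). First, I would regularize the reaction term by truncation: for $k\in\mathbb{N}$, set e.g.\ $f^k_n(u)=f_n(u)/(1+\Vert u\Vert^2/k)$, so that each $f^k$ is globally Lipschitz, inherits quasipositivity, and still satisfies the mass-dissipation inequality. Standard semigroup theory (or a Schauder fixed-point argument in a suitable parabolic space) then provides a unique global classical solution $u^k$ to the regularized problem, with the inhomogeneous Dirichlet data $g$ handled by subtracting a lift and using its $\mathcal{C}^1$-regularity. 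Quasipositivity of $f^k$ together with the nonnegativity of $u_0$ and $g$ yields $u^k\geq 0$ on $]0,T[\times\Omega$ via a truncation test against $(u^k_n)_-$.

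Next, I would derive uniform a priori bounds. Multiplying the $n$-th equation by $c_n$, summing, integrating over $\Omega$, and invoking mass dissipation yields after controlling the Dirichlet boundary flux a uniform $L^\infty(0,T;L^1(\Omega))$ bound for each nonnegative component $u^k_n$. The crucial step is then to promote this to a uniform $L^2(]0,T[\times\Omega)$ bound, which is needed in order to control $f^k(u^k)$ in $L^1$ through the quadratic growth assumption. This is obtained by a duality argument \`a la Pierre--Schmitt: for smooth $\theta\geq 0$ one solves the adjoint backward problem $-\partial_t\phi_n-d_n\Delta\phi_n=\theta$ with $\phi_n(T)=0$ and homogeneous Dirichlet data, invokes maximal regularity $\Vert\phi_n\Vert_{L^2}\lesssim\Vert\theta\Vert_{L^2}$, and tests the equation for $u^k_n$ against $\phi_n$. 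Combined with the mass-control inequality this yields the desired $L^2$-estimate, uniform in $k$.

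Equipped with these bounds, I would extract a convergent subsequence: boundedness of $u^k$ in $L^2(0,T;H^1)\cap L^\infty(0,T;L^1)$, together with a bound on $\partial_t u^k$ in a negative-order space (obtained by duality with test functions in a suitable Sobolev space and using the $L^1$-bound on $f^k(u^k)$), permits applying an Aubin--Lions-type compactness result. This gives strong convergence $u^k\to u$ in $L^1(]0,T[\times\Omega)$ and, along a further subsequence, convergence a.e.\ in $]0,T[\times\Omega$. The main obstacle, as is standard in this setting, is the passage to the limit in the nonlinear reaction term $f^k(u^k)$: the quadratic growth condition combined with the uniform $L^2$-bound on $u^k$ yields equi-integrability of $(f^k(u^k))_k$ in $L^1$, and together with the pointwise a.e.\ convergence and the local Lipschitz continuity of $f$, Vitali's theorem implies $f^k(u^k)\to f(u)$ in $L^1$. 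Passing to the limit in the weak formulation of the truncated system and verifying that the trace and initial conditions are inherited then delivers the claimed global weak solution in $\mathcal{C}(0,T;L^1(\Omega))\cap L^2(]0,T[\times\overline{\Omega})$.
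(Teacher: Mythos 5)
First, note that the paper itself does not prove this statement: Theorem \ref{th:suzuki} is imported verbatim as a special case of \cite[Theorem 1]{Suzuki2017}, and Appendix \ref{app:ex_rdsys} only records it for completeness. So your sketch is not competing with an in-paper argument but with the proof in the cited literature, whose overall strategy (truncation, $L^1$ mass bound, Pierre-type $L^2$ duality estimate, compactness, passage to the limit in $f^k(u^k)$) you have correctly identified in outline.

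Within that outline, however, there are concrete gaps. (i) The duality estimate cannot be run componentwise as you describe: testing the $n$-th equation against a dual solution $\phi_n$ of $-\partial_t\phi_n-d_n\Delta\phi_n=\theta$ leaves the term $\int f_n^k(u^k)\phi_n$, which has no sign and is exactly what you are trying to control. The argument must be applied to the weighted sum $w^k=\sum_n c_n u_n^k$, whose reaction term is nonpositive; $w^k$ satisfies $\partial_t w^k-\Delta(A^k w^k)\le 0$ with $A^k=\sum_n c_n d_n u_n^k/\sum_n c_n u_n^k\in[\min_n d_n,\max_n d_n]$, and the dual problem involves this $u$-dependent coefficient, so the needed estimate is Pierre's duality lemma ($\Vert\Delta\phi\Vert_{L^2}\lesssim\Vert\theta\Vert_{L^2}$ with constant depending only on the bounds of $A^k$), not constant-coefficient maximal regularity; the inhomogeneous Dirichlet data also enters through boundary terms that must be handled with the regularity of $g$. (ii) The uniform $L^2(0,T;H^1)$ bound you invoke for Aubin--Lions is unsubstantiated: the energy estimate produces $\int f_n^k(u^k)u_n^k$, a cubic term not controlled by the available uniform $L^2$ bound. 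In the literature, compactness is instead obtained from the uniform $L^1(Q_T)$ bound on $f^k(u^k)$ and the compactness in $L^1$ of the heat solution map with $L^1$ data (Baras--Pierre), which circumvents any gradient estimate. (iii) The Vitali step is the real crux and your justification fails as stated: a uniform $L^2$ bound on $u^k$ gives only a uniform $L^1$ bound on $|u^k|^2$, not equi-integrability of $(f^k(u^k))_k$; bounded-in-$L^1$ sequences need not be uniformly integrable. Passing to the limit in the quadratic nonlinearity requires a genuinely stronger ingredient (e.g.\ an improved duality estimate yielding uniform integrability of $|u^k|^2$, or strong $L^2$ convergence of $u^k$ established through the structure of the system), and this is precisely the delicate part of \cite{Suzuki2017} and of the quadratic-growth theory surveyed in \cite{Pierre2010}. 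A minor further point: the truncation $f_n^k(u)=f_n(u)/(1+\Vert u\Vert^2/k)$ is bounded but not globally Lipschitz under the stated hypotheses (local Lipschitz continuity plus quadratic growth of $f$ does not bound $\nabla f$); boundedness plus local Lipschitz continuity is what actually gives global solvability of the approximate problems.
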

	In the assumptions of both results, it is required that the reaction term $f$ is locally Lipschitz continuous, satisfies an appropriate growth condition together with a mass estimate, and is quasipositive, as discussed in detail in Section \ref{sec:conformal_classes}.%
	
	\section{Operators in Sobolev spaces}
	\label{app:op_sob}
	In this section, we review existing results on superposition and multiplication operators in Sobolev spaces. In particular, the following result, established in \cite{mizel79}, concerns superposition mappings acting on $N$-tuple first-order Sobolev spaces.
	\begin{theorem}[{\cite[Theorem 1]{mizel79}}]
		\label{th:mizel}
		Let $\Omega\subseteq \mathbb{R}^d$ be a bounded domain, $g:\mathbb{R}^N\to \mathbb{R}$ a Borel function and $p,r\geq 1$ real numbers. For $\mathcal{M}(\Omega)$ the space of real measurable functions in $\Omega$ denote by $T_g:\mathcal{M}(\Omega)^N\to \mathcal{M}(\Omega)$ the superposition mapping
		\[
		T_g u = g\circ u ~ ~ \text{ for } ~ u = (u_1, \dots, u_N)\in \mathcal{M}(\Omega)^N.
		\]
		In case $1\leq r\leq p< d$, the superposition operator $T_g$ maps $W^{1,p}(\Omega)^N$ into $W^{1,r}(\Omega)$ if and only if $g$ is locally Lipschitz continuous in $\mathbb{R}^N$ and the partial derivatives fulfill the growth condition $\vert \partial_{x_n} g(x)\vert \leq c(1+\vert x\vert^{d(p-r)/(r(d-p))})$ a.e. in $\mathbb{R}^N$ for some $c>0$ and $1\leq n\leq N$. It further holds true that
		\[
		\Vert T_g u\Vert_{W^{1,r}(\Omega)}\leq c\left(1+\sum_{n=1}^N\Vert u_n\Vert_{W^{1,p}(\Omega)}^{d(p-r)/(r(d-p))+1}\right)
		\]
		for some $c>0$. In case $d<p$ (or $d=1$ and $1\leq p$) the previous statements holds without imposing the growth condition. Furthermore, for $\sum_{n=1}^N\Vert u_n\Vert_{W^{1,p}(\Omega)}\leq M$ there exists some $c(M)>0$ such that
		\[
		\Vert T_g u\Vert_{W^{1,r}(\Omega)}\leq c(M)\left(1+\sum_{n=1}^N\Vert u_n\Vert_{W^{1,p}(\Omega)}\right).
		\]
	\end{theorem}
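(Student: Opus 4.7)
\textbf{Proof proposal for Theorem \ref{th:mizel}.} Since this is quoted as Mizel's result, the paper itself will likely defer to \cite{mizel79}; nevertheless, the plan I would follow to prove it from scratch is the following. The two directions (sufficiency and necessity) must be handled separately, and the three regimes $1\leq r\leq p<d$, $d<p$, and $d=1$ with $1\leq p$ should be distinguished because the role of the growth condition changes.

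For the sufficiency direction in the regime $1\leq r\leq p<d$, I would first verify that $g\circ u$ is measurable for $u\in W^{1,p}(\Omega)^N$ (immediate from Borel measurability of $g$) and that it lies in $L^r(\Omega)$ via $|g(u)|\leq |g(0)|+c\int_0^{|u|}(1+t^{d(p-r)/(r(d-p))})\dx t$ combined with the Sobolev embedding $W^{1,p}(\Omega)\hookrightarrow L^{p^\ast}(\Omega)$ for $p^\ast=dp/(d-p)$. The key analytic step is to justify the chain rule $\partial_i(g\circ u)=\sum_{n=1}^N (\partial_{x_n}g)(u)\,\partial_i u_n$ in the distributional sense for locally Lipschitz $g$; here I would invoke a Marcus--Mizel-type chain rule, proved by approximating $g$ by smooth convolutions $g_\epsilon$ and passing to the limit using the $W^{1,p}$-stability of compositions on bounded sets. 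Once the chain rule holds a.e., the $L^r$-bound on $\partial_i(g\circ u)$ follows from the growth hypothesis on $\partial_{x_n}g$ together with H\"older's inequality: the factor $(1+|u|^{d(p-r)/(r(d-p))})$ is controlled in $L^{pr/(p-r)}(\Omega)$ precisely by $W^{1,p}\hookrightarrow L^{p^\ast}$, and $\partial_i u_n$ is controlled in $L^p$. This bookkeeping yields the claimed norm estimate with the correct exponent $d(p-r)/(r(d-p))+1$.

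For the necessity direction, the plan is to test the mapping property against carefully chosen families of Sobolev inputs. To recover local Lipschitz continuity of $g$, I would take $u\equiv x$ (or piecewise affine inputs with image in a prescribed ball) and exploit that the distributional gradient of $g\circ u$ must lie in $L^r$, which by classical characterizations forces $g$ to admit a locally Lipschitz representative. To extract the growth exponent, I would scale: take $u_R(x)=R\,\xi(x)$ with $\xi$ a fixed bump, so that $|u_R|$ probes arbitrarily large values, and require the $W^{1,r}$-norm bound on $T_g u_R$ to hold; comparing powers of $R$ on both sides forces the sharp exponent $d(p-r)/(r(d-p))$ by matching the critical Sobolev scaling. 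This scaling argument is where the specific exponent is actually pinned down, and I expect it to be the main obstacle because one must ensure that the test family genuinely saturates the inequality simultaneously in $L^p$ and at infinity in the codomain.

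The case $d<p$ (and the one-dimensional case $d=1$) is significantly easier, because $W^{1,p}(\Omega)\hookrightarrow \mathcal{C}(\overline{\Omega})$, so each $u_n$ takes values in a compact set $K\subset\mathbb{R}^N$. Local Lipschitz continuity of $g$ on $K$ then replaces the growth condition, the chain rule applies without difficulty, and the estimate $\Vert T_g u\Vert_{W^{1,r}}\leq c(M)(1+\sum_n\Vert u_n\Vert_{W^{1,p}})$ follows directly since the Lipschitz constant of $g$ on $K$ depends only on $M$ through the embedding constant. The delicate critical case $p=d$ is excluded by assumption and would require a separate argument via Trudinger-type embeddings.
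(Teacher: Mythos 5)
The paper does not prove this statement at all: Theorem \ref{th:mizel} is imported verbatim from Marcus--Mizel \cite{mizel79} and is used only through its sufficiency half (the mapping property and the norm bounds, e.g.\ in Lemma \ref{higher_reg_a} and the maximal-regularity remark), so there is no in-paper argument to compare against. Your sufficiency sketch for $1\leq r\leq p<d$ is essentially the standard one and the exponent bookkeeping is correct: with $\mu=d(p-r)/(r(d-p))$ one has $\mu\cdot\frac{pr}{p-r}=\frac{dp}{d-p}=p^\ast$, so $(1+|u|^{\mu})\in L^{pr/(p-r)}(\Omega)$ is exactly what $W^{1,p}\hookrightarrow L^{p^\ast}$ gives, and H\"older against $\nabla u\in L^p$ produces the stated exponent $\mu+1$.

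There are, however, two genuine gaps. First, in the necessity direction your scaling family $u_R=R\,\xi$ cannot pin down the growth exponent as described: the hypothesis is only the \emph{mapping} property $T_g(W^{1,p}(\Omega)^N)\subseteq W^{1,r}(\Omega)$, not a uniform norm estimate, so for each fixed $R$ the composition is trivially in $W^{1,r}$ whenever $g$ is, say, smooth with bad growth, and ``comparing powers of $R$ on both sides'' presupposes the very bound you are trying to establish. The actual argument must either first upgrade the mapping property to a bound (a uniform-boundedness/closed-graph-type step) or, as in Marcus--Mizel, construct a \emph{single} test function assembling bumps at infinitely many amplitudes and scales (summable in $W^{1,p}$) so that failure of the growth condition forces $\nabla(g\circ u)\notin L^r$ for that one $u$. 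Second, the chain-rule step is not as routine as ``mollify $g$ and pass to the limit'': for $N\geq 2$ and $g$ merely locally Lipschitz, $\nabla g$ exists only a.e., and $(\nabla g_\epsilon)(u)$ need not converge to $(\nabla g)(u)$ on the set where $u$ takes values in the exceptional null set while $\nabla u\neq 0$; making the a.e.\ identity $\nabla(g\circ u)=\nabla g(u)\nabla u$ rigorous is precisely the hard core of Marcus--Mizel (and of later chain-rule results of Ambrosio--Dal Maso type), so it should be treated as a theorem to be proved, not a limiting formality. A minor further caveat: in the case $p>d$ your reduction uses $W^{1,p}(\Omega)\hookrightarrow\mathcal{C}(\overline{\Omega})$, which for an arbitrary bounded domain requires some boundary regularity (cone or Lipschitz condition); this is harmless for the paper's applications, where $\Omega$ is a bounded Lipschitz domain, but it is an extra hypothesis relative to the statement as quoted.
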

	Additional material on superposition operators in Sobolev spaces can be found in \cite[Chapter 9]{AppZab}. For results on autonomous Nemytskii operators acting between general Sobolev spaces, together with a higher-order chain rule, we refer to \cite{Florin2022}.
	Next we consider the multiplication of Sobolev regular functions based on \cite{behzadan2021}.
	\begin{theorem}[{\cite[Theorem 6.1, Corollary 6.3, Theorem 7.4]{behzadan2021}}]
		\label{th:behzadan}
		Let $\Omega$ be a bounded Lipschitz domain in $\mathbb{R}^d$ and $s_1, s_2, s$, $1\leq p_1, p_2, p<\infty$ real numbers satisfying:
		\begin{enumerate}[label=\roman*)]
			\item $s_i\geq s \geq 0$
			\item $s_i-s\geq d(\frac{1}{p_i}-\frac{1}{p})$
			\item $s_1+s_2-s>d(\frac{1}{p_1}+\frac{1}{p_2}-\frac{1}{p})$
		\end{enumerate}
		If $s\in \mathbb{N}_0$ assume further that $\frac{1}{p_1}+\frac{1}{p_2}\geq \frac{1}{p}$. The strictness of the inequalities ii) and iii) is interchangeable if $s\in\mathbb{N}_0$. In case of $s\in]0,\infty[\backslash\mathbb{N}$ where $p<\max(p_1,p_2)$ suppose that $s_1+s_2-s>d/\min(p_1,p_2)$ instead of iii) together with strictness in i) and ii).
		Then the multiplication operator considered on $W^{s_1,p_1}(\Omega)\times W^{s_2, p_2}(\Omega)$ defines a well-defined continuous bilinear map $W^{s_1,p_1}(\Omega)\times W^{s_2, p_2}(\Omega)\to W^{s,p}(\Omega)$.
	\end{theorem}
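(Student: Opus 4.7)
The plan is to prove the statement first on the full space $\mathbb{R}^d$ and then transfer the result to the bounded Lipschitz domain $\Omega$ by extension. For the latter I would invoke an extension operator $E : W^{s_i, p_i}(\Omega) \to W^{s_i, p_i}(\mathbb{R}^d)$ (Stein's universal extension for integer orders, or a Rychkov-type extension adapted to Lipschitz boundaries for fractional orders) and then restrict the product $(Eu)(Ev)$ back to $\Omega$; boundedness of $E$ then reduces the claim to the analogous multiplication estimate on $\mathbb{R}^d$. The remaining work splits naturally into the cases $s \in \mathbb{N}_0$ and $s \in {]0,\infty[} \setminus \mathbb{N}$.

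\textbf{Integer case.} For $s \in \mathbb{N}_0$ the main tool is the Leibniz rule. For any multi-index $\alpha$ with $|\alpha| \leq s$, expanding $\partial^\alpha(uv) = \sum_{\beta \leq \alpha}\binom{\alpha}{\beta}\,\partial^\beta u\,\partial^{\alpha-\beta}v$ reduces the task to bounding each summand in $L^p(\mathbb{R}^d)$. By H\"older's inequality each term is controlled by $\|\partial^\beta u\|_{L^{q_1}}\|\partial^{\alpha-\beta} v\|_{L^{q_2}}$ for any pair $(q_1,q_2)$ with $1/q_1 + 1/q_2 = 1/p$. The Sobolev embedding $W^{s_1 - |\beta|, p_1} \hookrightarrow L^{q_1}$ requires $s_1 - |\beta| - d/p_1 \geq -d/q_1$ (or strict inequality at the borderline), and analogously for the second factor. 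The conditions (i)-(iii) are precisely what is needed to guarantee the existence of such a splitting $(q_1,q_2)$ uniformly in $\beta \leq \alpha$: the extremal cases $\beta = \alpha$ and $\beta = 0$ correspond directly to (ii), while the intermediate cases reduce to (iii) via interpolation. The supplementary condition $1/p_1 + 1/p_2 \geq 1/p$ is needed so that the Hölder exponents remain admissible (i.e.\ $q_i \geq p$ is attainable) in the endpoint subcases.

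\textbf{Fractional case.} For $s \in {]0,\infty[}\setminus\mathbb{N}$ the Leibniz rule is no longer available literally. I would instead use the Littlewood-Paley decomposition and Bony's paraproduct splitting $uv = T_u v + T_v u + R(u,v)$, estimating each piece on the Besov/Triebel-Lizorkin scale (using the identification $F^{s}_{p,2}(\mathbb{R}^d) = W^{s,p}(\mathbb{R}^d)$). The two paraproducts are the easy terms, and Bernstein's inequality together with the embedding $F^{s_i}_{p_i,2} \hookrightarrow F^{s}_{p,2}$ under conditions (i)-(ii) yields their bound; the diagonal remainder $R(u,v)$ is the delicate one and converges in $F^{s}_{p,2}$ exactly under the strict high-high condition (iii). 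When $p < \max(p_1,p_2)$ the embedding route available for the paraproducts is no longer sharp enough at the endpoint, which forces the stronger substitute $s_1 + s_2 - s > d/\min(p_1, p_2)$ stated in the theorem, and correspondingly strict inequalities in (i) and (ii).

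\textbf{Main obstacle.} I expect the principal technical difficulty to lie in the fractional case together with the fine bookkeeping of strict versus non-strict inequalities. Tracking precisely when equality in (ii) can be tolerated (which requires strictness elsewhere to recover dyadic summability) and when it cannot, and in particular verifying the replacement condition for the subcase $p < \max(p_1,p_2)$, is the subtle part of the argument. A secondary technical point is ensuring that the extension step preserves all the hypothesized conditions without loss, but this is routine once one commits to an extension operator that is simultaneously bounded across the Sobolev scales of interest.
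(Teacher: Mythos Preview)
The paper does not prove this theorem at all: it is stated in Appendix~\ref{app:op_sob} as a quotation of results from \cite{behzadan2021} (specifically Theorem~6.1, Corollary~6.3, and Theorem~7.4 there), with no accompanying argument. So there is no ``paper's own proof'' to compare your proposal against.

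That said, your strategy is the standard one and is essentially what one finds in \cite{behzadan2021} and the wider literature: reduce to $\mathbb{R}^d$ via a universal extension operator, treat the integer case by the Leibniz rule combined with H\"older's inequality and Sobolev embeddings, and handle the fractional case by paraproduct decomposition on the Triebel--Lizorkin scale. Your identification of the roles of conditions (i)--(iii) in selecting admissible H\"older splittings, and of the remainder term $R(u,v)$ as the piece governed by the strict high--high condition, is accurate. The endpoint bookkeeping you flag as the main obstacle is indeed where most of the care is needed, and is precisely why the statement distinguishes the subcase $p<\max(p_1,p_2)$ with a strengthened replacement for (iii). In short: your outline is correct and orthodox, but for the purposes of this paper no proof was required, only the citation.
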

	\section{Assumptions for physically consistent classes}
	\label{app:assum_reactions}
	In this section, we discuss Assumption \ref{ass:param_reaction} and \ref{ass:approx_cap_cond} for the physically consistent classes $\overline{\mathcal{F}}_n^m$ of modified parameterized reaction terms in \eqref{eq:mod_class}, with $\bar{f}_{\theta_n, n}$ defined by
	\begin{align}
		\label{def_fmod}
		\bar{f}_{\theta_n, n}(u) = ((P_+\circ f_{\theta_n,n})(u)-f_{\theta_n,n}(u))\chi^m(u_n)+f_{\theta_n,n}(u)
	\end{align}
	as in \eqref{modification:param}, where $(\chi^m)_m$ is a sequence of transition functions as in Definition \ref{def:transition}.
	
	\subsection{Extension property}
	
	We prove first that $\bar{f}_{\theta_n,n}$ induces a well-defined Nemytskii operator mapping from $\mathcal{V}^N$ to $\mathcal{W}$ with $[\bar{f}_{\theta_n,n}(v)](t)(x)=\bar{f}_{\theta_n,n}(v(t,x))$. In view of \eqref{def_fmod} this puzzles down to the consideration of the terms $P_+\circ f_{\theta_n, n}$, $\chi^m(u_n)$ and the product involved in \eqref{def_fmod}. This can be achieved under the space setup in Assumption \ref{ass_init_set} if the elements of $\mathcal{F}_n^m$ are Lipschitz continuous. We will also formulate higher regularity extension results, but require a more regular space setup for that. %
	We start by verifying auxiliary lower and higher regularity results for the extension of general Lipschitz continuous functions $g:\mathbb{R}^N\to \mathbb{R}$ to well-defined Nemytskii operators.
	\begin{lemma}
		\label{lem:lower_chi}
		Let Assumption \ref{ass_init_set} hold true and let $g:\mathbb{R}^N\to \mathbb{R}$ be Lipschitz continuous with constant $L>0$. Then $g:\mathcal{V}^N\to L^p(0,T;L^{\hat{p}}(\Omega))$ defines a well-defined Nemytskii operator and moreover, also $g:\mathcal{V}^N\to\mathcal{W}$.
	\end{lemma}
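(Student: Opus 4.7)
The plan is to derive the two mapping properties from a single pointwise estimate that follows directly from the Lipschitz hypothesis, namely $|g(u)|\leq |g(0)|+L\|u\|$ for all $u\in\mathbb{R}^N$. For a given $v\in\mathcal{V}^N$, I would first argue measurability of the pointwise composition $[g\circ v](t,x):=g(v(t,x))$: the embedding $\mathcal{V}\hookrightarrow\mathcal{C}(0,T;\tilde{V})$ from \cite[Lemma~7.1]{Roubíček2013}, together with $\tilde{V}=W^{\hat m,\hat p}(\Omega)$, makes $v$ a jointly measurable function on $]0,T[\times\Omega$, and continuity of $g$ then ensures $g\circ v$ is measurable. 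This is what is needed to view $g$ as an autonomous Nemytskii operator defined pointwise.

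Next, I would integrate the pointwise bound. Using $|g(u)|^{\hat p}\leq C_{\hat p}(|g(0)|^{\hat p}+L^{\hat p}\|u\|^{\hat p})$, one obtains
\begin{equation*}
\|g(v(t,\cdot))\|_{L^{\hat p}(\Omega)}\leq C\bigl(|g(0)|\,|\Omega|^{1/\hat p}+L\,\|v(t,\cdot)\|_{L^{\hat p}(\Omega)^N}\bigr)
\end{equation*}
for a.e.\ $t\in(0,T)$. Taking the $L^p$-norm in $t$ and invoking the embedding $\mathcal{V}\hookrightarrow L^p(0,T;L^{\hat p}(\Omega))$, which follows from $\mathcal{V}\subseteq L^p(0,T;W^{\hat m,\hat p}(\Omega))$ together with $W^{\hat m,\hat p}(\Omega)\hookrightarrow L^{\hat p}(\Omega)$ on the bounded domain $\Omega$, yields the affine estimate $\|g\circ v\|_{L^p(0,T;L^{\hat p}(\Omega))}\leq c_1+c_2\|v\|_{\mathcal{V}^N}$. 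This establishes the first mapping property.

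For the second claim, I would simply chain embeddings: since $\Omega$ is bounded with $\hat q\leq\hat p$ one has $L^{\hat p}(\Omega)\hookrightarrow L^{\hat q}(\Omega)$, and since $(0,T)$ has finite measure with $q\leq p$ one obtains $L^p(0,T;L^{\hat p}(\Omega))\hookrightarrow L^q(0,T;L^{\hat q}(\Omega))=\mathcal{W}$. Composing with the first result gives $g:\mathcal{V}^N\to\mathcal{W}$. I do not anticipate a genuine obstacle here: the entire argument is a routine reduction via the Lipschitz estimate, the only point meriting care being that $g\circ v$ be interpreted pointwise a.e.\ and shown jointly measurable before invoking Fubini-type integration, which the embedding into $\mathcal{C}(0,T;\tilde V)$ delivers cleanly.
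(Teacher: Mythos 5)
Your proposal is correct and follows essentially the same route as the paper: the pointwise Lipschitz bound $|g(u)|\leq |g(0)|+L\|u\|$ integrated in space and time, combined with the embeddings $V\hookrightarrow L^{\hat p}(\Omega)$, $q\leq p$ and $L^{\hat p}(\Omega)\hookrightarrow L^{\hat q}(\Omega)$ to pass to $\mathcal{W}$. The only cosmetic difference is the measurability step, which the paper settles via weak measurability, separability of $L^{\hat p}(\Omega)$ and the Pettis theorem, whereas you invoke joint measurability through $\mathcal{V}\hookrightarrow\mathcal{C}(0,T;\tilde V)$ — both are standard and equivalent here.
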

	\begin{proof}
		For $u\in \mathcal{V}^N$ it holds for a.e. $t\in ]0,T[$ that $u(t,\cdot)$ is measurable. By continuity of $g$ also $g(u(t,\cdot))$ is measurable for a.e. $t\in]0,T[$ and
		$$\Vert g(u(t,\cdot))\Vert_{L^{\hat{p}}(\Omega)} \leq L \Vert u(t,\cdot)\Vert_{L^{\hat{p}}(\Omega)^N}+\vert g(0)\vert\vert\Omega\vert^{1/\hat{p}}<\infty$$ for a.e. $t\in]0,T[$ by $V\hookdoubleheadrightarrow L^{\hat{p}}(\Omega)$. Due to weak measurability of $t\mapsto g(u(t,\cdot))$ and separability of $L^{\hat{p}}(\Omega)$ as $1\leq \hat{p}<\infty$ we derive by Pettis Theorem that $t\mapsto g(u(t,\cdot))$ is Bochner measurable. As for $u\in\mathcal{V}^N$ with $\mathcal{V}=L^p(0,T;V)\cap W^{1,p,p}(0,T;\tilde{V})$ 
		\[
		\Vert g(u)\Vert_{L^p(0,T;L^{\hat{p}}(\Omega))}\leq L\Vert u\Vert_{L^p(0,T;L^{\hat{p}}(\Omega))^N}+\vert g(0)\vert T^{1/p}\vert\Omega\vert^{1/\hat{p}}<\infty
		\]
		the remaining assertion follows as $p\geq q$ and $L^{\hat{p}}(\Omega)\hookrightarrow W$.
	\end{proof}
	\begin{lemma}
		\label{higher_reg_a}
		Let Assumption \ref{ass_init_set} hold true and let $g:\mathbb{R}^N\to \mathbb{R}$ be Lipschitz continuous with constant $L>0$. Suppose further that $V\hookrightarrow W^{1,\tilde{p}}(\Omega)$ for some $1<\tilde{p}<\infty$. Then $g$ defines a well-defined Nemytskii operator $g:\mathcal{V}^N\to L^p(0,T;W^{1,\tilde{p}}(\Omega))$.%
	\end{lemma}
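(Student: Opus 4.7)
The plan is to mirror the structure of the proof of Lemma \ref{lem:lower_chi}, upgrading the target space from $L^p(0,T;L^{\hat p}(\Omega))$ to $L^p(0,T;W^{1,\tilde p}(\Omega))$. The two tasks are: (i) to show that $g(u(t,\cdot))$ belongs to $W^{1,\tilde p}(\Omega)$ for a.e.~$t$ with a usable pointwise bound, and (ii) to establish Bochner measurability together with $L^p$-integrability in $t$.

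For (i), I would fix $u\in\mathcal V^N$ and exploit the new embedding $V\hookrightarrow W^{1,\tilde p}(\Omega)$ to conclude that $u(t,\cdot)\in W^{1,\tilde p}(\Omega)^N$ for a.e.~$t$. The classical Lipschitz chain rule for Sobolev functions, as e.g.~covered by Theorem \ref{th:mizel} or the results in \cite{AppZab}, then gives $g(u(t,\cdot))\in W^{1,\tilde p}(\Omega)$ with $|\nabla(g\circ u(t,\cdot))|\leq L|\nabla u(t,\cdot)|$ a.e., and together with $|g(z)|\leq L\|z\|+|g(0)|$ this yields the pointwise norm estimate
\[
\|g(u(t,\cdot))\|_{W^{1,\tilde p}(\Omega)}\leq C\bigl(1+\|u(t,\cdot)\|_{W^{1,\tilde p}(\Omega)^N}\bigr)
\]
with a constant $C$ depending only on $L$, $|g(0)|$ and $|\Omega|$. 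Integrability in time then follows immediately: integrating this estimate in $t$, using the embedding $V^N\hookrightarrow W^{1,\tilde p}(\Omega)^N$ and $u\in L^p(0,T;V^N)$, gives $\int_0^T\|g(u(t,\cdot))\|_{W^{1,\tilde p}(\Omega)}^p\dx t<\infty$.

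For (ii), which I expect to be the delicate step, I would invoke Pettis' theorem: since $W^{1,\tilde p}(\Omega)$ is separable for $1<\tilde p<\infty$, it suffices to show weak measurability of $t\mapsto g(u(t,\cdot))$. Any $\phi\in (W^{1,\tilde p}(\Omega))^*$ is representable by some $(\psi_0,\psi_1,\dots,\psi_d)\in L^{\tilde p'}(\Omega)^{d+1}$ via $\phi(v)=\int_\Omega\psi_0 v\dx x+\sum_{i=1}^d\int_\Omega\psi_i\partial_{x_i}v\dx x$. The zeroth-order part is measurable in $t$ by joint measurability of $(t,x)\mapsto g(u(t,x))$ and Fubini, exactly as in Lemma \ref{lem:lower_chi}. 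For the derivative parts I would fix, via Rademacher's theorem, a Borel-measurable version $\nabla g$ defined a.e.~on $\mathbb R^N$, and invoke the a.e.~chain rule $\partial_{x_i}(g\circ u(t,\cdot))(x)=\nabla g(u(t,x))\cdot\partial_{x_i}u(t,x)$; joint measurability of the right-hand side and Fubini then yield measurability of $t\mapsto\int_\Omega\psi_i\,\partial_{x_i}(g\circ u(t,\cdot))\dx x$. This completes weak measurability, hence Bochner measurability via Pettis, and combined with step (i) closes the argument.

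The main obstacle is precisely this measurability question at the level of gradients: the Lipschitz-only regularity of $g$ rules out a direct continuity argument for the Nemytskii operator on $W^{1,\tilde p}(\Omega)$, so the selection of a Borel-measurable a.e.~representative of $\nabla g$ through Rademacher and the careful use of Fubini are essential. An alternative would be to mollify $g$ to smooth Lipschitz $g_k$ with uniform constants and pass to the limit, but the Pettis route appears shorter and avoids additional convergence analysis.
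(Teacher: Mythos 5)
Your overall strategy is the same as the paper's: pointwise in time, Theorem \ref{th:mizel} gives $g(u(t,\cdot))\in W^{1,\tilde p}(\Omega)$ with a linear norm bound, Pettis' theorem upgrades weak to Bochner measurability, and integrating in time together with $V\hookrightarrow W^{1,\tilde p}(\Omega)$ yields the $L^p(0,T;W^{1,\tilde p}(\Omega))$ estimate; the paper justifies the gradient bound via the chain rule of \cite[Theorem 1.3]{Florin2022}.

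The one step that does not hold as you state it is the measurability argument for the gradient terms: for $N\geq 2$, the identity $\partial_{x_i}(g\circ u(t,\cdot))(x)=\nabla g(u(t,x))\cdot\partial_{x_i}u(t,x)$ with an \emph{arbitrary} Borel representative of $\nabla g$ fixed via Rademacher is false in general, because $u(t,\cdot)$ may map a set of positive measure into the Lebesgue-null set where $g$ is not differentiable, and there the value of the chosen representative matters. For example, with $g(y_1,y_2)=|y_1-y_2|$ and $u=(v,v)$ one has $g\circ u\equiv 0$, while the representative $\nabla g:=(1,1)$ on the diagonal produces $2\nabla v\neq 0$. This is exactly why the paper appeals to the rigorous chain rule in \cite[Theorem 1.3]{Florin2022} (which also delivers the bound $|\nabla(g\circ u)|\leq L|\nabla u|$ that you use in step (i)). The measurability itself can be repaired without any pointwise chain rule: either compose the superposition operator $g:W^{1,\tilde p}(\Omega)^N\to W^{1,\tilde p}(\Omega)$, which is continuous by Theorem \ref{th:mizel}, with the strongly measurable map $t\mapsto u(t,\cdot)\in W^{1,\tilde p}(\Omega)^N$; or, within your duality argument, take $\psi_i\in\mathcal{C}_c^\infty(\Omega)$ and integrate by parts, $\int_\Omega\psi_i\,\partial_{x_i}(g\circ u(t,\cdot))\dx x=-\int_\Omega\partial_{x_i}\psi_i\,g(u(t,\cdot))\dx x$, which is legitimate since step (i) already gives $g(u(t,\cdot))\in W^{1,\tilde p}(\Omega)$ for a.e.\ $t$; this reduces everything to the zeroth-order case, and such functionals form a countable norming family, which suffices for strong measurability in the separable space $W^{1,\tilde p}(\Omega)$. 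With this correction your proof coincides with the paper's.
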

	\begin{proof}
		For $u\in \mathcal{V}^N$ it holds for a.e. $t\in ]0,T[$ that $u(t,\cdot)$ is measurable. By continuity of $g$ also $g(u(t,\cdot))$ is measurable for a.e. $t\in]0,T[$. By Theorem \ref{th:mizel} we have that $g: V^N\to W^{1,\tilde{p}}(\Omega)$ is well-defined, bounded and continuous. In particular, we have for $u\in\mathcal{V}^N$ and a.e. $t\in]0,T[$ that $\Vert g(u(t,\cdot))\Vert_{W^{1,\tilde{p}}(\Omega)}<\infty$. Due to weak measurability of $t\mapsto g(u(t,\cdot))$ and separability of $W^{1,\tilde{p}}(\Omega)$ as $1< \tilde{p}<\infty$ we derive by Pettis Theorem that $t\mapsto g(u(t,\cdot))$ is Bochner measurable. We show that for $u\in\mathcal{V}^N$ with $\mathcal{V}=L^p(0,T;V)\cap W^{1,p,p}(0,T;\tilde{V})$ it holds
		\begin{align}
			\label{assert_florin}
			\Vert g(u)\Vert_{L^p(0,T;W^{1,\tilde{p}}(\Omega))}<\infty.
		\end{align}
		By \cite[Theorem 1.3]{Florin2022} the chain rule applied to $g(u)$ holds almost everywhere, i.e., 
		$$\nabla_x(g(u(t,x)))=\nabla g(u(t,x))\nabla_x u(t,x) \quad \text{for a.e.} ~ (t,x)\in]0,T[\times\Omega.$$
		Thus, with $\Vert \nabla g(u)\Vert_{L^\infty(]0,T[\times\Omega)}\leq L$ we obtain for some $c>0$ that
		\begin{align*}
			\Vert g(u)-&g(0)\Vert_{L^p(0,T;W^{1,\tilde{p}}(\Omega))}\\
			&= \left(\int_0^T\left(\int_\Omega \vert g(u(t,x))-g(0)\vert^{\tilde{p}}\dx x\dx t+\int_\Omega\vert\nabla_x(g(u(t,x)))\vert^{\tilde{p}}\dx x\dx t\right)^{p/\tilde{p}}\right)^{1/p}\\
			&\leq L \left(\int_0^T\left(\int_\Omega \vert u(t,x)\vert^{\tilde{p}}\dx x\dx t+\int_\Omega\vert \nabla_xu(t,x)\vert^{\tilde{p}}\dx x\dx t\right)^{p/\tilde{p}}\right)^{1/p}\\
			&\leq c\Vert u\Vert_{L^p(0,T;W^{1,\tilde{p}}(\Omega))^N}.
		\end{align*}
		Consequently, the assertion in \eqref{assert_florin} follows by the embedding $V\hookrightarrow W^{1, \tilde{p}}(\Omega)$.%
	\end{proof}
	With this, we can formulate the following extension result for $\bar{f}_{\theta_n,n}$:
	\begin{proposition}
		\label{prop:f_mod_nemyt}
		Under Assumption \ref{ass_init_set} and \ref{ass:param_reaction}, let the elements of $\mathcal{F}_n^m\hspace*{-0.1cm}$ be Lipschitz continuous. Then $\bar{f}_{\theta_n, n}$ defines a well-defined Nemytskii operator $\bar{f}_{\theta_n, n}:\mathcal{V}^N\to \mathcal{W}$.
	\end{proposition}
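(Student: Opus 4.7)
The plan is to reduce the statement to an application of Lemma~\ref{lem:lower_chi} by showing that $\bar{f}_{\theta_n,n}:\mathbb{R}^N\to\mathbb{R}$ is continuous and enjoys a linear growth bound. First, since $f_{\theta_n,n}$ is Lipschitz continuous by hypothesis, $P_+:\mathbb{R}\to\mathbb{R}$ is $1$-Lipschitz, and $\chi^m$ is smooth and bounded, the composition defining $\bar{f}_{\theta_n,n}$ is continuous on $\mathbb{R}^N$. Moreover, Lemma~\ref{lem:auxiliary_result} yields the pointwise bound
\[
|\bar{f}_{\theta_n,n}(u)| \leq 2\bigl(L\|u\|+|f_{\theta_n,n}(0)|\bigr)\quad\text{for all } u\in\mathbb{R}^N,
\]
where $L$ is the Lipschitz constant of $f_{\theta_n,n}$. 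This is the analogue, for $\bar{f}_{\theta_n,n}$, of the growth estimate available for a globally Lipschitz function, and it is precisely the ingredient that drives the proof of Lemma~\ref{lem:lower_chi}.

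With these two properties in hand, I would essentially mirror the proof of Lemma~\ref{lem:lower_chi}. For any $u\in\mathcal{V}^N$ and a.e.\ $t\in]0,T[$, the map $u(t,\cdot)$ is measurable, and by continuity of $\bar{f}_{\theta_n,n}$ so is $\bar{f}_{\theta_n,n}(u(t,\cdot))$. The growth bound combined with $V\hookrightarrow L^{\hat p}(\Omega)$ yields
\[
\|\bar{f}_{\theta_n,n}(u(t,\cdot))\|_{L^{\hat p}(\Omega)} \leq 2L\|u(t,\cdot)\|_{L^{\hat p}(\Omega)^N}+2|f_{\theta_n,n}(0)|\,|\Omega|^{1/\hat p}<\infty.
\]
Pettis' theorem, together with separability of $L^{\hat p}(\Omega)$ (since $1\leq\hat p<\infty$), upgrades weak measurability of $t\mapsto \bar{f}_{\theta_n,n}(u(t,\cdot))$ to Bochner measurability. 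Integrating the above estimate in time over $]0,T[$ and using $u\in\mathcal{V}\hookrightarrow L^p(0,T;L^{\hat p}(\Omega))^N$ then shows $\bar{f}_{\theta_n,n}(u)\in L^p(0,T;L^{\hat p}(\Omega))$. Finally, invoking $p\geq q$ and $L^{\hat p}(\Omega)\hookrightarrow W=L^{\hat q}(\Omega)$ (from Assumption~\ref{ass_init_set}) delivers $\bar{f}_{\theta_n,n}(u)\in\mathcal{W}$.

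I do not expect any serious obstacle. The only subtle point is that, unlike in Lemma~\ref{lem:lower_chi}, the modified reaction term $\bar{f}_{\theta_n,n}$ is in general only locally Lipschitz (not globally Lipschitz), so one cannot cite Lemma~\ref{lem:lower_chi} verbatim. The resolution is to observe that the proof of Lemma~\ref{lem:lower_chi} uses the global Lipschitz hypothesis only to obtain the linear growth bound on $g$, and this bound is already supplied for $\bar{f}_{\theta_n,n}$ by Lemma~\ref{lem:auxiliary_result}. Consequently, the argument goes through unchanged with the Lipschitz constant $L$ replaced by $2L$ and $|g(0)|$ replaced by $2|f_{\theta_n,n}(0)|$.
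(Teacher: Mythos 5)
Your argument is correct. The only point one must check is the one you flag yourself: Lemma~\ref{lem:lower_chi} is stated for globally Lipschitz $g$, whereas $\bar{f}_{\theta_n,n}$ is in general only locally Lipschitz; your observation that the lemma's proof uses the Lipschitz hypothesis only through continuity plus the linear growth bound $|g(u)|\leq L\Vert u\Vert+|g(0)|$ is accurate, and the growth estimate from the end of the proof of Lemma~\ref{lem:auxiliary_result} (indeed, since $0\leq\chi^m\leq 1$ one even has $|\bar{f}_{\theta_n,n}(u)|\leq |f_{\theta_n,n}(u)|\leq L\Vert u\Vert+|f_{\theta_n,n}(0)|$) supplies exactly what is needed, so the Pettis/Bochner-measurability and $L^p(0,T;L^{\hat p}(\Omega))$ estimates go through and the embeddings $p\geq q$, $L^{\hat p}(\Omega)\hookrightarrow L^{\hat q}(\Omega)$ finish the proof.

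The paper organizes this differently: it never bounds $\bar{f}_{\theta_n,n}$ as a single function. Instead it applies Lemma~\ref{lem:lower_chi} verbatim to the globally Lipschitz pieces $f_{\theta_n,n}$ and $P_+\circ f_{\theta_n,n}$, and then treats the product $((P_+\circ f_{\theta_n,n})-f_{\theta_n,n})\cdot\chi_n$ by dominating it pointwise with the unmultiplied difference, using $\Vert\chi^m\Vert_{\mathcal{C}(\mathbb{R})}\leq 1$; measurability of the product and the space-time bound then follow as in the lemma. Your route collapses these steps into one by importing the global growth bound from Lemma~\ref{lem:auxiliary_result} and rerunning the lemma's proof for $\bar{f}_{\theta_n,n}$ directly, which is slightly more streamlined; the paper's termwise treatment has the advantage of reusing Lemma~\ref{lem:lower_chi} as a black box and of setting up the product estimate that is needed again in the higher-regularity and continuity results of Appendix~\ref{app:assum_reactions}. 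Both proofs are valid and rest on the same underlying Bochner-space argument.
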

	\begin{proof}
		We consider first the extension of $\chi_n:\mathbb{R}^N\to \mathbb{R}, u\mapsto \chi^m(u_n)$ to $\chi_n:\mathcal{V}^N\to \mathcal{W}$, w.l.o.g. for $N=1$. As $\chi^m$ is a transition function, the $\chi_n^{(l)}$ are compactly supported in a common interval for all $l\in \mathbb{N}$, and $\sup_{x\in\mathbb{R}}\vert \chi_n^{(l)}(x)\vert <\infty$ for $l\in \mathbb{N}_0$. Thus, the $\chi^{(l)}_n$ are Lipschitz continuous for $l\in\mathbb{N}_0$ and it holds true that
		\begin{align}
			\label{chiderivatives}
			\sup_{0\leq j\leq l}\Vert \chi_n^{(j)}\Vert_{\mathcal{C}(\mathbb{R})} <\infty
		\end{align}
		for $l\in \mathbb{N}_0$. In particular the results in Lemma \ref{lem:lower_chi} apply. Now since $f_{\theta_n, n}$ is Lipschitz continuous, also $P_+\circ f_{\theta_n,n}$ is Lipschitz continuous. By Lemma \ref{lem:lower_chi}, $P_+\circ f_{\theta_n,n}:\mathcal{V}^N\to L^p(0,T;L^{\hat{p}}(\Omega))$ is well-defined. Given spatial regularity $L^{\hat{p}}(\Omega)$ of $P_+\circ f_{\theta_n,n}-f_{\theta_n,n}$ we derive by $\Vert \chi^m\Vert_{\mathcal{C}(\mathbb{R})} \leq1$ that for $u\in \mathcal{V}^N$ and a.e. $t\in ]0,T[$ it holds
		\begin{multline*}
			\Vert ((P_+\circ f_{\theta_n,n})(u(t,\cdot))-f_{\theta_n,n}(u(t,\cdot)))\chi_n(u(t,\cdot))\Vert_{L^{\hat{p}}(\Omega)}\\
			\leq \Vert (P_+\circ f_{\theta_n,n})(u(t,\cdot))-f_{\theta_n,n}(u(t,\cdot))\Vert_{L^{\hat{p}}(\Omega)}<\infty.
		\end{multline*}
		As in the previous results the underlying image space is separable and the operator is Bochner measurable. To obtain space-time regularity we have to assure that the image space $V$ under multiplication is contained in $L^{\hat{p}}(\Omega)$ by a suitable Sobolev embedding as before. Then as $\Vert \chi^m\Vert_{\mathcal{C}(\mathbb{R})} \leq1$ we have again that
		\begin{multline*}
			\Vert ((P_+\circ f_{\theta_n,n})(u)-f_{\theta_n,n}(u))\chi_n(u)\Vert_{L^{p}(0,T;L^{\hat{p}}(\Omega))}\\ \leq \Vert (P_+\circ f_{\theta_n,n})(u)-f_{\theta_n,n}(u)\Vert_{L^{p}(0,T;L^{\hat{p}}(\Omega))}
		\end{multline*}
		which is bounded by the previous considerations and a well-defined Nemytskii operator mapping $\mathcal{V}^N$ to $\mathcal{W}$. The same applies to $\bar{f}_{\theta_n,n}$.
	\end{proof}
	Similarly a higher regularity result based on Lemma \ref{higher_reg_a} can be recovered:
	\begin{proposition}
		Under Assumption \ref{ass_init_set} with $V\hookrightarrow W^{1,\tilde{p}}(\Omega)$ for some $2\leq\tilde{p}<\infty$ and $p\geq 2$, let Assumption \ref{ass:param_reaction} hold for $\mathcal{F}_n^m$ and its elements $f_{\theta_n,n}$ be Lipschitz continuous. Then $\bar{f}_{\theta_n, n}:\mathcal{V}^N\to L^{p/2}(0,T;W^{1,\tilde{p}/2}(\Omega))$ is well-defined.
	\end{proposition}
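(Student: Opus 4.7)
The plan is to decompose
\[
\bar{f}_{\theta_n,n}(u) = \bigl((P_+\circ f_{\theta_n,n})(u)-f_{\theta_n,n}(u)\bigr)\,\chi^m(u_n) + f_{\theta_n,n}(u),
\]
and establish the regularity of each summand separately. Since $f_{\theta_n,n}$ is Lipschitz by assumption, the positive part $P_+\circ f_{\theta_n,n}$ is Lipschitz as well; and because $\chi^m\in\mathcal{C}^\infty(\mathbb{R})$ is a transition function, $u\mapsto\chi^m(u_n)$ is globally Lipschitz on $\mathbb{R}^N$ with bounded derivative. Hence Lemma \ref{higher_reg_a} applies to each of $f_{\theta_n,n}$, $P_+\circ f_{\theta_n,n}$, and $u\mapsto\chi^m(u_n)$, yielding that they all define well-defined Nemytskii operators $\mathcal{V}^N\to L^p(0,T;W^{1,\tilde{p}}(\Omega))$.

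Next, I would invoke Theorem \ref{th:behzadan} with the parameter choice $s_1=s_2=s=1$, $p_1=p_2=\tilde{p}$ and $p=\tilde{p}/2$. Conditions i) and ii) reduce to $0\ge -d/\tilde p$, condition iii) to $1>0$, and the additional requirement $1/p_1+1/p_2\ge 1/p$ holds with equality since $s=1\in\mathbb{N}_0$. Therefore multiplication defines a continuous bilinear map
\[
W^{1,\tilde{p}}(\Omega)\times W^{1,\tilde{p}}(\Omega)\to W^{1,\tilde{p}/2}(\Omega),
\]
so that, pointwise in $t$, the estimate
$\|fg\|_{W^{1,\tilde p/2}(\Omega)}\lesssim\|f\|_{W^{1,\tilde p}(\Omega)}\|g\|_{W^{1,\tilde p}(\Omega)}$
holds. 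Applying this to $f=(P_+\circ f_{\theta_n,n})(u)-f_{\theta_n,n}(u)$ and $g=\chi^m(u_n)$, and using H\"older's inequality in time (valid since $p/2\ge 1$) together with the $L^p(0,T;W^{1,\tilde p}(\Omega))$-bounds from the previous step, I obtain that the product term lies in $L^{p/2}(0,T;W^{1,\tilde p/2}(\Omega))$.

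For the remaining summand $f_{\theta_n,n}(u)$, Lemma \ref{higher_reg_a} gives membership in $L^p(0,T;W^{1,\tilde{p}}(\Omega))$. Since $T<\infty$ and $\Omega$ is bounded, one has continuous embeddings $L^p(0,T;\cdot)\hookrightarrow L^{p/2}(0,T;\cdot)$ and $W^{1,\tilde{p}}(\Omega)\hookrightarrow W^{1,\tilde{p}/2}(\Omega)$, so this summand also lies in $L^{p/2}(0,T;W^{1,\tilde{p}/2}(\Omega))$. Bochner measurability is obtained as in the proof of Proposition \ref{prop:f_mod_nemyt}, namely by weak measurability of $t\mapsto\bar f_{\theta_n,n}(u(t,\cdot))$ combined with separability of $W^{1,\tilde p/2}(\Omega)$ and Pettis's theorem.

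The main technical hurdle is the careful verification of the hypotheses of Theorem \ref{th:behzadan} for the chosen parameter triple (in particular the boundary case $1/p_1+1/p_2 = 1/p$) and the application of the chain rule from \cite{Florin2022} needed inside Lemma \ref{higher_reg_a} for the less-smooth map $P_+\circ f_{\theta_n,n}$; apart from that, the argument is an assembly of the already established ingredients.
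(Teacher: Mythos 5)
Your proof is correct and follows essentially the same route as the paper: the same decomposition of $\bar f_{\theta_n,n}$, Lemma \ref{higher_reg_a} applied to $f_{\theta_n,n}$, $P_+\circ f_{\theta_n,n}$ and $u\mapsto\chi^m(u_n)$, and Theorem \ref{th:behzadan} (with $s_1=s_2=s=1$, $p_1=p_2=\tilde p$, $p=\tilde p/2$, admissible since $\tilde p\ge 2$) for the spatial regularity of the product. The only, harmless, deviation is the treatment of the time variable: you use the bilinear continuity estimate $\Vert fg\Vert_{W^{1,\tilde p/2}(\Omega)}\lesssim\Vert f\Vert_{W^{1,\tilde p}(\Omega)}\Vert g\Vert_{W^{1,\tilde p}(\Omega)}$ together with Cauchy--Schwarz in $t$ (valid as $p\ge 2$), whereas the paper expands the gradient of the product and bounds the resulting mixed term $\int_0^T\bigl(\int_\Omega \vert u\nabla_x u\vert^{\tilde p/2}\dx x\bigr)^{p/\tilde p}\dx t$ by generalized H\"older; both give the same conclusion.
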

	\begin{proof}
		Denoting as in the previous proof $\chi_n:\mathbb{R}^N\to \mathbb{R}, u\mapsto \chi^m(u_n)$, well-definedness of $\chi_n, f_{\theta_n,n},P_+\circ f_{\theta_n, n}:\mathcal{V}^N\to L^p(0,T;W^{1,\tilde{p}}(\Omega))$ follows by Lemma \ref{higher_reg_a}. Given spatial regularity $W^{1,\tilde{p}}(\Omega)$ for $\chi_n$ and $f_{\theta_n,n}$ it holds by Theorem \ref{th:behzadan} that
		\[
		\Vert ((P_+\circ f_{\theta_n,n})(u)-f_{\theta_n,n}(u))\chi_n(u)\Vert_{W^{1,\tilde{p}/2}(\Omega)}<\infty.
		\]
		For spatio-temporal regularity note that we have for some constant $c>0$%
		\begin{multline*}
			\Vert ((P_+\circ f_{\theta_n,n})(u)-f_{\theta_n,n}(u))\chi_n(u)\Vert^{p/2}_{L^{p/2}(0,T;W^{1,\tilde{p}/2}(\Omega))}\\
			\leq c\Vert u\Vert^{p/2}_{L^{p/2}(0,T;W^{1,\tilde{p}/2}(\Omega))}+c\Vert (P_+\circ f_{\theta_n,n})(u)-f_{\theta_n,n}(u)\Vert^{p/2}_{L^{p/2}(0,T;W^{1,\tilde{p}/2}(\Omega))}\\
			+c\int_0^T\left(\int_\Omega \vert u(t,x)\nabla_x u(t,x)\vert^{\tilde{p}/2}\dx x\right)^{p/\tilde{p}}\dx t.
		\end{multline*}
		As $V \hookrightarrow W^{1,\tilde{p}}(\Omega)\hookrightarrow W^{1,\tilde{p}/2}(\Omega)$ it holds that
		\begin{align}
			\label{termsbdd}
			\Vert (P_+\circ f_{\theta_n,n})(u)-f_{\theta_n,n}(u)\Vert_{L^{p/2}(0,T;W^{1,\tilde{p}/2}(\Omega))}, \Vert u\Vert_{L^{p/2}(0,T;W^{1,\tilde{p}/2}(\Omega))}<\infty.
		\end{align}
		Now by boundedness of
		\begin{align}
			\label{uestimation}
			\notag\int_0^T\left(\int_\Omega \vert u(t,x)\nabla_x u(t,x)\vert^{\tilde{p}/2}\dx x\right)^{p/\tilde{p}}\dx t&\leq \int_0^T\Vert u(t,\cdot)\Vert_{L^{\tilde{p}}(\Omega)}^{p/2}\Vert \nabla_x u(t,\cdot)\Vert_{L^{\tilde{p}}(\Omega)}^{p/2}\dx t\\
			\notag&\leq \Vert u\Vert^{p/2}_{L^p(0,T;L^{\tilde{p}}(\Omega))}\Vert \nabla u\Vert^{p/2}_{L^p(0,T;L^{\tilde{p}}(\Omega))}\\
			&\leq\Vert u\Vert^p_{L^p(0,T;W^{1,\tilde{p}}(\Omega))},
		\end{align}
		\[
		\text{also}\quad
		\Vert ((P_+\circ f_{\theta_n,n})(u)-f_{\theta_n,n}(u))\chi_n(u)\Vert_{L^{p/2}(0,T;W^{1,\tilde{p}/2}(\Omega))}<\infty.
		\]
		By definition of $\bar{f}_{\theta_n,n}$ in \eqref{def_fmod} the claimed statement follows.
	\end{proof}
	The discussed results hold in particular under the space setup in Assumption \ref{ass_init_strict}.

	\begin{remark}[Maximal regularity]
		Above estimations do not exploit the maximal possible regularity. Under Assumption \ref{ass_init_strict} for $W^{1,\tilde{p}}(\Omega)-$regularity of $P_+\circ f_{\theta_n,n}-f_{\theta_n,n}$ and $\chi_n$ (by Theorem \ref{th:mizel}) it is possible to show that $((P_+\circ f_{\theta_n,n})(u)-f_{\theta_n,n}(u))\chi_n(u)$ attains $W^{1,\beta}(\Omega)$-regularity for $u\in\mathcal{V}^N$ with maximal $\beta$ given by
		\[
		\beta=
		\begin{cases}
			\tilde{p} &\text{if} ~ \tilde{p}> d\\
			\tilde{p}-\epsilon & \text{if } ~ \tilde{p}=d\\
			\frac{d\tilde{p}}{2d-\tilde{p}}&\text{else}
		\end{cases}
		\]
		for some small $\epsilon>0$ using Theorem \ref{th:behzadan}. Note that $W^{\tilde{m},\tilde{p}}(\Omega)\hookrightarrow W^{1, v}(\Omega)$ with $v=\frac{d\tilde{p}}{d-(\tilde{m}-1)\tilde{p}}\geq \tilde{p}\geq \beta$ if $\tilde{p}<\frac{d}{\tilde{m}-1}$ and else $W^{\tilde{m},\tilde{p}}(\Omega)\hookrightarrow W^{1, v}(\Omega)$ in particular for $\tilde{p}\leq v<\infty$, such that the terms in \eqref{termsbdd} are indeed bounded for above $\beta$ instead of $\tilde{p}/2$. The term in \eqref{uestimation} is bounded for above $\beta$ instead of $\tilde{p}/2$ due to Hölder's generalized inequality by the previous embedding together with $W^{\tilde{m},\tilde{p}}(\Omega)\hookrightarrow L^{\frac{d\tilde{p}}{d-\tilde{m}\tilde{p}}}(\Omega)$ if $\tilde{p}<\frac{d}{\tilde{m}}$ and else $W^{\tilde{m},\tilde{p}}(\Omega)\hookrightarrow L^{w}(\Omega)$ in particular with $\tilde{p}\leq w<\infty$ (and $\tilde{p}\leq w\leq \infty$ if $\tilde{p}\tilde{m}>d$).
		This result can be improved by exploiting maximal first order regularity of the superposition operators $(P_+\circ f_{\theta_n,n})(u)-f_{\theta_n,n}(u)$ and $\chi_n(u)$. By \cite[Theorem 1.3]{Florin2022} one can show $W^{1,\gamma}(\Omega)$-regularity for each term with 
		\begin{equation}
			\label{gamma_cases}
		\gamma
		\begin{cases}
			=\frac{d\tilde{p}}{d-(\tilde{m}-1)\tilde{p}} &\text{if} ~ \tilde{p}<\frac{d}{\tilde{m}-1}\\
			\in(d,\infty)&\text{else}
		\end{cases}.
		\end{equation}
		As a consequence, we obtain the enhanced choices
		\begin{equation}
			\label{beta_cases}
		\beta 
		\begin{cases}
			= \frac{d\tilde{p}}{2d-(2\tilde{m}-1)\tilde{p}} &\text{if } ~ \tilde{p}<\frac{d}{\tilde{m}}\\
			= d-\epsilon & \text{if} ~ \tilde{p}=\frac{d}{\tilde{m}}\\
			= \frac{d\tilde{p}}{d-(\tilde{m}-1)\tilde{p}} &\text{if } ~ \frac{d}{\tilde{m}}<\tilde{p}<\frac{d}{\tilde{m}-1}\\
			\leq \min(\gamma,v) <\infty &\text{else}
		\end{cases}.
		\end{equation}
		Similarly as before boundedness of \eqref{termsbdd} and \eqref{uestimation} follows under \eqref{beta_cases} and \eqref{gamma_cases}.
	\end{remark}
	\subsection{Continuity property}
	We proceed with verifying that \[
	\Theta_n^m\times \mathcal{V}^N\ni (\theta_n,v)\mapsto \bar{f}_{\theta_n,n}(v)\in \mathcal{W}
	\] is weakly-weakly continuous which is sufficient by \cite[Lemma 40]{morina_holler/online} (see the details in \cite[Appendix C]{morina_holler/online}). In view of the definition of $\bar{f}_{\theta_n,n}$ in \eqref{def_fmod}, we argue first that
	$$\Theta_n^m\times \mathcal{V}^N\ni (\theta_n,v)\mapsto P_+\circ f_{\theta_n,n}\in L^p(0,T;L^{\hat{q}}(\Omega))$$
	is weakly-strongly continuous. Since $\Theta_n^m\times \mathcal{V}^N\ni (\theta_n,v)\mapsto f_{\theta_n,n}\in L^p(0,T;L^{\hat{q}}(\Omega))$ is weakly-strongly continuous due to Assumption \ref{ass:param_reaction}, this follows by Lipschitz continuity of $P_+$ with constant one, i.e., for $x,y\in \mathbb{R}$ we have $\vert P_+(x)-P_+(y) \vert \leq \vert x-y\vert$. One can also show weak-strong continuity of $\chi_n:\mathcal{V}^N\to L^p(0,T;L^{\hat{p}}(\Omega))$, $u\mapsto \chi^m(u_n)$ using %
	Lipschitz continuity of $\chi_n$ and the Aubin-Lions Lemma \cite[Lemma 7.7]{Roubíček2013}.
	However, it turns out that we require higher regularity due to the multiplication involved in the definition of $\bar{f}_{\theta_n, n}$. For that, we need to impose higher regularity on the state space $V$, as formulated in Assumption \ref{ass_init_strict}. In the following we write $\chi = \chi^m$ for a fixed $m\in\mathbb{N}$.
	\begin{lemma}
		\label{impr_ws_cont_chi}
		Under Assumption \ref{ass_init_strict} it holds true that $\chi:\mathcal{V}\to L^p(0,T;W^{1,\tilde{p}}(\Omega))$ is weakly-strongly continuous.
	\end{lemma}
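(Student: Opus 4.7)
The plan is to first upgrade weak convergence in $\mathcal{V}$ to strong convergence in $L^p(0,T;W^{1,\tilde{p}}(\Omega))$ via an Aubin--Lions-type compactness argument, and then to transport this strong convergence through $\chi$ by means of the chain rule combined with dominated convergence. The starting point is Assumption \ref{ass_init_strict}: condition iii) provides the compact embedding $V \hookdoubleheadrightarrow W^{1,\tilde{p}}(\Omega)$, and one has the continuous embedding $\tilde{V} = V \hookrightarrow W^{1,\tilde{p}}(\Omega)$, so that the Aubin--Lions Lemma (\cite[Lemma 7.7]{Roubíček2013}) yields $\mathcal{V} = L^p(0,T;V)\cap W^{1,p,p}(0,T;\tilde{V}) \hookdoubleheadrightarrow L^p(0,T;W^{1,\tilde{p}}(\Omega))$.

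Given a sequence $v^k \rightharpoonup v$ in $\mathcal{V}$, this compactness produces $v^k \to v$ strongly in $L^p(0,T;W^{1,\tilde{p}}(\Omega))$. Extracting a subsequence (without relabelling), one obtains pointwise a.e.\ convergence $v^k \to v$ on $]0,T[\times\Omega$, while $(v^k)$ remains uniformly bounded in $L^\infty(]0,T[\times\Omega)$ via the embedding \eqref{uniform_state_embedding}. Since $\chi$ and $\chi'$ are smooth and compactly supported in their variation, both are bounded and Lipschitz on $\mathbb{R}$.

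Next I would estimate the $W^{1,\tilde{p}}$-norm using the chain rule $\nabla\chi(v) = \chi'(v)\nabla v$, valid pointwise a.e.\ for Sobolev $v$, yielding
\begin{equation*}
\Vert \chi(v^k)-\chi(v)\Vert_{W^{1,\tilde{p}}(\Omega)} \le \Vert\chi'\Vert_\infty\Vert v^k-v\Vert_{L^{\tilde{p}}(\Omega)} + \Vert\chi'(v^k)(\nabla v^k-\nabla v)\Vert_{L^{\tilde{p}}(\Omega)} + \Vert(\chi'(v^k)-\chi'(v))\nabla v\Vert_{L^{\tilde{p}}(\Omega)}.
\end{equation*}
Integrating the $p$-th power in time, the first two contributions are controlled by $\Vert\chi'\Vert_\infty\Vert v^k-v\Vert_{L^p(0,T;W^{1,\tilde{p}}(\Omega))}$, which tends to zero by Step~1. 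For the third term, $|\chi'(v^k)-\chi'(v)|\le 2\Vert\chi'\Vert_\infty$ provides a uniform bound, and pointwise a.e.\ convergence of $v^k$ together with continuity of $\chi'$ gives $\chi'(v^k)\to\chi'(v)$ a.e. The dominating function $2\Vert\chi'\Vert_\infty|\nabla v|$ belongs to $L^p(0,T;L^{\tilde{p}}(\Omega))$ because $v \in \mathcal{V} \hookrightarrow L^p(0,T;W^{1,\tilde{p}}(\Omega))$, so Lebesgue's dominated convergence theorem forces this term to zero as well.

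This shows the subsequence converges strongly in $L^p(0,T;W^{1,\tilde{p}}(\Omega))$. A standard subsequence-of-subsequence argument (Urysohn's subsequence principle) applied to an arbitrary subsequence of $(v^k)$ then upgrades convergence to the full sequence, establishing weak--strong continuity. The one technical point to watch is the justification of the chain rule and of a.e.\ convergence in both variables; the latter is ensured by selecting the Aubin--Lions subsequence so that $v^k(t,\cdot)\to v(t,\cdot)$ in $W^{1,\tilde{p}}(\Omega)$ for a.e.\ $t$, combined with a diagonal extraction giving a.e.\ pointwise convergence in $x$. This is the main subtlety; once assembled, the dominated convergence step is routine thanks to the boundedness of $\chi'$.
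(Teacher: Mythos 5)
Your proposal is correct and follows essentially the same route as the paper: Aubin--Lions (via the compact embedding $V\hookdoubleheadrightarrow W^{1,\tilde{p}}(\Omega)$ from Assumption \ref{ass_init_strict}) to upgrade weak convergence in $\mathcal{V}$ to strong convergence in $L^p(0,T;W^{1,\tilde{p}}(\Omega))$, then the a.e.\ chain rule with the splitting $\chi'(v^k)(\nabla v^k-\nabla v)+(\chi'(v^k)-\chi'(v))\nabla v$, dominated convergence with majorant $2\Vert\chi'\Vert_\infty|\nabla v|$ for the second piece, and a subsequence-of-subsequence argument to recover the full sequence. The only cosmetic difference is that the paper runs the dominated convergence in two stages (first in $x$ for a.e.\ $t$, then in $t$), which is what your mixed-norm argument amounts to anyway.
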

	\begin{proof}
		First we note that due to Theorem \ref{th:mizel} the operators $\chi,\chi':W^{1,\tilde{p}}(\Omega)\to W^{1,\tilde{p}}(\Omega)$ are well-defined. Let now $(u_k)_k\subseteq \mathcal{V}$ such that $u_k\rightharpoonup u\in \mathcal{V}$ as $k\to \infty$. We show that $\chi(u_k)\to \chi(u)$ in $L^p(0,T;W^{1,\tilde{p}}(\Omega))$ as $k\to \infty$. As a consequence of the assumption $V\hookdoubleheadrightarrow W^{1,\tilde{p}}(\Omega)$ and either $W^{1,\tilde{p}}(\Omega)\hookrightarrow \tilde{V}$ or $\tilde{V}\hookrightarrow W^{1,\tilde{p}}(\Omega)$, there exists a subsequence (w.l.o.g. the whole sequence, else apply a subsequence argument) such that $u_k\to u$ in $L^p(0,T;W^{1,\tilde{p}}(\Omega))$ by the Aubin-Lions Lemma \cite[Lemma 7.7]{Roubíček2013}. As $\lim_{k\to\infty}\Vert u_k-u\Vert_{L^{p}(0,T;W^{1,\tilde{p}}(\Omega))}=0$ and
		\begin{multline}
			\label{chi_estimate_1}
			\Vert \chi(u_k)-\chi(u)\Vert_{L^p(0,T;W^{1,\tilde{p}}(\Omega))}^p\leq 2^{p-1}\Vert u_k-u\Vert_{L^p(0,T;L^{\tilde{p}}(\Omega))}\\
			+2^{p-1}\int_0^T\Vert\nabla_x\chi(u_k(t,\cdot))-\nabla_x\chi(u(t,\cdot))\Vert_{L^{\tilde{p}}(\Omega)}^p\dx t
		\end{multline}
		it remains to show that the integral on the right hand side of \eqref{chi_estimate_1} approaches zero as $k\to \infty$. Employing the chain rule, which holds a.e., this term is bounded by
		\begin{multline}
			\label{chi_estimate_2}
			\int_0^T\Vert \chi'(u_k(t,\cdot))\nabla_x u_k(t,\cdot)-\chi'(u(t,\cdot))\nabla_x u(t,\cdot)\Vert_{L^{\tilde{p}}(\Omega)}^p\dx t\\
			\leq \Vert\chi'\Vert_{L^\infty(\mathbb{R})}^p\Vert \nabla_x u_k-\nabla_x u\Vert_{L^p(0,T;L^{\tilde{p}}(\Omega))}\\+\int_0^T\Vert[\chi'(u_k(t,\cdot))-\chi'(u(t,\cdot))]\nabla_x u(t,\cdot)\Vert_{L^{\tilde{p}}(\Omega)}^p\dx t.
		\end{multline}
		The only open point to conclude the assertion of this lemma is to show that the integral on the right hand side of \eqref{chi_estimate_2} converges to zero as $k\to \infty$. For that, we show first that the integrand, which is majorized by the integrable function $t\mapsto 2\Vert \chi'\Vert_{L^\infty(\mathbb{R})}^p\Vert\nabla_xu(t,\cdot)\Vert_{L^{\tilde{p}}(\Omega)}^p,$	
		approaches zero pointwise in time, finishing the proof due to the Dominated Convergence Theorem. As $u_k\to u$ in $L^p(0,T;W^{1,\tilde{p}}(\Omega))$, it holds true that $u_k(t)\to u(t)$ in $W^{1,\tilde{p}}(\Omega)$ for a.e. $t\in ]0,T[$ and hence, for fixed $t$ that there exists a subsequence (again w.l.o.g. the whole sequence due to a subsequence argument) such that $u_k(t)\to u(t)$ pointwise in $\Omega$. Continuity of $\chi'$ implies that $\vert [\chi'(u_k(t,x))-\chi'(u(t,x))]\nabla_x u(t,x)\vert^{\tilde{p}}$ converges to zero for a.e. $x\in \Omega$. As it is majorized by the space integrable function $x\mapsto 2\Vert \chi'\Vert_{L^\infty(\mathbb{R})}^p\vert\nabla_xu(t,x)\vert^{\tilde{p}}$ for a.e. $t\in]0,T[$, the Dominated Convergence Theorem yields that the integrand on the right hand side of \eqref{chi_estimate_2} approaches zero as $k\to \infty$ for a.e. $t\in]0,T[$, finally, concluding the assertion of the lemma by the previous considerations.
	\end{proof}
	We argue next weak-strong continuity of $\bar{f}_{\theta_n,n}$.%
	\begin{proposition}
		\label{prop:ws_continuity_mod}
		Let Assumption \ref{ass:param_reaction} hold for $\mathcal{F}_n^m$ and its elements $f_{\theta_n,n}$ be Lipschitz continuous. Then, under Assumption \ref{ass_init_strict} it holds true that \[
		\Theta_n^m\times \mathcal{V}^N\ni (\theta_n,v)\mapsto \bar{f}_{\theta_n,n}(v)\in \mathcal{W}
		\] is weakly-strongly continuous.
	\end{proposition}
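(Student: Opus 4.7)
The plan is to decompose $\bar{f}_{\theta_n, n}$ into terms whose weak-strong continuity has already been established, and then handle the one essential product by a Dominated Convergence argument. Writing $g_{\theta_n, n} := P_+ \circ f_{\theta_n, n} - f_{\theta_n, n}$, the definition in \eqref{def_fmod} reads $\bar{f}_{\theta_n, n}(v) = g_{\theta_n, n}(v) \chi_n(v) + f_{\theta_n, n}(v)$, where $\chi_n(v) := \chi^m(v_n)$. The additive term $f_{\theta_n, n}(v)$ is already weakly-strongly continuous into $\mathcal{W}$ by Assumption \ref{ass:param_reaction}, so the task reduces to treating the product $g_{\theta_n, n}(v) \chi_n(v)$.

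First I would collect two auxiliary continuity statements: (i) $(\theta_n, v) \mapsto g_{\theta_n, n}(v)$ is weakly-strongly continuous into $\mathcal{W}$, which follows from Assumption \ref{ass:param_reaction} combined with the pointwise $1$-Lipschitz bound $|P_+(a) - P_+(b)| \leq |a - b|$, yielding $\|g_{\theta_n^k, n}(v^k) - g_{\theta_n, n}(v)\|_{\mathcal{W}} \leq 2\|f_{\theta_n^k, n}(v^k) - f_{\theta_n, n}(v)\|_{\mathcal{W}} \to 0$; and (ii) $v \mapsto \chi_n(v)$ is weakly-strongly continuous into $L^p(0, T; W^{1, \tilde{p}}(\Omega))$ by Lemma \ref{impr_ws_cont_chi}, together with the pointwise bound $\|\chi_n\|_{L^\infty} \leq 1$ inherited from the transition-function property in Definition \ref{def:transition}. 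Fixing any sequence with $\theta_n^k \to \theta_n$ in $\Theta_n^m$ and $v^k \rightharpoonup v$ in $\mathcal{V}^N$, I then split
\[
g_{\theta_n^k, n}(v^k) \chi_n(v^k) - g_{\theta_n, n}(v) \chi_n(v) = [g_{\theta_n^k, n}(v^k) - g_{\theta_n, n}(v)] \chi_n(v^k) + g_{\theta_n, n}(v) [\chi_n(v^k) - \chi_n(v)],
\]
and bound the first summand in $\mathcal{W}$-norm by $\|g_{\theta_n^k, n}(v^k) - g_{\theta_n, n}(v)\|_{\mathcal{W}}$ using $|\chi_n(v^k)| \leq 1$, which vanishes by (i).

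The second summand is the main obstacle, because $g_{\theta_n, n}(v)$ lies only in $\mathcal{W}$ and not a priori in $L^\infty$, so a direct Hölder estimate pairing it with $\chi_n(v^k) - \chi_n(v) \to 0$ in $L^p(0, T; W^{1, \tilde{p}}(\Omega))$ via Theorem \ref{th:behzadan} would require a time-integrability compatibility between $p$ and $q$ (essentially $p \geq 2q$) that is not granted by Assumption \ref{ass_init_set}. I would sidestep this via Dominated Convergence: by the Aubin--Lions compact embedding $\mathcal{V} \hookdoubleheadrightarrow L^p(0, T; L^{\hat{p}}(\Omega))$ available under Assumption \ref{ass_init_strict}, any subsequence of $(v^k)$ admits a further subsequence converging pointwise a.e. to $v$, hence $\chi_n(v^k) \to \chi_n(v)$ pointwise a.e. by continuity of $\chi$. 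The pointwise bound $|g_{\theta_n, n}(v) (\chi_n(v^k) - \chi_n(v))|^{\hat{q}} \leq 2^{\hat{q}} |g_{\theta_n, n}(v)|^{\hat{q}}$ has spatial majorant in $L^1(\Omega)$ for a.e.\ $t$, yielding by DCT in space that $\|g_{\theta_n, n}(v)(t) [\chi_n(v^k)(t) - \chi_n(v)(t)]\|_{L^{\hat{q}}(\Omega)}^q \to 0$ for a.e.\ $t$, while being dominated by $2^q \|g_{\theta_n, n}(v)(\cdot)\|_{L^{\hat{q}}(\Omega)}^q \in L^1(0, T)$; a second DCT in time upgrades this to convergence in $\mathcal{W}$ along the extracted subsequence. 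The standard subsequence principle then promotes convergence to the full sequence, completing the proof.
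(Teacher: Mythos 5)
Your proof is correct, but at the decisive point --- the product term --- it takes a genuinely different route from the paper. The paper performs essentially the same splitting, bounds the factor $\chi_n$ by $\Vert\chi^m\Vert_{\mathcal{C}(\mathbb{R})}\leq 1$, and then treats $(P_+\circ f_{\theta_n,n}-f_{\theta_n,n})(v)\,[\chi_n(v^k)-\chi_n(v)]$ quantitatively: by the uniform state-space embedding \eqref{uniform_state_embedding} and Lipschitz continuity, $t\mapsto\Vert (P_+\circ f_{\theta_n,n}-f_{\theta_n,n})(v(t))\Vert_{L^{\tilde p}(\Omega)}$ is bounded uniformly in $t$, so the continuous bilinear multiplication $W^{1,\tilde p}(\Omega)\times L^{\tilde p}(\Omega)\to L^{\hat q}(\Omega)$ from Theorem \ref{th:behzadan} (this is precisely where condition iv) of Assumption \ref{ass_init_strict} enters), combined with the convergence $\chi_n(v^k)\to\chi_n(v)$ in $L^p(0,T;W^{1,\tilde p}(\Omega))$ from Lemma \ref{impr_ws_cont_chi} and Minkowski's inequality, gives a norm estimate of the product difference; note that because the other factor is controlled uniformly in time, no compatibility of the type $p\geq 2q$ arises, so your aside about why a ``direct H\"older estimate'' fails slightly mischaracterizes the paper's actual argument. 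Your route replaces all of this by a dominated-convergence/subsequence argument that only uses pointwise a.e.\ convergence of $v^k$ (via the compact embedding of $\mathcal{V}$ into $L^p(0,T;L^{\hat p}(\Omega))$ and Aubin--Lions), the bound $0\leq\chi^m\leq 1$, and integrability of $(P_+\circ f_{\theta_n,n}-f_{\theta_n,n})(v)$; in particular you never actually use your auxiliary fact (ii), so Lemma \ref{impr_ws_cont_chi}, Theorem \ref{th:behzadan} and condition iv) of Assumption \ref{ass_init_strict} are not needed in your argument at all. What each approach buys: yours is more elementary and shows that, for this continuity statement taken in isolation, the strict space setup is essentially superfluous; the paper's argument is quantitative (it yields an explicit estimate rather than only qualitative convergence) and deliberately reuses the machinery (Lemma \ref{impr_ws_cont_chi}, Theorem \ref{th:behzadan}) that the strict setup was introduced for elsewhere in Appendix \ref{app:assum_reactions}.
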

	\begin{proof}
Define $\phi_1: A\to L^p(0,T;W^{1,\tilde{p}}(\Omega))$ and $\phi_2: A\to L^p(0,T;L^{\tilde{p}}(\Omega))$ by
\[
\phi_1(a) = \chi_n(v) ~ \text{and} ~ \phi_2(a) = (P_+\circ f_{\theta,n})(v)-f_{\theta,n}(v) ~ \text{for} ~ a = (\theta, v)\in A:=\Theta_n^m\times\mathcal{V}^N.
\]
Well-definedness of $\phi_2$ (also considered as a map into $L^p(0,T;L^{\hat{q}}(\Omega))$ together with weak-strong continuity in the latter case) follows by previous considerations right before Lemma \ref{impr_ws_cont_chi}. Well-definedness and weak-strong continuity of the maps $\phi_1$ follow by Lemma \ref{impr_ws_cont_chi}. In view of the definition of $\bar{f}_{\theta_n,n}$ in \eqref{def_fmod} the claimed statement follows by weak-strong continuity of $A\ni a\mapsto \phi_1(a)\cdot\phi_2(a)\in L^p(0,T;L^{\hat{q}}(\Omega))$ which is in fact well-defined by Proposition \ref{prop:f_mod_nemyt}. For that, let $(a_k)_k\subseteq A$ with $a_k\rightharpoonup a$ as $k\to \infty$ for some $a\in A$. Since $\tilde{p}/2\leq \hat{q}\leq \tilde{p}$ with $\frac{1}{\hat{q}}>\frac{2}{\tilde{p}}-\frac{1}{d}$ by Assumption \ref{ass_init_strict}, it holds true that the multiplication operator $\cdot: W^{1,\tilde{p}}(\Omega)\times L^{\tilde{p}}(\Omega)\to L^{\hat{q}}(\Omega)$ is a well-defined continuous bilinear form due to Theorem \ref{th:behzadan}. Thus, we can estimate with a constant $c>0$ for a.e. $t\in]0,T[$ the term $\Vert \phi_1(a_k(t))\cdot\phi_2(a_k(t))-\phi_1(a(t))\cdot\phi_2(a(t))\Vert_{L^{\hat{q}}(\Omega)}$ using that $\chi$ is a transition function with $\Vert \chi\Vert_{\mathcal{C}(\mathbb{R})}\leq 1$ by
\begin{align*}
	\Vert\phi_2(a_k(t))-\phi_2(a(t))\Vert_{L^{\hat{q}}(\Omega)}+c\Vert \phi_2(a(t))\Vert_{L^{\tilde{p}}(\Omega)}\Vert \phi_1(a_k(t))-\phi_1(a(t))\Vert_{W^{1,\tilde{p}}(\Omega)}.
\end{align*}
Since $\phi_2(a(t))= (P_+\circ f_{\theta,n}-f_{\theta,n})(v(t))$ we derive by \eqref{uniform_state_embedding} that $\vert v(t)\vert\leq c_\mathcal{V}\Vert v\Vert_{\mathcal{V}^N}$ for a.e. $t\in ]0,T[$ together with Lipschitz continuity of $P_+\circ f_{\theta,n}-f_{\theta,n}$ that the term $\Vert \phi_2(a(t))\Vert_{L^{\tilde{p}}(\Omega)}$ is bounded uniformly for a.e. $t\in ]0,T[$. With this and Minkowski's inequality we derive that there exists some $c>0$ such that
	\begin{multline*}
		\Vert \phi_1(a_k)\cdot\phi_2(a_k)-\phi_1(a)\cdot\phi_2(a)\Vert_{L^p(0,T;L^{\hat{q}}(\Omega))}\\
		\leq \Vert\phi_2(a_k)-\phi_2(a)\Vert_{L^p(0,T;L^{\hat{q}}(\Omega))}+c\Vert \phi_1(a_k)-\phi_1(a)\Vert_{L^p(0,T;W^{1,\tilde{p}}(\Omega))}
	\end{multline*}
	which converges to zero as $k\to \infty$ due to weak-strong continuity of $\phi_1,\phi_2$ in the respective spaces, finally concluding the claimed statement.
	\end{proof}
	\subsection{Regularity property}
	We address next the remaining regularity property of Assumption \ref{ass:param_reaction}.
	\begin{proposition}
		\label{prop:reg_mod_class}
		Let $\mathcal{F}_n^m\subseteq W^{1, \infty}_{loc}(\mathbb{R}^{N})$. Then $\overline{\mathcal{F}}_n^m\subseteq W^{1, \infty}_{loc}(\mathbb{R}^{N})$.
	\end{proposition}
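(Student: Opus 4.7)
The plan is to verify that each constituent of
$$\bar{f}_{\theta_n,n}(u)=((P_+\circ f_{\theta_n,n})(u)-f_{\theta_n,n}(u))\,\chi^m(u_n)+f_{\theta_n,n}(u)$$
lies in $W^{1,\infty}_{loc}(\mathbb{R}^N)$ and then to invoke stability of this space under sums and locally bounded products. By Rademacher's theorem, $W^{1,\infty}_{loc}(\mathbb{R}^N)$ coincides with the class of locally Lipschitz continuous functions, so the entire argument reduces to controlling Lipschitz constants on arbitrary compacts $K\subset\mathbb{R}^N$.

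First, since $f_{\theta_n,n}$ is locally Lipschitz by hypothesis and the positive-part function $P_+:\mathbb{R}\to\mathbb{R}$ is globally $1$-Lipschitz, the composition $P_+\circ f_{\theta_n,n}$ is locally Lipschitz, hence lies in $W^{1,\infty}_{loc}(\mathbb{R}^N)$; consequently the difference $g:=P_+\circ f_{\theta_n,n}-f_{\theta_n,n}$ belongs to $W^{1,\infty}_{loc}(\mathbb{R}^N)$ as well. Second, the transition function $\chi^m$ from Definition \ref{def:transition} is smooth on $\mathbb{R}$ with derivative compactly supported in $[\epsilon_m-\delta_m,\epsilon_m+\delta_m]$, hence globally bounded by $1$ and globally Lipschitz; therefore the map $u\mapsto\chi^m(u_n)$ inherits these properties and lies in $W^{1,\infty}(\mathbb{R}^N)\subseteq W^{1,\infty}_{loc}(\mathbb{R}^N)$.

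For the product step, given any compact $K\subset\mathbb{R}^N$, denote by $L_g^K$ and $M_g^K=\Vert g\Vert_{L^\infty(K)}$ the local Lipschitz constant and the sup-norm of $g$ on $K$, and by $L_\chi$ the global Lipschitz constant of $\chi^m$. A triangle-inequality estimate yields
$$\vert g(u)\chi^m(u_n)-g(v)\chi^m(v_n)\vert\leq (L_g^K+M_g^K L_\chi)\,\Vert u-v\Vert\quad\text{for }u,v\in K,$$
so $u\mapsto g(u)\chi^m(u_n)$ is locally Lipschitz. Adding back the locally Lipschitz function $f_{\theta_n,n}$ preserves the property, and therefore $\bar f_{\theta_n,n}\in W^{1,\infty}_{loc}(\mathbb{R}^N)$. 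There is no genuine obstacle in this proof; the argument is a routine application of the calculus of locally Lipschitz maps together with Rademacher's theorem, the only point requiring attention being the simultaneous use of local Lipschitz continuity and local boundedness of $g$ when estimating the product.
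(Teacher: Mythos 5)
Your proof is correct and follows essentially the same route as the paper: both reduce the claim to a bound of the form $\Vert \bar f_{\theta_n,n}\Vert_{W^{1,\infty}(K)}\leq c\,\Vert f_{\theta_n,n}\Vert_{W^{1,\infty}(K)}$ on compacts, with the constant involving $\Vert(\chi^m)'\Vert_{\mathcal{C}(\mathbb{R})}$ and $\Vert\chi^m\Vert_{\mathcal{C}(\mathbb{R})}\leq 1$. The only difference is cosmetic: the paper computes the a.e.\ gradient of $\bar f_{\theta_n,n}$ explicitly (a formula it reuses later in Proposition \ref{prop:relaxed_capacity}), whereas you argue via Lipschitz constants and the identification of $W^{1,\infty}_{loc}$ with locally Lipschitz functions.
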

\begin{proof}
	For $\bar{f}_{\theta_n,n}\in\overline{\mathcal{F}}_n^m$ it holds by \eqref{def_fmod} that
	\[
		\vert\bar{f}_{\theta_n,n}(u)\vert \leq \vert f_{\theta_n,n}(u)\vert
	\]
	for $u\in \mathbb{R}^{N}$ and for a.e. $u\in\mathbb{R}^{N}$ that
	\begin{equation}
		\label{eq:bar_derivative}
		\begin{aligned}
		\nabla \bar{f}_{\theta_n,n}(u) = -1_{\left\{f_{\theta_n,n}(u)<0\right\}}&\chi^m(u_n)\nabla f_{\theta_n,n}(u)\\
		&-(P_-\circ f_{\theta_n,n})(u)(\chi^m)'(u_n)e_n+\nabla f_{\theta_n,n}(u)
		\end{aligned}
	\end{equation}
	with $e_n$ the $n$-th unit vector in $\mathbb{R}^N$, which implies in particular
	\[
		\vert\nabla \bar{f}_{\theta_n,n}(u)\vert\leq 2\vert\nabla  f_{\theta_n,n}(u)\vert+\Vert (\chi^m)'\Vert_{\mathcal{C}(\mathbb{R})}\vert f_{\theta_n,n}(u)\vert.
	\]
	As a consequence, for any compact $K\subset\mathbb{R}^{N}$ there exists $c>0$ such that
	\[
		\Vert \bar{f}_{\theta_n,n}\Vert_{W^{1,\infty}(K)}\leq c\Vert f_{\theta_n,n}\Vert_{W^{1,\infty}(K)}
	\]
	proving the claimed assertion.
\end{proof}

\subsection{Approximation capacity condition}
	
	We conclude Appendix \ref{app:assum_reactions} by addressing the approximation capacity condition formulated in Assumption \ref{ass:approx_cap_cond} for the physically consistent classes $\overline{\mathcal{F}}_n^m$ of modified parameterized reaction terms $\bar{f}_{\theta_n, n}$ in \eqref{def_fmod}. For that, we need to impose higher regularity on the target function $f$ as in Assumption \ref{ass:strict_quasipositivity}. %
	\begin{proposition}
		\label{prop:relaxed_capacity}
		Let $f=(f_n)_{n=1}^N\in W_{loc}^{1,\infty}(\mathbb{R}^{N})^N$ and $U$ as in Assumption \ref{ass:measurements}, fulfill the approximation capacity condition in Assumption \ref{ass:approx_cap_cond} with rate $\beta>0$. Furthermore, assume that $f$ satisfies Assumption \ref{ass:strict_quasipositivity} with rate $\alpha>1$. Suppose that the transition functions $(\chi^m)_m$ in the physically consistent classes $\overline{\mathcal{F}}_n^m$ are given by $\chi^m =\tilde{h}_{\epsilon_m}$ for $m\in\mathbb{N}$ with $(\epsilon_m)_m = (m^{-\gamma})_m$ for some $0<\gamma<\beta$. Then $f$ satisfies the approximation capacity condition for $(\bar{f}_{\theta^{m}})_m$ with $\Vert\theta^{m}\Vert \leq \psi(m)$,
		\[
		\Vert f-\bar{f}_{\theta^{m}}\Vert_{L^\infty(U)}\leq  c m^{-\min(\alpha\gamma,\beta)}, ~ \text{and} ~ \limsup_{m\to \infty}\Vert \nabla \bar{f}_{\theta^{m}}\Vert_{L^\infty(U)}\leq \Vert \nabla f\Vert_{L^\infty(U)}.
		\]
		In particular, for $\gamma = \beta/\alpha$ one recovers the original rate of convergence for $(\bar{f}_{\theta^m})_m$.
	\end{proposition}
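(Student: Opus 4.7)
The plan is to bound both quantities using the basic identity $(P_+\circ g)-g=-P_-\circ g$, which gives
\[
\bar f_{\theta^m,n}-f_n \;=\; -(P_-\circ f_{\theta^m,n})\,\chi_n^m \;+\; (f_{\theta^m,n}-f_n),
\]
combined with the key structural fact that $\chi^m=\tilde h_{\epsilon_m}$ is supported in $(-\infty,3\epsilon_m/2]$ (since $\delta_m=\epsilon_m/2$), so that the active region of the modification lies within $\Gamma_{3\epsilon_m/2}^n$, where Assumption \ref{ass:strict_quasipositivity} delivers quantitative decay of $P_-\circ f_n$. The parameter bound $\|\theta^m\|\le\psi(m)$ is inherited directly from the hypothesis on the unmodified sequence, so only the two asymptotic estimates require work.

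For the $L^\infty$-estimate I would use the $1$-Lipschitz continuity of $P_-$ to control $|P_-\circ f_{\theta^m,n}|\le|P_-\circ f_n|+|f_{\theta^m,n}-f_n|$ on $U$, then invoke $\|P_-\circ f_n\|_{L^\infty(\Gamma_{3\epsilon_m/2}^n)}\le c(3\epsilon_m/2)^\alpha=O(m^{-\alpha\gamma})$ on the support of $\chi_n^m$, combined with $\|f_{\theta^m,n}-f_n\|_{L^\infty(U)}\le cm^{-\beta}$. A triangle-inequality bookkeeping on the displayed identity then produces the claimed rate $cm^{-\min(\alpha\gamma,\beta)}$.

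For the gradient bound I would rely on the explicit almost-everywhere expression for $\nabla\bar f_{\theta^m,n}$ recorded in \eqref{eq:bar_derivative} during the proof of Proposition \ref{prop:reg_mod_class}. The coefficient in front of $\nabla f_{\theta^m,n}$ in that formula lies pointwise in $[0,1]$, so that term contributes at most $\|\nabla f_{\theta^m,n}\|_{L^\infty(U)}$, whose $\limsup$ is already controlled by the hypothesis on the original sequence. For the remaining term $-(P_-\circ f_{\theta^m,n})(\chi^m)'(u_n)e_n$, the support of $(\chi^m)'$ lies in $\{u_n\in[\epsilon_m/2,3\epsilon_m/2]\}\subset\Gamma_{3\epsilon_m/2}^n$, so the previous estimate gives $|P_-\circ f_{\theta^m,n}|=O(m^{-\alpha\gamma})+O(m^{-\beta})$ there; multiplying by $\|(\chi^m)'\|_{\mathcal C(\mathbb R)}=O(m^\gamma)$ from Lemma \ref{lemma:chi_derivative} produces a correction of order $m^{-(\alpha-1)\gamma}+m^{-(\beta-\gamma)}$, which vanishes under the standing assumptions $\alpha>1$ and $0<\gamma<\beta$. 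Passing to $\limsup$ closes the argument.

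The main obstacle is precisely this balance in the gradient estimate: mollifying a Heaviside forces $\|(\chi^m)'\|_\infty\sim\epsilon_m^{-1}$, so any reaction model without quantitative decay of $P_-\circ f_n$ near $\{u_n=0\}$ would produce unbounded gradient contributions after modification. Assumption \ref{ass:strict_quasipositivity} with rate $\alpha>1$ is calibrated exactly so that $\epsilon_m^{\alpha}\cdot\epsilon_m^{-1}=\epsilon_m^{\alpha-1}\to 0$, while the restriction $\gamma<\beta$ ensures that the $cm^{-\beta}$ approximation error of $f_{\theta^m}$ survives the $m^\gamma$ amplification with vanishing magnitude. Together these two compensations explain both the appearance of $\min(\alpha\gamma,\beta)$ in the stated rate and the optimal choice $\gamma=\beta/\alpha$ that equalizes the two contributions and recovers the unmodified rate.
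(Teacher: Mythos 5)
Your proposal is correct and follows essentially the same route as the paper's proof: the same decomposition $\bar f_{\theta^m,n}-f_n=-(P_-\circ f_{\theta^m,n})\chi^m_n+(f_{\theta^m,n}-f_n)$ with the $1$-Lipschitz bound on $P_-$, the support observation $\supp\chi^m\cap U\subset\Gamma^n_{3\epsilon_m/2}$ combined with Assumption \ref{ass:strict_quasipositivity} and Lemma \ref{lemma:chi_derivative}, and the gradient formula \eqref{eq:bar_derivative} with the coefficient of $\nabla f_{\theta^m,n}$ bounded by one plus a vanishing correction of order $\epsilon_m^{\alpha-1}+m^{-(\beta-\gamma)}$. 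Your explicit rate bookkeeping matches the paper's use of \eqref{P-estimation} and \eqref{eq:o_estimate}, so nothing further is needed.
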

	\begin{proof}
		Due to Assumption \ref{ass:strict_quasipositivity} and Lemma \ref{lemma:chi_derivative} it holds some $c>0$ that
		\begin{align}
			\label{P-estimation}
		\Vert P_-\circ f_n\Vert_{L^\infty(\Gamma_{3\epsilon_m/2}^n)}\leq c\epsilon_m^{\alpha} \leq c\epsilon_m\leq \Vert (\chi^m)'\Vert_{\mathcal{C}(\mathbb{R})}^{-1}
		\end{align}
		for sufficiently large $m\in\mathbb{N}$ and $1\leq n\leq N$. Since $f$ fulfills the approximation capacity condition in Assumption \ref{ass:approx_cap_cond}, there exist $c,\beta >0$ and $\psi:\mathbb{N}\to \mathbb{R}$ such that there exist $\theta^m\in \Theta^m$ with $\Vert \theta^m\Vert \leq \psi(m)$, $\Vert f-f_{\theta^m}\Vert_{L^\infty(U)}\leq cm^{-\beta}$ and $\limsup_{m\to\infty}\Vert \nabla f_{\theta^m}\Vert_{L^\infty(U)}\leq \Vert \nabla f\Vert_{L^\infty(U)}$. Due to Lemma \ref{lemma:chi_derivative} we have $\Vert (\chi^m)'\Vert_{\mathcal{C}(\mathbb{R})}^{-1}\geq c m^{-\gamma}$ for large $m\in\mathbb{N}$ implying by $\beta>\gamma>0$ that
		\begin{align}
			\label{eq:o_estimate}
			\Vert f-f_{\theta^{m}}\Vert_{L^\infty(U)}=o(\Vert (\chi^m)'\Vert_{\mathcal{C}(\mathbb{R})}^{-1})
		\end{align}
		as $m\to \infty$. Uniform convergence of $(\bar f_{\theta^{m}})_m$ to $f$ on $U$ is in fact a consequence of Theorem \ref{mult_inf_est} (since $f$ is Lipschitz continuous in $U$). However, we require here also the rate of convergence. Certainly, by the representation in \eqref{def_fmod} we can estimate
		\begin{align}
			\label{first_bar_estimation}
			\Vert f- \bar f_{\theta^{m}}\Vert_{L^\infty(U)} \leq \Vert f - f_{\theta^{m}}\Vert_{L^\infty(U)} +\max_{1\leq n\leq N}\Vert (P_-\circ f_{\theta^{m},n})(u)\chi^m(u_n)\Vert_{L^\infty(U)}.
		\end{align}
		In view of the second term the triangle inequality yields
		\begin{multline*}
			\Vert (P_-\circ f_{\theta^{m},n})(u)\chi^m(u_n)\Vert_{L^\infty(U)}\leq \Vert (P_-\circ f_n)(u)\chi^m(u_n)\Vert_{L^\infty(U)}\\
			+\Vert (P_-\circ f_{\theta^{m},n}-P_-\circ f_n))(u)\chi^m(u_n)\Vert_{L^\infty(U)}
		\end{multline*}
		for $1\leq n\leq N$, which by using that $\chi^m(u_n)=0$ for $u_n\geq 3\epsilon_m/2$, $\Vert \chi^m\Vert_{\mathcal{C}(\mathbb{R})}\leq 1$ and \eqref{P-estimation} together with $\epsilon_m=m^{-\gamma}$ for $m\in\mathbb{N}$ can be further estimated by
		\[
				\Vert P_-\circ f_n\Vert_{L^\infty(\Gamma_{3\epsilon_m/2}^n)}+\Vert f_{\theta^{m},n}-f_n\Vert_{L^\infty(U)}\leq c m^{-\min(\alpha\gamma,\beta)}
		\]
		since $P_-$ is Lipschitz continuous with constant one. Note that the rate follows by \eqref{P-estimation} and the approximation capacity condition on $f$. As a consequence, we can estimate the term in \eqref{first_bar_estimation} by
		\[
			\Vert f- \bar f_{\theta^{m}}\Vert_{L^\infty(U)} \leq c m^{-\min(\alpha\gamma,\beta)}
		\]
		for $m\in\mathbb{N}$. It remains to prove that $\limsup_{m\to\infty}\Vert \nabla \bar{f}_{\theta^{m}}\Vert_{L^\infty(U)}\leq \Vert \nabla f\Vert_{L^\infty(U)}$. Due to \eqref{eq:bar_derivative} it holds true with $\mathcal{N}_{n,m}(u)=\left\{u\in\mathbb{R}^N: f_{\theta^{m}_n,n}(u)<0\right\}$ that
		\begin{align*}
			\nabla \bar{f}_{\theta^{m}_n,n}(u) = (1-1_{\mathcal{N}_{n,m}(u)})\chi^m(u_n)\nabla f_{\theta^{m}_n,n}(u)
			-(P_-\circ f_{\theta^{m}_n,n})(u) (\chi^m)'(u_n)e_n.
		\end{align*}
		The second term can be uniformly bounded using similar estimations as before by
		\begin{multline*}
			\max_{1\leq n\leq N}\Vert (P_-\circ f_{\theta^{m}_n,n})(u) (\chi^m)'(u_n)\Vert_{L^\infty(U)}\\
			\leq (\Vert P_-\circ f\Vert_{L^\infty(\Gamma_{3\epsilon_m/2}^n)}+\Vert f_{\theta^{m}}-f\Vert_{L^\infty(U)})\Vert (\chi^m)'\Vert_{\mathcal{C}(\mathbb{R})}
		\end{multline*}
		which converges to zero by \eqref{P-estimation} and \eqref{eq:o_estimate}. With this, we conclude by
		\[
			\vert (1-1_{\mathcal{N}_{n,m}(u)})\chi^m(u_n)\vert \leq 1\quad \text{that}
		\]
		\[\limsup_{m\to \infty}\Vert \nabla \bar{f}_{\theta^{m}}\Vert_{L^\infty(U)}\leq \limsup_{m\to \infty}\Vert \nabla f_{\theta^{m}}\Vert_{L^\infty(U)} \leq \Vert \nabla f\Vert_{L^\infty(U)}.\qedhere\]
	\end{proof}
	\bibliographystyle{plainurl}
	{\footnotesize
	\bibliography{references}}
\end{document}